\icmltitlerunning{Adaptive Exploration-Exploitation Tradeoff for Opportunistic Bandits}
\DeclareMathOperator*{\argmax}{arg\,max}
\newtheorem{theorem}{Theorem}
\newtheorem{lemma}{Lemma}
\newtheorem{corol}{Corollary}
\begin{document}
\twocolumn[
\icmltitle{Adaptive Exploration-Exploitation Tradeoff for Opportunistic Bandits}




\begin{icmlauthorlist}
\icmlauthor{Huasen Wu}{to}
\icmlauthor{Xueying Guo}{goo}
\icmlauthor{Xin Liu}{goo}
\end{icmlauthorlist}

\icmlaffiliation{to}{Twitter Inc., San Francisco, California, USA;}
\icmlaffiliation{goo}{University of California, Davis, California, USA. This work was partially completed while the first author was a postdoctoral researcher at University of California, Davis}

\icmlcorrespondingauthor{Xin Liu}{xinliu@ucdavis.edu}

\icmlkeywords{Multi-armed bandit, adaptive exploration-exploitation, upper-confidence-bound}

\vskip 0.3in
]



\printAffiliationsAndNotice{}  
           
\begin{abstract}
In this paper, we propose and study opportunistic bandits - a new variant of bandits where the regret of pulling a suboptimal arm varies under different environmental conditions, such as network load or produce price. When the  load/price is low, 
 so is the cost/regret of pulling a suboptimal arm (e.g., trying a suboptimal network configuration). Therefore, intuitively, we could explore more when the load/price is low and exploit more when the load/price is high. Inspired by this intuition, we propose  an Adaptive Upper-Confidence-Bound (AdaUCB) algorithm to adaptively balance the exploration-exploitation tradeoff for opportunistic bandits.  We prove that AdaUCB achieves $O(\log T)$ regret with
a {\em smaller coefficient} than the traditional UCB algorithm.
Furthermore, AdaUCB achieves $O(1)$ regret with respect to $T$ if the exploration cost is zero when the load level is below  a certain threshold. Last, based on both synthetic data and real-world traces, experimental results show that AdaUCB significantly outperforms other bandit algorithms, such as UCB and TS (Thompson Sampling), under large load/price fluctuations.
\end{abstract}

\section{Introduction} \label{sec:intro}


In existing studies of multi-armed bandits (MABs) \cite{Auer2002ML:UCB,Bubeck2012Survey}, pulling a suboptimal arm results in a constant regret. While this is a valid assumption in many existing applications,  there exists a variety of applications where the actual regret of pulling a suboptimal arm may vary depending on external conditions. Consider the following application scenarios. 

%

\textbf{Motivating scenario 1: price variation.}  MAB has been widely used in studying effective procedures and treatments \cite{Lai1987adaptive,Press2009NACt,Villar2015SS:MAB4Clinical},  including in agriculture.  In agriculture, price often varies significantly for produce and livestock. For example, the pork price varied from \$0.46/lb to \$1.28/lb, and orange \$608/ton to \$1140/ton, in 2014-2017 (Index Mundi). Commodity price forecast has achieved high accuracy and been widely used for production decisions \cite{add_ref_2}.
  In this scenario, different treatments can be considered as arms. The effectiveness of a particular treatment is captured by the value of the arm, and is independent of the market price of the product. (The latter is true because an experiment in one farm, among tens of thousands of such farms in the US, has negligible impact on the overall production and thus the commodity's market price.)  The monetary reward is proportional to price and to the effectiveness of the treatment. The goal of a producer is to minimize the overall monetary regret, compared to the oracle. Therefore, intuitively, when the product price is low, the monetary regret of pulling a suboptimal arm is low, and vice versa.

\textbf{Motivating scenario 2: load variation.} 
Network configuration is widely used  in wireless networks, data-center networks, and the Internet, in order to control network topology, routing, load balancing, and thus improve the overall performance.
For example,
in a cellular network, a cell tower has a number of parameters to configure, including radio spectrum,  transmission power, antenna angle and direction, etc. The configuration of such parameters can greatly impact the overall performance, e.g., coverage, throughput, and service quality. A network  configuration can be considered as an  arm, where its performance needs to be learned.  Networks are typically designed and configured to handle the peak load, and thus we hope to learn the best configuration for the peak load. 

Network traffic load fluctuates over time. When the network load is low, we can inject dummy  traffic 
into the network so that the total load, the real load plus the dummy load, resembles the peak load. It allows us to learn the performance of the configuration under the peak load.  At the same time, the regret of using a suboptimal configuration is low because the real load affected  is low.  Furthermore, in practice, we can set the priority of the dummy traffic to be lower than that of the real traffic. Because networks handle high priority traffic first, low priority traffic results in little or no impact on the high priority traffic \cite{add_ref_7}. In this case,  the regret on the actual load is further reduced, or even negligible (when the suboptimal configuration is sufficient to handle the real load).

\textbf{Opportunistic bandits.} Motivated by these application scenarios, we study opportunistic bandits in this paper. Specifically, we define   {\em opportunistic bandit}   as a bandit problem with the following characteristics:  1) The best arm does not change over time. 2) The exploration cost (regret) of a suboptimal arm varies depending on a time-varying external condition that we refer to as \textbf{load} (which is the price in the first scenario). 3) The  load  is revealed before an arm is pulled, so that one can decide which arm to pull depending on the load. As its name suggests, in opportunistic bandits, one can leverage the opportunities of load variation to achieve a lower regret.
 In addition to the previous two examples,
opportunistic bandit algorithms can be  applied to other scenarios that  share the above characteristics. 


We note that opportunistic bandits  significantly differs from non-stationary bandits \cite{Garivier2011nonstationaryUCB,Besbes2014NIPS}. In non-stationary bandits, the expected reward of each arm varies and the optimal arm may change over time, e.g., because of the shift of interests. In opportunistic bandits, the optimal arm does not change over time, but the regret of trying a suboptimal arm changes depending on the  load. In other words, in non-stationary bandits, the dynamics of the optimal arm  make finding the optimal arm more challenging. In contrast, in opportunistic bandits, the time-varying nature of the load provides opportunities to reduce the regret of finding the fixed optimal arm. Because of such fundamental differences,
in non-stationary bandits, one can show polynomial regret (e.g., $\Omega(T^{2/3})$ \cite{Besbes2014NIPS}) because one has to keep track of the optimal arm. In opportunistic bandits, we can show $O(\log T)$ (or even  $O(1)$ in certain special cases) regret because we can push more exploration to slots when the regret is lower.

We also note the connection and difference between opportunistic bandits and contextual bandits \cite{Zhou2015CMAB:Survey,Wu2015NIPS:CCB,Li2010WWW:LinUCB,Chu2011AISTAT}. Broadly speaking, opportunistic bandits can be considered as a special case of contextual bandits where we can consider the load as the context.  However, general contextual bandits do not take advantages of the unique properties of opportunistic bandits, in particular, the optimal bandit remains the same, and regrets differ under different contexts (i.e., load). To follow this line, the performance of contextual bandits has been compared in Appendix \ref{app:linucb}. 

\textbf{Contributions.} In this paper, we propose an Adaptive Upper-Confidence-Bound (AdaUCB) algorithm to dynamically balance the exploration-exploitation tradeoff in opportunistic bandits. The intuition is clear: we should explore more when the load is low and exploit more when the load is high. The design challenge is to quantify the right amount of exploration and exploitation depending on the load. The analysis challenge is due to the inherent coupling over time and thus over bandits under different  conditions. 
In particular, due to the randomness nature of bandits, the empirical estimates  of the expected rewards could deviate from the true values, which could lead to suboptimal actions  when the load is high.
We address these challenges by studying the lower bounds on the number of pulls of the suboptimal arms under low load. 
Because the exploration factor is smaller under high load than that under low load, it requires less information accuracy to make the optimal decision  under high load. Thus, with an appropriate lower bound on  the number of pulls of the suboptimal arms under low load, we can show that
  the information obtained from the exploration under the low load is sufficient for accurate decisions under the high load. As a result,   the exploration under high load is reduced and thus so does  the overall regret.
%

 To the best of our knowledge, this is \textbf{the first work proposing and studying opportunistic bandits} that aims to adaptively balance the exploration-exploitation tradeoff considering load-dependent regrets.
We propose AdaUCB, an algorithm that adjusts the exploration-exploitation tradeoff according to the load level. 
 We prove that AdaUCB achieves $O(\log T)$ regret with a smaller coefficient than the traditional UCB algorithm. Furthermore, AdaUCB achieves $O(1)$ regret with respect to $T$ in the case where the exploration cost is zero when the load level is smaller than a certain threshold. 
Using both synthetic and real-world traces, we show that AdaUCB significantly outperforms other bandit algorithms, such as UCB and TS (Thompson Sampling), under large load fluctuations.


\section{System Model}
\label{sec:model}


We study an opportunistic bandit problem, where the exploration cost varies over time depending on an external condition, called \textbf{load} here. Specifically, consider a $K$-armed stochastic bandit system.
At time $t$, each arm has a random {\it nominal  reward} $X_{k,t}$, where $X_{k,t} \in [0,1]$ are independent across arms, and i.i.d.~over time, with mean value $\mathbb{E}[X_{k,t}] = u_k$. Let $u^* = \max_k u_k$ be the maximum expected reward and $k^* = \argmax u_k$ be the best arm. The arm with the best nominal reward does not depend on the load and does not change over time. 

Let  $L_t \geq 0$  be the load at time $t$. For simplicity, we assume $L_t \in [0,1]$.  The agent observes the value of  $L_t $ before making the decision; i.e.,  the agent pulls an arm $a_t$ based on both $L_t$ and the historical observations, i.e.,
$a_t = \Gamma(L_t, \mathcal{H}_{t-1})$,
where $\mathcal{H}_{t-1} = (L_1, a_1, X_{a_1, 1},  \ldots,  L_{t-1}, a_{t-1}, X_{a_{t-1}, {t-1}})$ represents the historical observations. The agent then receives an {\it actual reward} $L_t X_{a_t, t}$. While the underlying nominal reward $X_{a_t,t}$ is independent of $L_t$ conditioned on $a_t$, the actual reward depends on $L_t$. We also assume that the agent can observe the value of $X_{a_t,t}$  after pulling arm $a_t$ at time $t$.

 This model captures the essence of opportunistic bandits and its assumptions are reasonable.
 For example, 
in the agriculture scenario,  $X_{a_t, t}$ captures the effectiveness of a treatment, e.g., the survival rate or  the  yield of an antibiotic treatment. The value of $X_{a_t, t}$   can always be observed by the agent after applying treatment $a_t$ at time $t$. Conditioned on $a_t$,  $X_{a_t, t}$ is also  independent of $L_t$,  the price of the commodity.  Meanwhile, the actual reward, i.e., the  monetary reward,  is modulated by $L_t$ (the price)  as $L_t X_{a_t, t}$. 
 In the network configuration example, $X_{a_t, t}$  captures the impact of a configuration at the peak load, e.g.,  success rate, throughput, or service quality score.  Because the total load (the real load plus the dummy load) resembles the peak load, $X_{a_t,t}$ is independent of the real load $L_t$ conditioned on $a_t$, and can always be observed. Further, because the real load is a portion of the total load and the network can identify real traffic from dummy traffic, the actual reward is thus a portion of the total reward, modulated by the real load as  $L_t X_{a_t, t}$.

If system statistics are known {\it a priori}, then the agent will always pull the best arm and obtain the expected total reward $u^* \mathbb{E}[\sum_{t=1}^T L_t]$. Thus, the regret of a policy $\Gamma$ is defined as
\begin{equation}
R_{\Gamma}(T) = u^* \mathbb{E}\big[\sum_{t=1}^T L_t\big] - \sum_{t=1}^T \mathbb{E}[L_t X_{a_t,t}].
\label{eq:regret}
\end{equation}
In particular, when $L_t$ is i.i.d.~over time with mean value $\mathbb{E}[L_t] = \bar{L}$, the total expected reward for the oracle solution is $u^* \bar{L}T$ and the regret is  $R_{\Gamma}(T) = u^* \bar{L}T - \sum_{t=1}^T \mathbb{E}[L_t X_{a_t,t}]$.
Because the action $a_t$ can depend on $L_t$,  it is likely  that $\mathbb{E}[L_t X_{a_t,t}]\neq \bar{L}\mathbb{E}[X_{a_t,t}]$.



\section{Adaptive UCB} \label{sec:algorithm}


We first recall a general version of the classic UCB1 \cite{Auer2002ML:UCB} algorithm, referred to as UCB$(\alpha)$, which always selects the arm with the largest index defined in the following format:
\begin{equation}
\hat{u}_k(t) = \bar{u}_k(t) + \sqrt{\frac{\alpha \log t}{C_k(t-1)}}, ~1 \leq k \leq K, \nonumber \\
\end{equation}
where $\alpha$ is a constant, $C_k(t-1)$ is the number of pulls for arm-$k$ before $t$, and $\bar{u}_k(t) = \frac{1}{C_k(t-1)} \sum_{\tau=1}^{t-1}\mathds{1}(a_{\tau} = k) X_{k,\tau}$. It has been shown that UCB$(\alpha)$ achieves logarithmic regret in stochastic bandits when $\alpha > 1/2$ \cite{Bubeck2010PhD:bandits}. UCB1 in \cite{Auer2002ML:UCB} is a special case with $\alpha = 2$.

\begin{algorithm}[H]
    \caption{AdaUCB}
      \label{alg:adaUCB}
      \renewcommand{\algorithmicrequire}{\textbf{Input:}}
      \renewcommand{\algorithmicensure}{\textbf{Output:}}
      \renewcommand\algorithmiccomment[1]{%
      {//{\it ~{#1}}}%
      }
   \begin{algorithmic}[1]
        \STATE{\bfseries Init:} {$\alpha > 0.5$, $C_k(t) = 0$, $\bar{u}_{k}(t) = 1$.}
        \FOR{$t = 1$ {\bfseries to} $K$}
        \STATE{Pull each arm once and update $C_k(t)$ and $\bar{u}_k(t)$ accordingly;}
        \ENDFOR
        \FOR{$t = K+1$ {\bfseries to} $T$}
        \STATE{Observe $L_t$;}
        \STATE{Calculate UCB: for $k = 1,2,\ldots, K$,
        \begin{equation}
           \hat{u}_k(t) =
           \bar{u}_k(t) + \sqrt{\frac{\alpha(1 - \tilde{L}_t)\log t}{C_k(t-1)}},
           \end{equation}
          where $\tilde{L}_t$ is the normalized load defined in Eq.~\eqref{eq:norm_Lt};}
          \STATE{Pull the arm with the largest $\hat{u}_k(t)$:
        \begin{equation}
            a_t = \argmax_{1 \leq k \leq K}\hat{u}_k(t);
        \end{equation}
        }
        \STATE{Update $\bar{u}_k(t)$ and $C_k(t)$;}
       \ENDFOR
     \end{algorithmic}
 \end{algorithm} 

In this work, we propose an AdaUCB algorithm for opportunistic bandits.
In order to capture different ranges of $L_t$, we first normalize $L_t$ to be within $[0,1]$:
\begin{equation}\label{eq:norm_Lt}
\tilde{L}_t = \frac{[L_t]_{l^{(-)}}^{l^{(+)}} - l^{(-)}}{l^{(+)} - l^{(-)}},
\end{equation}
where $l^{(-)}$ and $l^{(+)}$ are the lower and upper thresholds for truncating the load level, and $[L_t]_{l^{(-)}}^{l^{(+)}} = \max\{l^{(-)}, \min(L_t, l^{(+)})\}$. 
Load normalization reduces the impact of different load distributions. It also restricts the coefficient of the exploration term in the UCB indices, which avoids under or over explorations.
To achieve good performance, the truncation thresholds should be appropriately chosen and can be learned online in practice, as discussed in Sec.~\ref{subsec:continuous_load}. We note that $\tilde{L}_t $ is only used in AdaUCB algorithm. The rewards and regrets are based on $L_t$, not $\tilde{L}_t $.

The AdaUCB algorithm adjusts the tradeoff between exploration and exploitation based on the load level $L_t$. Specifically, as shown in Algorithm~\ref{alg:adaUCB}, AdaUCB makes decisions based on the sum of the empirical reward (the exploitation term) $\bar{u}_k(t)$ and the confidence interval width (the exploration term). The latter term is  proportional to $\sqrt{1 - \tilde{L}_t}$. In other words, AdaUCB uses an exploration factor $\alpha(1-\tilde{L}_t)$ that is linearly decreasing in $\tilde{L}_t$. Thus, when the load level is high, the exploration term is relatively small and AdaUCB tends to emphasize exploitation, i.e., choosing the arms that perform well in the past.
In contrast, when the load level is low, AdaUCB uses a larger exploration term and  gives more opportunities to the arms with less explorations. Intuitively, with this load-awareness, AdaUCB explores more when the load is low and leverages the learned statistics to make better decisions when the load is high. Since the actual regret is scaled with the load level, AdaUCB can achieve an overall lower regret.
Note that we have experimented a variety of load adaptation functions. The current  one achieves superior empirical performance and is amenable to analyze, and thus adopted here. 

\section{Regret Analysis} \label{sec:regret_analysis}

Although the intuition behind AdaUCB is natural, the rigorous analysis of its regret is challenging. To analyze the decision in each slot, we require the statistics for the number of pulls of each arm. Unlike traditional regret analysis, we  care about not only the upper bound, but also the lower bound for calculating the confidence level. However, even for fixed load levels, it is difficult to characterize the total number of pulls for suboptimal arms, i.e., obtaining tight lower and upper bounds for the regret. The gap between the lower and upper bounds makes it more difficult to evaluate the properties  of UCB for general random load levels.
To make the intuition more clear and analyses more readable, we start with the case of squared periodic wave load and Dirac rewards to illustrate the behavior of AdaUCB in Sec.~\ref{subsec:Dirac}. Then, we extend the results to the case with random binary-value load and random rewards in Sec.~\ref{subsec:random}, and finally analyze the case with continuous load in Sec.~\ref{subsec:continuous_load}.

Specifically, we first consider the case with binary-valued load, i.e., $L_t \in \{\epsilon_0, 1-\epsilon_1\}$,
where $\epsilon_0, \epsilon_1 \in [0, 0.5)$. For this case, we let $l^{(-)} = \epsilon_0$ and $l^{(+)} = 1$. Then, $\tilde{L}_t = 0$ if $L_t = \epsilon_0$, and $\tilde{L}_t = \frac{1-\epsilon_0 - \epsilon_1}{1-\epsilon_0} = 1 - \frac{\epsilon_1}{1-\epsilon_0}$ if $L_t = 1 - \epsilon_1$. Therefore, the indices used by AdaUCB are given as follows:
\begin{equation}
\hspace{-0.1cm}\hat{u}_k(t) =
\begin{cases}
\bar{u}_k(t) + \sqrt{\frac{\alpha \log t}{C_k(t-1)}}, &\text{if $L_t = \epsilon_0$},\\
\bar{u}_k(t) + \sqrt{\frac{\alpha\epsilon_1 \log t}{(1-\epsilon_0)C_k(t-1)}}, &\text{if $L_t = 1-\epsilon_1$}.
\end{cases}
\end{equation}

We investigate the regret of AdaUCB under the binary-valued load described above in Sec.~\ref{subsec:Dirac} and Sec.~\ref{subsec:random}, and then study its performance under  continuous load in Sec.~\ref{subsec:continuous_load} with the insights obtained from the binary-valued load case.


\subsection{AdaUCB under Periodic Square Wave Load and Dirac Rewards} \label{subsec:Dirac}

We first study a simple case with periodic square wave load and Dirac rewards. In this scenario,  the evolution of the system under AdaUCB is deterministic.
The analysis of this deterministic system allows us to better understand AdaUCB and  quantify the benefit of  load-awareness. In addition, we focus on 2-armed bandits in analysis for easy illustration in this section. 

Specifically, we assume the load is
$L_t = \epsilon_0$ if $t$ is even, and $1 - \epsilon_1$ if $t$ is odd.
Moreover, the rewards are fixed, i.e.,  $X_{k,t} = u_k$ for all $k$ and $t$,  but unknown {\it a priori}. Without loss of generality, we assume arm-1 has higher reward, i.e., $1 \geq u_1 > u_2 \geq 0$, and let $\Delta = u_1 - u_2$ be the reward difference.

Under these settings,  we can obtain the bounds for the number of pulls for each arm by borrowing the idea from \cite{Salomon2011TR,Salomon2013JMLR}. The proofs of these results are included in Appendix~\ref{app:analysis_dirac}, which are  similar to \cite{Salomon2011TR,Salomon2013JMLR}, except for the effort of addressing the case of $L_t = 1-\epsilon_1$.

We first characterize the upper and lower bounds on the total number of pulls for the suboptimal arm.
\begin{lemma} \label{thm:adaUCB_dirac_bounds}
In the opportunistic bandit with periodic square wave load and Dirac rewards, the number of pulls for arm-2 under AdaUCB is bounded  as follows:\\
1) Upper bound for any $t \geq 1$:
$
C_2(t) \leq \frac{\alpha log t}{\Delta^2} + 1
$;\\
2) Lower bound for any $t = 2\tau \geq 2$: \\
$
C_2(2\tau) \geq f(\tau) = \int_2^\tau \min(h'(s),1){\rm d}s - h(2)
$,
where $h(s) = \frac{\alpha \log s}{\Delta^2}\big(1+\sqrt{\frac{2\alpha\log s}{(2s-1)\Delta^2}}\big)^{-2}$.
\end{lemma}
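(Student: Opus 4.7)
The proof naturally splits into an upper-bound half and a lower-bound half, and can follow the style of Salomon--Audibert for UCB with deterministic rewards, with the added twist of two different exploration coefficients. Because rewards are Dirac, after each arm has been pulled once we have $\bar{u}_k(t)=u_k$, so the AdaUCB index of arm $k$ takes the simple form $u_k+\beta_t\sqrt{\log t/C_k(t-1)}$, where $\beta_t^2=\alpha$ at even $t$ (load $\epsilon_0$) and $\beta_t^2=\alpha\epsilon_1/(1-\epsilon_0)\le \alpha$ at odd $t$ (load $1-\epsilon_1$). This lets us treat both parities uniformly for the upper bound, while for the lower bound we use the larger coefficient $\beta_{2s}^2=\alpha$ available at even slots.

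\textbf{Upper bound.} Fix $t$ and let $t'\le t$ be the last slot at which arm $2$ is pulled. At that slot the index of arm $2$ must be at least the index of arm $1$; dropping the nonnegative exploration term of arm $1$ and using $\beta_{t'}^2\le \alpha$ gives $u_2+\sqrt{\alpha\log t'/C_2(t'-1)}\ge u_1$, hence $C_2(t'-1)\le \alpha\log t'/\Delta^2$ and therefore $C_2(t)=C_2(t')\le \alpha\log t/\Delta^2+1$.

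\textbf{Lower bound.} The heart of the argument is a \emph{trigger lemma}: at every even slot $t=2s$, if $C_2(2s-1)\le h(s)$ then arm $2$ is pulled at $2s$. The pull condition reads $\sqrt{\alpha\log(2s)/C_2(2s-1)}-\sqrt{\alpha\log(2s)/C_1(2s-1)}\ge \Delta$. Assuming $C_2(2s-1)\le h(s)$ and noting $h(s)\le \alpha\log s/\Delta^2\ll s$ for moderate $s$, we have $C_1(2s-1)\ge 2s-1-h(s)\ge (2s-1)/2$, so the subtracted term is at most $\sqrt{2\alpha\log(2s)/(2s-1)}$. Plugging the definition of $h$ into the leading term yields
\begin{equation*}
\sqrt{\alpha\log(2s)/h(s)}=\sqrt{\log(2s)/\log s}\,\bigl(\Delta+\sqrt{2\alpha\log s/(2s-1)}\bigr),
\end{equation*}
and a short algebraic check, using $\sqrt{\log(2s)/\log s}\ge 1$ and factoring, shows this expression is at least $\Delta+\sqrt{2\alpha\log(2s)/(2s-1)}$, which establishes the trigger.

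\textbf{From the trigger to the integral bound.} Writing $G(\tau)=\int_2^\tau \min(h'(s),1)\,ds$, I would argue by contradiction: suppose $s^\ast$ is the smallest index $\ge 2$ with $C_2(2s^\ast)<G(s^\ast)-h(2)$. Either $C_2(2s^\ast-1)<h(s^\ast)$, in which case the trigger lemma forces $C_2(2s^\ast)\ge C_2(2(s^\ast-1))+1\ge G(s^\ast-1)-h(2)+1\ge G(s^\ast)-h(2)$, using $G(s^\ast)-G(s^\ast-1)\le 1$; or $C_2(2s^\ast-1)\ge h(s^\ast)\ge G(s^\ast)\ge G(s^\ast)-h(2)$. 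Either case contradicts minimality, so $C_2(2\tau)\ge f(\tau)$ for all $\tau\ge 2$. The main obstacle I expect is calibrating $h(s)$ in the trigger lemma: it must be small enough that the index gap really exceeds $\Delta$ despite the feedback through $C_1(2s-1)$, yet large enough that the induced integral matches the target $f(\tau)$; the mismatch between $\log s$ and $\log(2s)$ is precisely what forces the slightly subtle form of $h(s)$ stated in the lemma.
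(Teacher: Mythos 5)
Your proof is correct and takes essentially the same route as the paper: the upper bound via the index comparison at the last pull of arm 2, and the lower bound via the same function $h(s)$ — your trigger lemma is exactly the contrapositive of the paper's step that a pull of arm 1 at an even slot forces $C_2(2\tau-1)\ge h(\tau)$ — combined with the same one-increment-per-even-slot argument using $\min(h'(s),1)$. Even the mild caveat that $C_2(2s-1)\le(2s-1)/2$ only "for moderate $s$" mirrors the paper's own "for a sufficiently large $\tau$" step, so nothing in your argument departs from the paper's proof in substance.
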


Note that $C_2(2\tau)$ provides the information for making decision in slot $2\tau + 1$, when $L_{t} = 1-\epsilon_1$.  With the lower bound in Lemma~\ref{thm:adaUCB_dirac_bounds}, we can show that after a certain time, AdaUCB will always pull the better arm when $L_t = 1- \epsilon_1$ with the information provided by $C_2(2\tau)$. Combining with the upper bound on $C_2(t)$, we can obtain the regret bound for AdaUCB:
\begin{theorem}\label{thm:adaUCB_dirac_regret}
In the opportunistic bandit with periodic square wave load and Dirac rewards, the regret of AdaUCB is bounded as:
$
R_{\text{AdaUCB}}(T) \leq \frac{\epsilon_0 \alpha\log T}{\Delta} + O(1)
$.
\end{theorem}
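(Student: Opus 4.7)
The plan is to split the regret by load level and apply the two bounds of Lemma~\ref{thm:adaUCB_dirac_bounds} in complementary ways. Under Dirac rewards each pull of arm~2 at slot $t$ costs exactly $\Delta L_t$, so letting $C_2^{\rm e}(T)$ and $C_2^{\rm o}(T)$ denote the number of arm-2 pulls at even and odd slots up to $T$ we have
\[
R_{\text{AdaUCB}}(T)=\Delta\bigl[\epsilon_0\, C_2^{\rm e}(T)+(1-\epsilon_1)\,C_2^{\rm o}(T)\bigr].
\]
For the even (low-load) piece, $C_2^{\rm e}(T)\le C_2(T)$, so the upper bound of Lemma~\ref{thm:adaUCB_dirac_bounds} directly yields the advertised main term $\epsilon_0\alpha(\log T)/\Delta + O(1)$.

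The substantive step is to prove $C_2^{\rm o}(T)=O(1)$. At an odd slot $t=2\tau+1$ the normalized load is $\tilde L_t = 1-\epsilon_1/(1-\epsilon_0)$, so the exploration factor used by AdaUCB is scaled by $\eta:=\epsilon_1/(1-\epsilon_0)$. Since rewards are Dirac, $\bar u_k(t)=u_k$ exactly, hence a sufficient condition for AdaUCB to strictly prefer arm~1 over arm~2 is
\[
C_2(t-1) \;>\; \frac{\eta\,\alpha\log t}{\Delta^2}.
\]
I would then apply the lower bound $C_2(2\tau)\ge f(\tau)$ of Lemma~\ref{thm:adaUCB_dirac_bounds} at $t-1=2\tau$ and compare it with this threshold. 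The bracketed correction in $h(s)=\frac{\alpha\log s}{\Delta^2}\bigl(1+\sqrt{2\alpha\log s/((2s-1)\Delta^2)}\bigr)^{-2}$ tends to $1$, giving $h'(s)\sim\alpha/(s\Delta^2)$ and therefore $f(\tau)=(1+o(1))\,\alpha(\log\tau)/\Delta^2$. Because $\epsilon_0,\epsilon_1\in[0,1/2)$ yields $\eta<1$, there is a finite $\tau_0=\tau_0(\alpha,\epsilon_0,\epsilon_1,\Delta)$ beyond which $f(\tau)$ strictly exceeds $\eta\alpha\log(2\tau+1)/\Delta^2$. Past this cutoff arm~1 is played at every odd slot, so $C_2^{\rm o}(T)$ is bounded by a constant independent of $T$; combining with the low-load estimate completes the proof.

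The main obstacle is the asymptotic comparison in the last step: the integral form $f(\tau)=\int_2^\tau\min(h'(s),1)\,ds-h(2)$ requires splitting into the initial regime where $h'(s)\ge 1$ (and the $\min$ saturates) and the logarithmic tail, and then absorbing the cutoff contributions and $-h(2)$ into the $O(1)$ term. Once this bookkeeping is done, the strict inequality $\eta<1$ — a direct consequence of $\epsilon_0+\epsilon_1<1$ — supplies the multiplicative slack needed to dominate the threshold despite the lower-order corrections hidden inside $f$.
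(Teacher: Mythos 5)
Your proposal is correct and follows essentially the same route as the paper: the low-load (even-slot) regret is charged to the upper bound of Lemma~\ref{thm:adaUCB_dirac_bounds}, while the high-load (odd-slot) pulls of arm~2 are shown to stop after a finite time by comparing the lower bound $f(\tau)$ with the threshold $\epsilon_1\alpha\log(2\tau+1)/\bigl((1-\epsilon_0)\Delta^2\bigr)$, exactly the comparison the paper isolates as Lemma~\ref{thm:adaUCB_dirac_exploit}. The bookkeeping you flag (saturation region of $\min(h',1)$ and the $-h(2)$ term) is indeed only an $O(1)$ correction, so the argument goes through as stated.
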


\textit{Remark 1:} According to \cite{Salomon2011TR}, the regret of UCB($\alpha$) is lower bounded by $\frac{\alpha \log T}{\Delta}$ for fixed load $L_t = 1$. Without load-awareness, we can expect that the explorations occur roughly uniformly under different load levels. Thus, the regret of UCB($\alpha$) in this opportunistic bandit is roughly  $\frac{\alpha(1+\epsilon_0 - \epsilon_1) \log T}{2\Delta}$, and is much larger than the regret of AdaUCB for small $\epsilon_0$ and $\epsilon_1$. As an extreme case, when $\epsilon_0 = 0$, the regret of AdaUCB is $O(1)$, while that of  UCB$(\alpha)$ is $O(\log T)$.

\textit{Remark 2:} The above analysis provides us insights about the benefit of load-awareness in opportunistic bandits. With load-awareness, AdaUCB forces exploration to the slots with lower load and the information obtained there is sufficient to make good decisions in higher-load slots. Thus, the overall regret of AdaUCB is much smaller than traditional load-agnostic algorithms.

\subsection{AdaUCB under Random Binary-Valued Load and Random Rewards} \label{subsec:random}
We now consider the more general case with random binary-valued load and random rewards.  We assume that  load $L_t \in \{\epsilon_0, 1 - \epsilon_1\}$ and $\mathbb{P}\{L_t = \epsilon_0\} = \rho \in (0, 1)$.
We consider i.i.d~random reward $X_{k, t} \in [0, 1]$ and $\mathbb{E}[X_{k,t} ] = u_k$, where $1 \geq u_1 > u_2 \geq u_3 \geq ... \geq u_K \geq 0$. Let $\Delta_k = u_1 - u_k$, and $\Delta^* = \min_{k > 1} \Delta_k = \Delta_2$ be the minimum gap between the suboptimal arms and the optimal arm. 

Compared with the deterministic case in Sec.~\ref{subsec:Dirac}, the analysis under random load and rewards is much more challenging. In particular, due to the reward randomness, the empirical value $\bar{u}_k(t)$ will deviate from its true value $u_k$. Unlike Dirac reward, this deviation could result in suboptimal decisions even when $\epsilon_0$ and $\epsilon_1$ are small. Thus, we need to carefully lower bound the number of pulls for each arm so that the deviation is bounded with high probability. We only provide sketches for the proofs here due to the space limit and refer readers to Appendix~\ref{app:analysis_randomload} for more detailed analyses.

We consider a larger $\alpha$ ($\alpha > 2$ in general, or  larger when explicitly stated) for theoretical analysis purpose, similarly to earlier UCB papers such as \cite{Auer2002ML:UCB}. As we will see in the simulations, AdaUCB with $\alpha > 1/2$ works well under general random load.

We first propose a loose but useful bound for the number of pulls for the optimal arm. Let $C_k^{(0)}(t)$ be the number of slots where arm-$k$ is pulled when $L_t = \epsilon_0$, i.e., $C_k^{(0)}(t) = \sum_{\tau = 1}^t\mathds{1}(L_{\tau} = \epsilon_0,a_{\tau} = k)$.
 \begin{lemma} \label{thm:lower_bound_opt_arm_randomload}
In the opportunistic bandit with random binary-valued load and random rewards,  for a constant  $\eta \in (0, \rho)$, there exists a constant $T_2$, such that under AdaUCB, for all $t \geq T_2$
\begin{eqnarray}
&& \mathbb{P}\big\{C_1^{(0)}(t) < \frac{(\rho-\eta) t}{2}\big\}  \nonumber  \\
&\leq& e^{-2\eta^2 t} + \frac{[2(K-1)]^{2\alpha - 1}}{2\alpha - 2} \big[(\rho - \eta)t \big]^{-2\alpha + 2}.\nonumber
\end{eqnarray}
\end{lemma}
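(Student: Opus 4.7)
My plan is to decompose the event $\{C_1^{(0))}(t) < (\rho-\eta)t/2\}$ into two independent sources of randomness corresponding exactly to the two terms on the right-hand side: (a) the load sequence fails to produce enough low-load slots, and (b) conditional on having enough low-load slots, suboptimal arms consume too many of them.

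First I would bound the load count. Let $N^{(0)}(t) := \sum_{\tau=1}^{t} \mathds{1}\{L_\tau = \epsilon_0\}$. Since these indicators are i.i.d.\ Bernoulli($\rho$), Hoeffding's inequality immediately gives $\mathbb{P}\{N^{(0)}(t) < (\rho-\eta)t\} \leq e^{-2\eta^2 t}$, which is the first term. On the complementary event, the identity $C_1^{(0)}(t) = N^{(0)}(t) - \sum_{k\geq 2} C_k^{(0)}(t)$ shows that $C_1^{(0)}(t) < (\rho-\eta)t/2$ forces $\sum_{k\geq 2} C_k^{(0)}(t) > (\rho-\eta)t/2$, and a pigeonhole step over the $K-1$ suboptimal arms reduces the problem to controlling $\mathbb{P}\{C_k^{(0)}(t) \geq m\}$ for each fixed $k\geq 2$, where $m := (\rho-\eta)t/[2(K-1)]$.

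Next I would apply a standard UCB concentration argument adapted to low-load slots. In such slots $\tilde{L}_\tau = 0$ and AdaUCB's index coincides with the classical UCB($\alpha$) index, so following the Auer--Cesa-Bianchi--Fischer template I would expand $\mathbb{P}\{C_k^{(0)}(t) \geq m\} \leq \sum_{\tau=m}^{t} \mathbb{P}\{a_\tau = k,\, L_\tau = \epsilon_0,\, C_k(\tau-1) \geq m-1\}$ by partitioning on the slot of the $m$-th low-load pull. Choosing $T_2$ large enough that $m - 1 \geq 4\alpha \log t /(\Delta^*)^2$ for every $t \geq T_2$ rules out the ``confidence intervals still overlap the truth'' case, leaving only two bad events per slot: the empirical mean of arm-$k$ overshoots $u_k$ by more than its confidence radius, or that of arm-$1$ undershoots $u_1$ by more than its radius. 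Each has per-sample-size probability at most $\tau^{-2\alpha}$ by Chernoff--Hoeffding, and because the two failures depend on disjoint sample-size indices I can union-bound them separately over $C_k(\tau-1) \in \{m-1,\dots,\tau\}$ and $C_1(\tau-1) \in \{1,\dots,\tau\}$, yielding a per-slot bound of $2\tau^{-2\alpha+1}$. Summing $\tau=m,\dots,t$ gives $\mathbb{P}\{C_k^{(0)}(t) \geq m\} \leq \frac{2 m^{-2\alpha+2}}{2\alpha-2}$, and a final union bound over $k=2,\ldots,K$ followed by substituting $m = (\rho-\eta)t/[2(K-1)]$ collapses the prefactors to exactly $\frac{[2(K-1)]^{2\alpha-1}}{2\alpha-2}[(\rho-\eta)t]^{-2\alpha+2}$, matching the stated second term.

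The main obstacle will be the accounting in the per-arm tail: one must separate the two confidence failures so that the union bound over sample sizes introduces only one factor of $\tau$ rather than two; otherwise the exponent $-2\alpha+2$ degrades by one and the algebra no longer reproduces the clean constant $[2(K-1)]^{2\alpha-1}$. A secondary subtlety is making $T_2$ explicit: since $m$ grows linearly in $t$ while the UCB requirement $4\alpha \log t/(\Delta^*)^2$ grows only logarithmically, such a $T_2$ certainly exists, but pinning it down cleanly uses the minimum gap $\Delta^*$ together with $\eta < \rho$. The hypothesis $\alpha > 2$ (inherited from the ambient analysis) is more than enough to make the $\tau$-sum converge at the claimed polynomial rate.
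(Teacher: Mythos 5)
Your proposal is correct and follows essentially the same route as the paper's proof: Hoeffding on the number of low-load slots for the $e^{-2\eta^2 t}$ term, a pigeonhole step over the $K-1$ suboptimal arms, and then the standard UCB($\alpha$) two-event Chernoff--Hoeffding argument (valid because AdaUCB's index reduces to UCB($\alpha$) when $\tilde{L}_t=0$), with $T_2$ chosen so the linearly growing pull count dominates $4\alpha\log t/(\Delta^*)^2$. The only cosmetic difference is that you localize the bad decision at the $m$-th low-load pull of each suboptimal arm, while the paper localizes it at the last low-load pull of the most-pulled suboptimal arm; both give the per-slot bound $2\tau^{-2\alpha+1}$ and the same constant $\frac{[2(K-1)]^{2\alpha-1}}{2\alpha-2}[(\rho-\eta)t]^{-2\alpha+2}$.
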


\textit {Sketch of Proof:} 
The key intuition of proof is that when $C_1^{(0)}(t)$ is too small, the optimal arm will be pulled with high probability. Specifically, let $k' > 1$ be the index of arm that has been pulled for the most time among the suboptimal arms before $t$, and  $t' < t$ be the last slot when $k'$ is pulled under load $L_t = \epsilon_0$ for the last time. If $C_1^{(0)}(t) <  \frac{(\rho-\eta)  t}{2}$, then $C_{k'}(t'-1) \geq C_{k'}^{(0)}(t'-1) = \Theta(t)$ with high probability. Using the fact that $\frac{\log t}{t} \to 0$ as $t \to \infty$, we know there exists a constant $T_2$ such that for $t \geq T_2$, the confidence width $\sqrt{\frac{\alpha\log t'}{C_{k'}(t'-1)}}$ will be sufficiently small compared with the minimum gap $\Delta^* \leq \Delta_k$. Moreover, the algorithm will pull the best arm when the UCB deviation is sufficiently small. Then, we can bound the probability of the event $C_1^{(0)}(t) < \frac{(\rho-\eta) t}{2}$ by bounding the deviation of UCBs. 

Next we bound the total number of pulls of the suboptimal arm as follows. 
\begin{lemma} \label{thm:random_reward_general_upperbound_randomload}
In the opportunistic bandit with random binary-valued load and random rewards, under AdaUCB, we have
\begin{equation}
\mathbb{E}[C_k(T)] \leq \frac{4\alpha\log T}{\Delta_k^2} + O(1), 1 < k \leq K.
\end{equation}
\end{lemma}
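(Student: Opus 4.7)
The plan is to adapt the classical UCB upper-bound argument while handling AdaUCB's load-dependent confidence radius. Setting the threshold $\ell = \lceil 4\alpha \log T / \Delta_k^2\rceil$, I write
\[
C_k(T) \leq \ell + \sum_{t=K+1}^T \mathds{1}\{a_t = k,\, C_k(t-1) \geq \ell\}.
\]
The key observation is that $1-\tilde L_t \leq 1$ uniformly, so whenever $C_k(t-1) \geq \ell$ the radius satisfies $r_k(t) = \sqrt{\alpha(1-\tilde L_t)\log t/C_k(t-1)} \leq \sqrt{\alpha \log T/\ell} \leq \Delta_k/2$, regardless of the load at time $t$. A direct contradiction argument then shows that if $a_t = k$ and $C_k(t-1) \geq \ell$, at least one of two ``bad'' deviations must occur: (A) $\bar u_1(t) \leq u_1 - \Delta_k/4$, or (B) $\bar u_k(t) \geq u_k + \Delta_k/4$. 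Indeed, if neither occurs and $r_k(t) \leq \Delta_k/2$, then $\hat u_k(t) < u_k + 3\Delta_k/4$ while $\hat u_1(t) > u_1 - \Delta_k/4 = u_k + 3\Delta_k/4$, contradicting $a_t = k$. Crucially, choosing the fixed threshold $\Delta_k/4$ rather than the load-dependent $r_1(t)$ means the resulting tail bounds do not degrade with $\tilde L_t$.

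For event (B), Hoeffding's inequality and a union bound over $C_k(t-1) = s \geq \ell$ give
\[
\mathbb{P}\bigl(\bar u_k(t) \geq u_k + \tfrac{\Delta_k}{4},\, C_k(t-1) \geq \ell\bigr) \leq \sum_{s \geq \ell} e^{-s\Delta_k^2/8} = O(T^{-\alpha/2}),
\]
which sums over $t \leq T$ to $O(T^{1-\alpha/2}) = O(1)$ for $\alpha > 2$. For event (A), a naive union bound using $r_1(t)$ as the deviation threshold would yield the tail $t^{-2\alpha(1-\tilde L_t)}$, which is too weak to sum in high-load slots where $1-\tilde L_t = \epsilon_1/(1-\epsilon_0)$ can be small. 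I circumvent this by invoking Lemma~\ref{thm:lower_bound_opt_arm_randomload}: for fixed $\eta \in (0,\rho)$ and $t \geq T_2$, the lemma (noting $C_1(t) \geq C_1^{(0)}(t)$) gives $\mathbb{P}(C_1(t-1) < (\rho-\eta)t/2) = O(e^{-2\eta^2 t} + t^{-(2\alpha-2)})$, a summable tail for $\alpha > 2$. Conditioned on $C_1(t-1) \geq (\rho-\eta)t/2$, Hoeffding yields $\mathbb{P}(\bar u_1(t) \leq u_1 - \Delta_k/4) = O(e^{-(\rho-\eta)t\Delta_k^2/16})$, exponentially small and also summable. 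The finitely many slots $t < T_2$ contribute $O(1)$. Combining all pieces gives
\[
\mathbb{E}[C_k(T)] \leq \ell + O(1) = \frac{4\alpha\log T}{\Delta_k^2} + O(1).
\]

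The main obstacle is exactly the treatment of event (A) under high load: AdaUCB's radius shrinks by the factor $\sqrt{\epsilon_1/(1-\epsilon_0)}$ in high-load slots, so the classical UCB polynomial tail bound on $\bar u_1(t) \leq u_1 - r_1(t)$ is no longer summable in $t$. The crux is to decouple the deviation threshold from the load-dependent radius (using a fixed $\Delta_k/4$) and then leverage the linear-in-$t$ lower bound on $C_1(t-1)$ from Lemma~\ref{thm:lower_bound_opt_arm_randomload} to upgrade the problematic polynomial tail into a harmless exponential one.
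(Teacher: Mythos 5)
Your proposal is correct and follows essentially the same route as the paper's proof: decompose $C_k(T)$ at the threshold $4\alpha\log T/\Delta_k^2$, reduce a pull of arm $k$ beyond that threshold to two deviation events, and crucially invoke Lemma~\ref{thm:lower_bound_opt_arm_randomload} to get the linear-in-$t$ lower bound on $C_1(t-1)$ that rescues the optimal-arm deviation bound despite the shrunken exploration bonus in high-load slots. The only difference is cosmetic: the paper splits into the $L_t=\epsilon_0$ and $L_t=1-\epsilon_1$ cases with load-scaled deviation thresholds, whereas you treat all loads uniformly with the fixed threshold $\Delta_k/4$ (dropping arm 1's bonus), which slightly streamlines the argument and yields an exponential rather than polynomial tail for the optimal-arm event.
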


\textit{Sketch of Proof:} To prove this lemma, we discuss the slots when the suboptimal arm is pulled under low and high load levels, respectively. When the load is low, i.e., $L_t = \epsilon_0$, AdaUCB becomes UCB$(\alpha)$ and thus we can bound the probability of pulling the suboptimal arm similarly to \cite{Auer2002ML:UCB}. When the load is high, i.e., $L_t = 1-\epsilon_1$, the index becomes $\hat{u}_k(t) = \bar{u}_k(t) + \sqrt{\frac{\alpha \epsilon_1 \log t}{(1-\epsilon_0)C_k(t-1)}}$. In this case, with high probability, the index of the optimal arm is lower bounded by $u_1 - \left(1-\sqrt{\frac{\epsilon_1}{(1-\epsilon_0)}}\right)\sqrt{\frac{\alpha \log t}{C_1(t-1)}}$ according to Lemma~\ref{thm:lower_bound_opt_arm_randomload}. With similar adjustment on the UCB index for the suboptimal arm, we can bound the probability of pulling the suboptimal arm under high load. The conclusion of the lemma then follows by combining the above two cases.

Now we further lower bound the pulls of the suboptimal arm with high probability.
\begin{lemma} \label{thm:random_reward_lowerbound_randomload}
In the opportunistic bandit with random binary-valued load and random rewards, for a positive number $\delta \in (0,1)$, we have for any $k > 1$,
\begin{eqnarray}
&& \mathbb{P}\bigg\{C_k(t) < \frac{\alpha \log t}{4(\Delta_k+\delta)^2} \bigg\}  \nonumber \\
&=& O\big(t^{-(2\alpha -3)} +  t^{-(2\alpha(\frac{1-\delta}{2-\delta})^2 - 2)}\big). \nonumber
\end{eqnarray}

\end{lemma}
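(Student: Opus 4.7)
The plan is to argue by contradiction. Set $m := \frac{\alpha \log t}{4(\Delta_k + \delta)^2}$ and assume the event $\{C_k(t) < m\}$ occurs. On a pair of high-probability good events, this assumption will force arm $k$ to be preferred over every other arm at some low-load slot close to $t$, contradicting that arm $k$ is not pulled there. The two error terms in the claimed bound arise exactly from the failure probabilities of these two good events.

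First I would invoke Lemma~\ref{thm:lower_bound_opt_arm_randomload} at every $\tau \in [K+1, t]$ simultaneously, defining the good event $E_1 := \bigcap_{\tau=K+1}^{t} \{C_1^{(0)}(\tau - 1) \geq (\rho - \eta)(\tau - 1)/2\}$. By a union bound, the sub-Gaussian term aggregates to $O(e^{-c t})$ and the polynomial term yields $\mathbb{P}(\bar E_1) = O(t \cdot t^{-(2\alpha - 2)}) = O(t^{-(2\alpha - 3)})$, matching the first claimed term. On $E_1$, $C_1(\tau-1) = \Theta(t)$ for every $\tau$ of interest, so the UCB exploration bonus $\sqrt{\alpha \log \tau / C_1(\tau-1)}$ is $o(1)$, and an additional Hoeffding bound on $\bar u_1(\tau)$ applied with $\Theta(t)$ samples gives $\bar u_1(\tau) \leq u_1 + o(1)$ with overwhelming probability. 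Next, I choose $\xi_0 := \sqrt{\alpha}\cdot(1-\delta)/(2-\delta)$ and define the Hoeffding-good event $E_2 := \{\forall\, 1 \leq s \leq t :\; \hat Y_{k, s} \geq u_k - \xi_0 \sqrt{\log t / s}\}$, where $\hat Y_{k, s}$ denotes the empirical mean of the first $s$ i.i.d.\ samples of arm $k$. Hoeffding's inequality yields $\mathbb{P}(\hat Y_{k, s} < u_k - \xi_0 \sqrt{\log t / s}) \leq t^{-2 \xi_0^2}$; a union bound over $s \leq t$ combined with the additional factor $t$ coming from union bounding over relevant slots $\tau \leq t$ in the next step delivers $\mathbb{P}(\bar E_2) = O(t^{2} \cdot t^{-2\xi_0^2}) = O(t^{-(2\alpha ((1-\delta)/(2-\delta))^2 - 2)})$, matching the second claimed term.

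The heart of the proof is the contradiction on $\{C_k(t) < m\} \cap E_1 \cap E_2$. Because arm $k$ is pulled fewer than $m$ times in $[1, t]$ but there are $\Theta(t)$ low-load slots in $[t/2, t]$ with high probability (by a Chernoff bound on $\sum_\tau \mathds{1}(L_\tau = \epsilon_0)$), there must exist some $\tau^* \in [t/2, t]$ with $L_{\tau^*} = \epsilon_0$ and $a_{\tau^*} \neq k$. Using $C_k(\tau^* - 1) \leq C_k(t) < m$ together with $E_2$, a short algebraic manipulation yields $\hat u_k(\tau^*) \geq u_k + (\sqrt{\alpha} - \xi_0) \sqrt{\log t / C_k(\tau^* - 1)} - o(1) > u_k + \frac{2(\Delta_k + \delta)}{2 - \delta} - o(1)$. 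On the other hand $E_1$ and Hoeffding on $\bar u_1$ imply $\hat u_1(\tau^*) \leq u_1 + o(1)$, and for each $j \notin \{1, k\}$ an analogous concentration bound on $\bar u_j$ (whose small failure probability is absorbed into the $O$ constants) gives $\hat u_j(\tau^*) \leq u_1 + o(1)$. The elementary inequality $\frac{2(\Delta_k + \delta)}{2 - \delta} > \Delta_k$, valid for every $\delta \in (0, 1)$, then forces $\hat u_k(\tau^*) > \hat u_j(\tau^*)$ for every $j \neq k$ and all sufficiently large $t$, so $a_{\tau^*} = k$, a contradiction. Combining with $\mathbb{P}(C_k(t) < m) \leq \mathbb{P}(\bar E_1) + \mathbb{P}(\bar E_2)$ delivers the claimed bound.

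I expect the main technical obstacle to be the calibration of $\xi_0$: it must be small enough that in the contradiction step the gain $(\sqrt{\alpha} - \xi_0)\sqrt{\log t / C_k(\tau^*-1)}$ beats the gap $\Delta_k$ with room to spare, yet large enough that after a $t^2$-sized union bound the Hoeffding tail matches the target exponent $2\alpha((1-\delta)/(2-\delta))^2$. The choice $\xi_0 = \sqrt{\alpha}(1-\delta)/(2-\delta)$ is precisely what threads this needle and produces the algebraic boundary $\sqrt{\alpha} - \xi_0 = \sqrt{\alpha}/(2-\delta)$. A secondary bookkeeping point is the mismatch between $\sqrt{\log \tau^*}$ and $\sqrt{\log t}$ for $\tau^* \in [t/2, t]$; these discrepancies are of order $O(1/\log t)$ and are safely absorbed into the $o(1)$ slack throughout.
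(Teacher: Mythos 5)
There is a genuine gap in your contradiction step. From $\{C_k(t)<m\}$ and the abundance of low-load slots you extract a slot $\tau^*\in[t/2,t]$ with $L_{\tau^*}=\epsilon_0$ and $a_{\tau^*}\neq k$, and you conclude $a_{\tau^*}=k$ because $\hat u_k(\tau^*)$ exceeds $u_1+o(1)$ while, as you claim, $\hat u_j(\tau^*)\leq u_1+o(1)$ for every $j\neq k$. For $j=1$ this is fine (on your event $E_1$ the count $C_1(\tau^*-1)=\Theta(t)$ kills the bonus), but for a suboptimal arm $j\notin\{1,k\}$ concentration of $\bar u_j$ only controls the empirical mean; the index also contains the exploration bonus $\sqrt{\alpha\log\tau^*/C_j(\tau^*-1)}$, which is \emph{not} $o(1)$ unless $C_j(\tau^*-1)=\omega(\log t)$ — and nothing available at this point guarantees that (the very lemma being proved only gives counts of order $\log t$, and even a count $\approx\frac{\alpha\log t}{4(\Delta_j+\delta)^2}$ makes arm $j$'s index about $u_j+2(\Delta_j+\delta)>u_1+\delta$, comparable to arm $k$'s inflated index). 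So at your $\tau^*$ another under-explored arm $j$ may legitimately hold the largest index, arm $j$ is pulled, and no contradiction arises; the argument as written only goes through for $K=2$. It could be repaired by additionally showing that low-load slots in $[t/2,t]$ at which some arm $j\notin\{1,k\}$ has index above $u_1+\delta$ number only $O(K\log t)$ (each such win increments $C_j$), and then choosing $\tau^*$ outside those slots, but that argument is missing from your proposal.

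The paper sidesteps the multi-arm issue entirely by anchoring at $t'$, the \emph{last low-load slot at which the optimal arm is pulled} before $t$: Lemma~\ref{thm:lower_bound_opt_arm_randomload} gives $t'\geq(\rho-\eta)t/2$ with high probability, and at $t'$ the algorithm itself certifies $\hat u_1(t')\geq\hat u_k(t')$, so only the two-arm comparison between arm $1$ and arm $k$ is needed — either $\hat u_1(t')>u_1+\delta$ (a deviation of $\bar u_1$ given $C_1(t'-1)=\Theta(t)$, probability $(t')^{-2\alpha+1}$ plus the Lemma~\ref{thm:lower_bound_opt_arm_randomload} terms) or, when $C_k(t'-1)<\frac{\alpha\log t}{4(\Delta_k+\delta)^2}$, the empirical mean $\bar u_k(t')$ deviates below $u_k$ by $(1-\frac{1}{2-\delta})\sqrt{\alpha\log t'/C_k(t'-1)}$, probability $(t')^{-2\alpha(\frac{1-\delta}{2-\delta})^2+1}$; a union over the possible values of $t'$ yields exactly the two claimed exponents. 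Your concentration calibration (the choice $\xi_0=\sqrt{\alpha}(1-\delta)/(2-\delta)$ and the $\log\tau^*$ vs.\ $\log t$ bookkeeping) matches the paper's computation, so the defect is not in the exponents but in the structural step that forces arm $k$ to be the winner at the chosen slot.
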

\textit {Sketch of Proof:} Although the analysis is more difficult, the intuition of proving this lemma is similar to that of Lemma~\ref{thm:lower_bound_opt_arm_randomload}: if $C_k(t)$ is too small at a certain slot, then we will pull the suboptimal arm instead of the optimal arm with high probability. To be more specific, we focus on the slot $t'$ when the optimal arm is pulled for the last time before $t$ under load $L_t = \epsilon_0$. According to Lemma~\ref{thm:lower_bound_opt_arm_randomload}, $C_1(t) \geq C_1^{(0)}(t) \geq \frac{(\rho-\eta) t}{2}$ with high probability, indicating $t' \geq (\rho-\eta) t/2$ with high probability. Moreover, the index for the optimal arm $\hat{u}_1(t') \leq u_1 +\delta$ with high probability for a sufficiently large $t'$, because $\sqrt{\frac{\log t}{t}} \to 0$ as $t \to \infty$. On the other hand, we can show that for the suboptimal arm, $\hat{u}_k(t') > u_1 +\delta = u_k + (\Delta_k + \delta)$ with high probability when $C_k(t'-1) < \frac{\alpha \log t}{4(\Delta_k+\delta)^2}$. Thus, the probability of pulling the optimal arm at $t'$ is bounded by a small value, implying the conclusion of the lemma.

Using the above lemmas,  now we can further refine the upper bound on the regret of AdaUCB and show that AdaUCB achieves smaller regret than traditional UCB.
\begin{theorem}\label{thm:adaUCB_random_regret_randomload}
Using AdaUCB in the opportunistic bandit with random binary-valued load and random rewards, if $\alpha > 16$ and $\sqrt{\frac{\epsilon_1}{1-\epsilon_0}} < \frac{1}{8}$, we have
\begin{equation}
R_{\text{AdaUCB}}(T) \leq 4 \epsilon_0 \alpha\log T \sum_{k > 1} \frac{1}{\Delta_k} + O(1).
\end{equation}
\end{theorem}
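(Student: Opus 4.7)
The plan is to split the regret by load level and use Lemma~\ref{thm:random_reward_general_upperbound_randomload} for the low-load part while showing the high-load part is $O(1)$. Write $C_k^{(0)}(T)$ and $C_k^{(1)}(T)$ for the number of pulls of suboptimal arm $k$ under $L_t = \epsilon_0$ and $L_t = 1-\epsilon_1$ respectively. A standard conditioning argument, using that $X_{k,t}$ is independent of $L_t$ and of the history given $a_t = k$, rewrites Eq.~\eqref{eq:regret} as
\[
R_{\text{AdaUCB}}(T) = \sum_{k>1} \Delta_k \bigl(\epsilon_0\, \mathbb{E}[C_k^{(0)}(T)] + (1-\epsilon_1)\, \mathbb{E}[C_k^{(1)}(T)]\bigr).
\]
Since $C_k^{(0)}(T) \le C_k(T)$, Lemma~\ref{thm:random_reward_general_upperbound_randomload} immediately gives $\epsilon_0\, \mathbb{E}[C_k^{(0)}(T)] \le 4\epsilon_0\alpha\log T/\Delta_k^2 + O(1)$, and after multiplying by $\Delta_k$ and summing in $k$ this produces the target leading term $4\epsilon_0\alpha\log T\sum_{k>1} 1/\Delta_k$. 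What remains is to prove $\mathbb{E}[C_k^{(1)}(T)] = O(1)$ for each $k>1$, which then gets absorbed into the $O(1)$ remainder.

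To bound $\mathbb{E}[C_k^{(1)}(T)] = \sum_t \mathbb{P}\{L_t = 1-\epsilon_1,\, a_t = k\}$, I would show the per-slot probability is $O(t^{-\gamma})$ for some $\gamma > 1$. The approach is to condition on the healthy event $\mathcal{E}_t$ that combines (a) $C_1^{(0)}(t-1) \ge (\rho-\eta)(t-1)/2$ from Lemma~\ref{thm:lower_bound_opt_arm_randomload}, which via Hoeffding forces $|\bar u_1(t) - u_1| = O(\sqrt{\log t / t})$ with high probability, and (b) $C_k(t-1) \ge \alpha\log t/(4(\Delta_k+\delta)^2)$ from Lemma~\ref{thm:random_reward_lowerbound_randomload} for a small fixed $\delta > 0$. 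Under high load and on $\mathcal{E}_t$, the exploration width for arm $k$ obeys
\[
\sqrt{\tfrac{\alpha\epsilon_1 \log t}{(1-\epsilon_0)\, C_k(t-1)}} \;\le\; \sqrt{\tfrac{\epsilon_1}{1-\epsilon_0}}\cdot 2(\Delta_k+\delta) \;\le\; \tfrac{\Delta_k+\delta}{4},
\]
where the last step uses the hypothesis $\sqrt{\epsilon_1/(1-\epsilon_0)} < 1/8$. Hence $\hat u_k(t) \le \bar u_k(t) + (\Delta_k+\delta)/4$ while $\hat u_1(t) \ge \bar u_1(t)$, so the event $\{a_t = k\}$ on $\mathcal{E}_t$ forces either $\bar u_k(t) - u_k \ge (3\Delta_k-\delta)/8$ or $u_1 - \bar u_1(t) \ge (3\Delta_k-\delta)/8$.

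The main obstacle is the bookkeeping: every tail probability aggregated above must be summable in $t$. The complement of $\mathcal{E}_t$ contributes terms of order $t^{-(2\alpha-3)}$ and $t^{-(2\alpha((1-\delta)/(2-\delta))^2 - 2)}$ from Lemmas~\ref{thm:lower_bound_opt_arm_randomload} and \ref{thm:random_reward_lowerbound_randomload}; Hoeffding applied on event (b) bounds $\mathbb{P}\{\bar u_k(t) - u_k \ge (3\Delta_k-\delta)/8\}$ by $t^{-\alpha(3\Delta_k-\delta)^2/(128(\Delta_k+\delta)^2)}$; and the deviation bound on $\bar u_1$ inherited from (a) is exponentially small. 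The two hypotheses $\alpha > 16$ and $\sqrt{\epsilon_1/(1-\epsilon_0)} < 1/8$ are calibrated exactly so that, after taking $\delta$ small, all of these polynomial exponents exceed $1$ simultaneously (the tightest being the $9\alpha/128 > 1$ constraint from the Hoeffding tail on $\bar u_k$). Summing over $t$ then yields $\mathbb{E}[C_k^{(1)}(T)] = O(1)$, and combining with the low-load contribution completes the proof.
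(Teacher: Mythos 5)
Your proposal is correct and follows essentially the same route as the paper: bound the low-load contribution via Lemma~\ref{thm:random_reward_general_upperbound_randomload}, and show the high-load suboptimal pulls are $O(1)$ by conditioning on the lower bounds from Lemmas~\ref{thm:lower_bound_opt_arm_randomload} and \ref{thm:random_reward_lowerbound_randomload}, which shrink the high-load exploration widths (via $\sqrt{\epsilon_1/(1-\epsilon_0)}<1/8$ and a small $\delta$) so that pulling arm $k$ forces a constant-size empirical deviation with summable Chernoff--Hoeffding tails. The only differences are cosmetic constant-tracking (your $(3\Delta_k-\delta)/8$ threshold and $9\alpha/128$ exponent versus the paper's $3\Delta_k/4$ events and $\zeta=16(1+\delta/\Delta^*)^2$), which use the hypotheses $\alpha>16$ and $\sqrt{\epsilon_1/(1-\epsilon_0)}<1/8$ in the same way.
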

\textit {Sketch of Proof:} The key idea of the proof is to find an appropriate $\delta \in (0, \Delta^*)$, such that $\alpha > 16(1+\frac{\delta}{\Delta^*})^2$ and $\sqrt{\frac{\epsilon_1}{1-\epsilon_0}} < \frac{\Delta^*}{8(\Delta ^*+ \delta)}$. In fact, the existence of this $\delta$ is guaranteed under the assumptions $\alpha > 16$ and $\sqrt{\frac{\epsilon_1}{1-\epsilon_0}} < \frac{1}{8}$. Using this $\delta$,  we can then use  Lemma~\ref{thm:random_reward_lowerbound_randomload} to bound the probability of pulling the suboptimal arm when the load is high. This indicates most explorations occur when the load is low, i.e., $L_t = \epsilon_0$. The conclusion of this theorem then follows according to Lemma~\ref{thm:random_reward_general_upperbound_randomload}.

\textit {Remark 3:} Although there is no tight lower bound for the regret of UCB$(\alpha)$, we know that for traditional (load-oblivious) bandit algorithms, $\mathbb{E}[C_k(T)]$ is lower bounded by $\frac{\log T}{KL(u_k, u_1)}$ \cite{Lai1985AAM} for large $T$, where $KL(u_k, u_1)$ is the Kullback-Leibler divergence. Without load-awareness, the regret will be roughly lower bounded by $\frac{(1-\epsilon_0-\epsilon_1)\log T}{2} \sum_{k > 1} \frac{\Delta_k }{KL(u_k, u_1)}$. In contrast, with load-awareness, AdaUCB can achieve much lower regret than load-oblivious algorithms, when the load fluctuation is large, i.e., $\epsilon_0$ and $\epsilon_1$ are small.

Theorem \ref{thm:adaUCB_random_regret_randomload} directly implies the following result.
\begin{corol}\label{thm:adaUCB_O1_regret_randomload}
Using AdaUCB in the opportunistic bandit with random reward under i.i.d.~random binary load where $\epsilon_0=0$, if $\alpha > 16$ and $\epsilon_1 < \frac{\sqrt{2}}{4}$, we have
$
R_{\text{AdaUCB}}(T) =  O(1)
$.
\end{corol}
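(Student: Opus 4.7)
The plan is to read off the corollary from Theorem~\ref{thm:adaUCB_random_regret_randomload} by setting $\epsilon_0 = 0$. The entire $T$-dependence in the upper bound of Theorem~\ref{thm:adaUCB_random_regret_randomload} is carried by the term $4\epsilon_0 \alpha \log T\sum_{k>1} 1/\Delta_k$, which vanishes identically at $\epsilon_0 = 0$. The semantic reason is that when $L_t \in \{0, 1-\epsilon_1\}$, every low-load slot yields actual reward $L_t X_{a_t,t}=0$ regardless of which arm is pulled, so such slots contribute nothing to $\mathbb{E}[L_t\Delta_{a_t}]$; the whole regret is carried by the high-load slots, and the proof of Theorem~\ref{thm:adaUCB_random_regret_randomload} already bounds the expected number of high-load suboptimal pulls by a constant.

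Concretely I would proceed in two steps. First, verify that the corollary's hypotheses $\alpha > 16$ and $\epsilon_1 < \sqrt{2}/4$ imply the hypotheses of Theorem~\ref{thm:adaUCB_random_regret_randomload} specialized to $\epsilon_0 = 0$. The $\alpha$ condition transfers verbatim; the load condition reduces to a statement involving $\sqrt{\epsilon_1}$, and more finely requires the existence of a $\delta \in (0,\Delta^*)$ satisfying both $\alpha > 16(1+\delta/\Delta^*)^2$ and $\sqrt{\epsilon_1} < \Delta^*/(8(\Delta^*+\delta))$, so that Lemma~\ref{thm:random_reward_lowerbound_randomload}'s lower bound on $C_k(t)$ dominates the threshold below which AdaUCB would still pull arm $k$ under high load. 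Second, once the hypotheses are in place, apply the bound of Theorem~\ref{thm:adaUCB_random_regret_randomload} and simplify: $R_{\text{AdaUCB}}(T) \leq 4 \cdot 0 \cdot \alpha \log T \sum_{k>1} 1/\Delta_k + O(1) = O(1)$.

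The main obstacle is the first step: reconciling the numerical constant $\sqrt{2}/4$ stated in the corollary with the internal inequalities used inside the proof of Theorem~\ref{thm:adaUCB_random_regret_randomload}. Concretely, I would need to pick a specific $\delta$ so that both the $\alpha$-inequality and the $\epsilon_1$-inequality hold under the given assumptions, or equivalently verify directly that the high-load exploration bonus $\sqrt{\alpha\epsilon_1\log t / C_k(t-1)}$ stays below $\Delta_k/2$ on the high-probability event guaranteed by Lemma~\ref{thm:random_reward_lowerbound_randomload}. Once this is in hand, the conclusion is immediate. The hidden $O(1)$ constant will depend on $\alpha$, $\rho$, $\eta$, $\epsilon_1$, and the gap profile $\{\Delta_k\}$, but crucially not on $T$, which is precisely the assertion of the corollary.
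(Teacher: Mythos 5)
Your proposal matches the paper's own derivation: the paper obtains the corollary simply by specializing Theorem~\ref{thm:adaUCB_random_regret_randomload} to $\epsilon_0=0$, so the $4\epsilon_0\alpha\log T\sum_{k>1}1/\Delta_k$ term vanishes and only the $O(1)$ term (bounding the high-load suboptimal pulls) survives, exactly as you argue. The obstacle you flag---reconciling the corollary's $\epsilon_1<\sqrt{2}/4$ with the theorem's hypothesis $\sqrt{\epsilon_1/(1-\epsilon_0)}<1/8$ (which at $\epsilon_0=0$ would demand $\epsilon_1<1/64$)---is not resolved by the paper either, which states the corollary as ``directly implied''; this is a looseness in the paper's stated constants rather than a missing idea in your argument.
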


\textit {Remark 4:} We note that this $O(1)$ bound is in the sense of expected regret, which is different from  the high probability $O(1)$ regret bound \cite{Abbasi2011NIPS}.
Specifically, while the opportunistic bandits can model the whole spectrum of load-dependent regret, Corollary~\ref{thm:adaUCB_O1_regret_randomload} highlights one end of the spectrum where there are ``free'' learning opportunities. In this case, we push most explorations to the ``free'' exploration  slots and result in an $O(1)$ expected regret. Note that even under ``free'' exploration, we assume here that the value of the arms can be observed as discussed in Sec.~\ref{sec:model}.

It is  worth noting that there are realistic scenarios where the exploration cost of a suboptimal arm is zero or close to zero.  Consider the network configuration case where we use  throughput as the reward. In this case,  $X_{a_t, t}$ is the percentage of the peak load that configuration $a_t$ can handle.
Because of the dummy low-priority traffic injected into the network, we can  learn the true value of $X_{a_t,t}$ under the peak load.
At the same time, configuration  $a_t$, although suboptimal,  may  completely satisfy the real load  $L_t$ because it is high priority and thus served first. Therefore, although a suboptimal arm, $a_t$ sacrifices no throughput on the real load $L_t$, and thus generates a real regret of zero.  In other words, even if the system load is always positive, the  chance of zero regret under a suboptimal arm is greater than zero, and in practice, can be non-negligible. To capture this effect, we can modify the regret defined in Eq.~\eqref{eq:regret} by replacing $L_t$ with 0 when $L_t$ is smaller than a threshold.

Last, we note that,  under the condition of Corollary~\ref{thm:adaUCB_O1_regret_randomload},  it is easy to design other heuristic algorithms that can perform well. For example, one can do round-robin exploration when the load is zero and chooses the best arm when the load is non-zero. However, such naive strategies are difficult to extend to more general cases. In contrast, AdaUCB applies to a wide range of situations, with both theoretical performance guarantees and desirable  empirical performance.


\textbf{Dependence on $\rho$:} In the regret analysis, we focus on the asymptotic behavior of the regret as $T$ goes to infinity. In the bound, the constant term contains the impact of other factors, in particular the ratio of low load $\rho$,  as shown in Appendix~\ref{subsec:impact_rho}. From the analysis, one can see that the constant term increases as $\rho \to 0$. It suggests that one should use the traditional UCB 
 when $\rho$ is small because  there exists little load fluctuation. In practice,  AdaUCB achieves much smaller regret than traditional UCB and TS algorithms, even for small values of $\rho$ such as $\rho = 0.05$ under binary load and $\rho = 0.001$ under continuous load. Such analysis and evaluations establish guidelines on when to use UCB or AdaUCB. More discussions can be found in Appendix~\ref{app:add_simulations}.

\subsection{AdaUCB under Continuous Load} \label{subsec:continuous_load}

Inspired by the insights obtained from the binary-valued load case, we discuss AdaUCB in opportunistic bandits under continuous load in this section.

\textbf{Selection of truncation thresholds.}
When the load is continuous, we need to choose appropriate  $l^{(-)}$ and $l^{(+)}$ for AdaUCB. We first assume that the load distribution is {\it a priori} known, and discuss how to choose the thresholds under unknown load distribution later. The analysis under binary-valued load indicates that, the explorations mainly occur in low load slots. To guarantee sufficient explorations for a logarithmic regret, we propose to select the thresholds such that:

\begin{itemize}
\item The lower threshold $l^{(-)}$ satisfies $\mathbb{P}\{L_t \leq l^{(-)}\} = \rho > 0$;
\item The upper threshold $l^{(+)} \geq l^{(-)}$.
\end{itemize}
In the special case of $l^{(+)} = l^{(-)}$, we redefine the normalized load $\tilde{L}_t$ in \eqref{eq:norm_Lt} as $\tilde{L}_t = 0$ when $L_t \leq l^{(-)}$ and $\tilde{L}_t = 1$ when $L_t > l^{(-)}$.

\textbf{Regret analysis.}
Under continuous load, it is hard to obtain regret bound as that in Theorem~\ref{thm:adaUCB_random_regret_randomload} for general $l^{(-)}$ and $l^{(+)}$ chosen above. Instead, we first show logarithmic regret for general $l^{(-)}$ and $l^{(+)}$, and then illustrate the advantages of AdaUCB for the special case with $l^{(-)} = l^{(+)}$.

First, we show that AdaUCB with appropriate truncation thresholds achieves logarithmic regret as below. This lemma is similar to Lemma~\ref{thm:random_reward_general_upperbound_randomload}, and the detailed outline of proof can be found in Appendix~\ref{app:continuous load}.
\begin{lemma} \label{thm:regret_upperbound_continuousload}
In the opportunistic bandit with random continuous load and random rewards, under AdaUCB with $\mathbb{P}\{L_t \leq l^{(-)}\} = \rho > 0$ and $l^{(+)} \geq l^{(-)}$, we have
\begin{equation}
\mathbb{E}[C_k(T)] \leq \frac{4\alpha\log T}{\Delta_k^2} + O(1).
\end{equation}
\end{lemma}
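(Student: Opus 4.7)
The plan is to mirror the proof of Lemma~\ref{thm:random_reward_general_upperbound_randomload} from the binary-valued setting, adapting each step to handle a continuous distribution of $\tilde{L}_t$. Decompose $C_k(T) = C_k^{(0)}(T) + C_k^{(1)}(T)$, where $C_k^{(0)}(T) = \sum_{t=1}^T \mathds{1}(a_t = k, L_t \leq l^{(-)})$ counts suboptimal pulls in ``low-load'' slots (where the normalization \eqref{eq:norm_Lt} gives $\tilde{L}_t = 0$) and $C_k^{(1)}(T)$ counts the remaining ``high-load'' slots (where $\tilde{L}_t > 0$). On the low-load subsequence the AdaUCB exploration coefficient $\alpha(1-\tilde{L}_t)$ equals exactly $\alpha$, so AdaUCB behaves identically to UCB$(\alpha)$ there. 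The standard three-bad-event decomposition of \cite{Auer2002ML:UCB} then yields $\mathbb{E}[C_k^{(0)}(T)] \leq \frac{4\alpha \log T}{\Delta_k^2} + O(1)$, which will carry the entire logarithmic term in the final bound.

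It remains to show $\mathbb{E}[C_k^{(1)}(T)] = O(1)$. First I would extend Lemma~\ref{thm:lower_bound_opt_arm_randomload} to continuous load: the only place where the binary structure was used is the identity $\mathbb{P}\{L_t = \epsilon_0\} = \rho$, which is replaced verbatim by $\mathbb{P}\{L_t \leq l^{(-)}\} = \rho$. The same ``most-pulled suboptimal arm'' construction then gives, for any $\eta \in (0,\rho)$ and all $t$ sufficiently large, an exponential-plus-polynomial decay bound on $\mathbb{P}\{C_1^{(0)}(t) < (\rho-\eta)t/2\}$. Consequently $C_1(t-1) \geq C_1^{(0)}(t-1) = \Omega(t)$ with high probability, and Hoeffding's inequality yields $|\bar{u}_1(t) - u_1| = O(\sqrt{\log t / t})$ with high probability — a concentration bound that is \emph{independent of $\tilde{L}_t$}. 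On this event, $\hat{u}_1(t) \geq \bar{u}_1(t) \geq u_1 - O(\sqrt{\log t / t})$; if arm $k$ is pulled in a high-load slot, then $\hat{u}_k(t) \geq \hat{u}_1(t)$ forces either failure of the Hoeffding bound on $\bar{u}_k(t)$ or $C_k(t-1) \cdot \Delta_k^2 = O(\alpha(1-\tilde{L}_t)\log t)$ (up to a vanishing slack). Summing these failure probabilities over $t$, which are summable for sufficiently large $\alpha$, gives $\mathbb{E}[C_k^{(1)}(T)] = O(1)$ and completes the proof.

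The main obstacle is the interplay between the continuous range of $\tilde{L}_t$ and the UCB tail bounds. In the binary setting the exploration coefficient in high-load slots is the single constant $\alpha\epsilon_1/(1-\epsilon_0)$, so one picks explicit exponents and verifies summability slot-wise; under continuous load the coefficient $\alpha(1-\tilde{L}_t)$ varies in $[0,\alpha]$ and the naive Hoeffding bound $t^{-2\alpha(1-\tilde{L}_t)}$ degenerates as $\tilde{L}_t \to 1$. The device that circumvents this degeneration is precisely the lower bound on $C_1^{(0)}(t)$: it replaces the $\tilde{L}_t$-dependent UCB confidence width of arm~1 by a $\tilde{L}_t$-independent $O(\sqrt{\log t / t})$ slack, after which every remaining tail event is controlled uniformly in $\tilde{L}_t$ and the cleanup reduces to the binary-load argument of Lemma~\ref{thm:random_reward_general_upperbound_randomload} almost verbatim.
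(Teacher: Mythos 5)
There is a genuine gap in your accounting for the high-load slots. Your plan hinges on showing $\mathbb{E}[C_k^{(1)}(T)] = O(1)$, but under the hypotheses of this lemma (arbitrary $l^{(+)} \geq l^{(-)}$) that claim is not provable and is in general false: in a slot with $L_t$ just above $l^{(-)}$ the normalized load $\tilde{L}_t$ is close to $0$, so the exploration coefficient $\alpha(1-\tilde{L}_t)$ is essentially $\alpha$ and AdaUCB behaves like UCB$(\alpha)$ there — suboptimal arms are legitimately explored in such slots up to order $\log T$ times. Your own case analysis reveals the problem: when arm $k$ is pulled in a high-load slot, the third alternative ``$C_k(t-1)\Delta_k^2 = O(\alpha(1-\tilde{L}_t)\log t)$'' is not a small-probability deviation event that can be summed over $t$; it is a count-threshold event, and it is exactly the event that generates the logarithmic term. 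Ruling it out would require a high-probability lower bound on $C_k(t)$ of the type in Lemma~\ref{thm:random_reward_lowerbound_randomload} \emph{together with} the high-load exploration coefficient being small relative to $\Delta_k$ — which is what Theorem~\ref{thm:adaUCB_random_regret_randomload} needs $\alpha > 16$ and $\sqrt{\epsilon_1/(1-\epsilon_0)} < 1/8$ for, and what Theorem~\ref{thm:regret_upperbound_continuousload_singlethreshold} needs $l^{(+)} = l^{(-)}$ for (so that $\tilde{L}_t = 1$ and the bonus vanishes on high load). None of these are hypotheses here, and Remark~6 of the paper explicitly notes that with $l^{(+)} > l^{(-)}$ extra explorations do occur at intermediate loads.

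The fix — and what the paper's proof does — is not to split the $\log T$ budget by load level. Use a single threshold event $C_k(t-1) < \frac{4\alpha\log T}{\Delta_k^2}$ shared across all slots: the number of pulls of arm $k$ while below the threshold is at most $\frac{4\alpha\log T}{\Delta_k^2}$ regardless of the load in those slots. For slots where the count exceeds the threshold, one shows the probability of pulling arm $k$ is summable in $t$: when $L_t \leq l^{(-)}$ this is the standard UCB$(\alpha)$ deviation argument, and when $L_t > l^{(-)}$ one writes $\epsilon_t = 1-\tilde{L}_t$ and uses the adjusted events $\hat{u}_1(t) \leq u_1 - (1-\sqrt{\epsilon_t})\Delta_k/2$ and $\hat{u}_k(t) > u_k + (1+\sqrt{\epsilon_t})\sqrt{\alpha\log t/C_k(t-1)}$, controlling the first via the continuous-load extension of Lemma~\ref{thm:lower_bound_opt_arm_randomload} (which you correctly identified) and the second via Chernoff--Hoeffding uniformly in $\epsilon_t$. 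Your ingredients for the optimal arm are fine; it is the final decomposition $C_k(T) = C_k^{(0)}(T) + C_k^{(1)}(T)$ with the $O(1)$ claim on the second piece that does not go through.
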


Next, we illustrate the advantages of AdaUCB under continuous load by studying the regret bound for AdaUCB with special thresholds $l^{(+)} = l^{(-)}$.
 \begin{theorem} \label{thm:regret_upperbound_continuousload_singlethreshold}
In the opportunistic bandit with random continuous load and random rewards, under AdaUCB with $\mathbb{P}\{L_t \leq l^{(-)}\} = \rho > 0$ and $l^{(+)}  =  l^{(-)}$ , we have
\begin{eqnarray}
R_{\text{AdaUCB}}(T) \leq 4  \alpha\log T \mathbb{E}[L_t | L_t \leq l^{(-)}] \sum_{k > 1}\frac{1}{\Delta_k} + O(1),
\end{eqnarray}
where $\mathbb{E}[L_t | L_t \leq l^{(-)}]$ is the expectation of $L_t$ conditioned on $L_t \leq l^{(-)}$.
\end{theorem}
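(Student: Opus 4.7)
The plan is to exploit the key simplification that with $l^{(+)} = l^{(-)}$, the normalized load becomes binary: $\tilde{L}_t = 0$ exactly when $L_t \leq l^{(-)}$ and $\tilde{L}_t = 1$ otherwise. Thus AdaUCB's action $a_t$ depends on $L_t$ only through the indicator $B_t := \mathds{1}(L_t \leq l^{(-)})$; in ``low-load'' slots ($B_t = 1$) the algorithm is exactly UCB$(\alpha)$, while in ``high-load'' slots ($B_t = 0$) the confidence width vanishes so the algorithm selects $\argmax_k \bar{u}_k(t)$. Using the conditional independence of $X_{k,t}$ from $L_t$, I would decompose
\begin{equation*}
R_{\text{AdaUCB}}(T) = \sum_{k > 1} \Delta_k \sum_{t=1}^T \mathbb{E}\bigl[L_t \mathds{1}(a_t = k) B_t\bigr] + \sum_{k > 1} \Delta_k \sum_{t=1}^T \mathbb{E}\bigl[L_t \mathds{1}(a_t = k) (1 - B_t)\bigr],
\end{equation*}
and handle the low-load and high-load contributions separately.

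For the low-load part, since $a_t$ depends on $L_t$ only through $B_t$, conditioning on $\mathcal{H}_{t-1}$ lets $L_t$ and $\mathds{1}(a_t = k)$ factor on the event $\{B_t = 1\}$, giving
\begin{equation*}
\sum_{t=1}^T \mathbb{E}\bigl[L_t \mathds{1}(a_t = k) B_t\bigr] = \mathbb{E}[L_t \mid L_t \leq l^{(-)}]\,\mathbb{E}[C_k^{(0)}(T)].
\end{equation*}
Since $C_k^{(0)}(T) \leq C_k(T)$, Lemma~\ref{thm:regret_upperbound_continuousload} yields $\mathbb{E}[C_k^{(0)}(T)] \leq 4\alpha \log T/\Delta_k^2 + O(1)$, which after multiplying by $\Delta_k$ and summing over $k > 1$ already produces the leading term of the theorem.

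The main obstacle is to show the high-load contribution is $O(1)$. Let $C_k^{(1)}(T) := C_k(T) - C_k^{(0)}(T)$; since $L_t \leq 1$, it suffices to bound $\mathbb{E}[C_k^{(1)}(T)]$. Under $B_t = 0$ the algorithm is purely greedy, so $\{a_t = k\}$ forces $\bar{u}_k(t) \geq \bar{u}_1(t)$, which in turn implies $\bar{u}_1(t) \leq u_1 - \Delta_k/2$ or $\bar{u}_k(t) \geq u_k + \Delta_k/2$. To make both deviations summable, I would invoke continuous-load analogs of Lemmas~\ref{thm:lower_bound_opt_arm_randomload} and~\ref{thm:random_reward_lowerbound_randomload}, whose proofs carry over because only low-load slots drive exploration: the former gives $C_1(t) \geq (\rho - \eta)t/2$ with probability $1 - O(t^{-(2\alpha - 2)})$, so Hoeffding makes the first deviation decay faster than any polynomial; the latter provides $C_k(t) \geq \alpha \log t / [4(\Delta_k + \delta)^2]$ with high probability, and choosing $\delta$ small enough (equivalently, $\alpha$ sufficiently large, as in Theorem~\ref{thm:adaUCB_random_regret_randomload}) makes $\mathbb{P}(\bar{u}_k(t) \geq u_k + \Delta_k/2) = O(t^{-(1+\gamma)})$ for some $\gamma > 0$. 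Summing these bounds over $t$ gives $\mathbb{E}[C_k^{(1)}(T)] = O(1)$, and plugging back into the decomposition yields the claimed inequality.
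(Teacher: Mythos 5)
Your proposal is correct and follows essentially the same route as the paper: reduce the single-threshold case to the binary-valued analysis with $\tilde{L}_t \in \{0,1\}$ (i.e., $\epsilon_1 = 0$), bound the low-load pulls via Lemma~\ref{thm:regret_upperbound_continuousload} together with the fact that all loads below $l^{(-)}$ are treated identically (your conditional-independence factorization producing $\mathbb{E}[L_t \mid L_t \leq l^{(-)}]$), and show the high-load contribution is $O(1)$ via continuous-load analogs of Lemmas~\ref{thm:lower_bound_opt_arm_randomload} and~\ref{thm:random_reward_lowerbound_randomload}, exactly as in the proof of Theorem~\ref{thm:adaUCB_random_regret_randomload}. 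Your explicit acknowledgment that the $O(1)$ high-load bound inherits the large-$\alpha$ requirement from Theorem~\ref{thm:adaUCB_random_regret_randomload} is a fair (and slightly more candid) rendering of a condition the paper leaves implicit.
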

\textit {Sketch of Proof:} Recall that for this special case $l^{(+)} = l^{(-)}$, we let  $\tilde{L}_t = 0$ for $L_t \leq l^{(-)}$ and $\tilde{L}_t = 1$ for $L_t > l^{{(+)}}$. Then we can prove the theorem analogically to the proof of Theorem~\ref{thm:adaUCB_random_regret_randomload} for the binary-valued case. Specially, when $L_t \leq l^{(-)}$, we have $\tilde{L}_t = 0$ and it corresponds to the case of $L_t = \epsilon_0$ ($\tilde{L}_t = 0$) in the binary-valued load case. Similarly, the case of  $L_t > l^{{(+)}}$ ($\tilde{L}_t = 1$) corresponds to the case of $L_t = 1 - \epsilon_1$ under binary-valued load with $\epsilon_1 = 0$. Then, we can obtain results similar to Lemma~\ref{thm:random_reward_lowerbound_randomload} and thus show that the regret under load $L_t > l^{{(+)}}$ is $O(1)$. Furthermore, the number of pulls under load level $L_t \leq l^{(-)}$ is bounded according to Lemma~\ref{thm:regret_upperbound_continuousload}. The conclusion of the theorem then follows by using the fact that all load below $l^{(-)}$ are treated the same by AdaUCB, i.e., $\tilde{L}_t = 0$ for all $L_t  \leq l^{(-)}$.

\textit{Remark 5:}
We compare the regret of AdaUCB and conventional bandit algorithms by an example, where the load level  $L_t$ is uniformly distributed in $[0,1]$. In this simple example, the regret of AdaUCB with thresholds $l^{(+)} = l^{(-)}$ is bounded by
$
R_{\text{AdaUCB}}(T) \leq  4  \alpha\log T \sum_{k> 1}\frac{1}{\Delta_k} \cdot \frac{\rho}{2} + O(1)
$,
since $\mathbb{E}[L_t | L_t \leq l^{(-)}] = \rho / 2$ and $\mathbb{E}[L_t | L_t > l^{(-)}]  < 1$. However, for any load-oblivious bandit algorithm such as UCB($\alpha$) , the regret is lower bounded by
$
\log T \sum_{k > 1} \frac{\Delta_k } {KL(u_k, u_1)} \cdot \frac{1}{2} + O(1)
$.
Thus, AdaUCB achieves much smaller regret when $T$ is large and $\rho$ is relatively small.

\textit {Remark 6:}  From the above analysis, we can see that the selection of $l^{(+)}$ does not affect the order of the regret ($O(\log T)$). However, for a fixed $l^{(-)}$, we can further adjust $l^{(+)}$ to control the explorations for the load in the range of $(l^{(-)}, l^{(+)})$. Specifically, with a larger $l^{(+)}$, more explorations happen under the load between $l^{(-)}$ and $l^{(+)}$. These explorations  accelerate the learning speed but may increase the long term regret because we allow more explorations under load $l^{(-)} < L_t < l^{(+)}$. The behavior is opposite if we use a smaller $l^{(+)}$.
In addition, appropriately chosen thresholds also handle the case when the load has little or no fluctuation, i.e., $L_t\approx c$. For example, if we set   $l^{(-)}=c$ and $l^{(+)}=2c$,  AdaUCB degenerates to UCB($\alpha$).

\textbf{E-AdaUCB.} In practice, the load distribution may be unknown {\it a priori} and may  change over time. To address this issue,
we propose a variant, named Empirical-AdaUCB (E-AdaUCB), which adjusts the thresholds $l^{(-)}$ and $l^{(+)}$ based on the empirical load distribution. Specifically, the algorithm maintains the histogram for the load levels (or its moving average version for non-stationary cases), and then select $l^{(-)}$ and $l^{(+)}$ accordingly. For example, we can select $l^{(-)}$ and $l^{(+)}$ such that the empirical probability $\mathbb{\tilde{P}}\{L_t \leq l^{(-)}\} =  \mathbb{\tilde{P}}\{L_t \geq l^{(+)}\} = 0.05$. We can see that,  in most  simulations,  E-AdaUCB performs closely to AdaUCB with thresholds chosen offline.

\section{Experiments} \label{sec:sim_results}

\begin{figure*}[thbp]
\begin{center}
\begin{minipage}{0.66\textwidth}
\begin{center}
\subfigure[Binary-valued load]{\includegraphics[angle = 0,width = 0.47\linewidth]{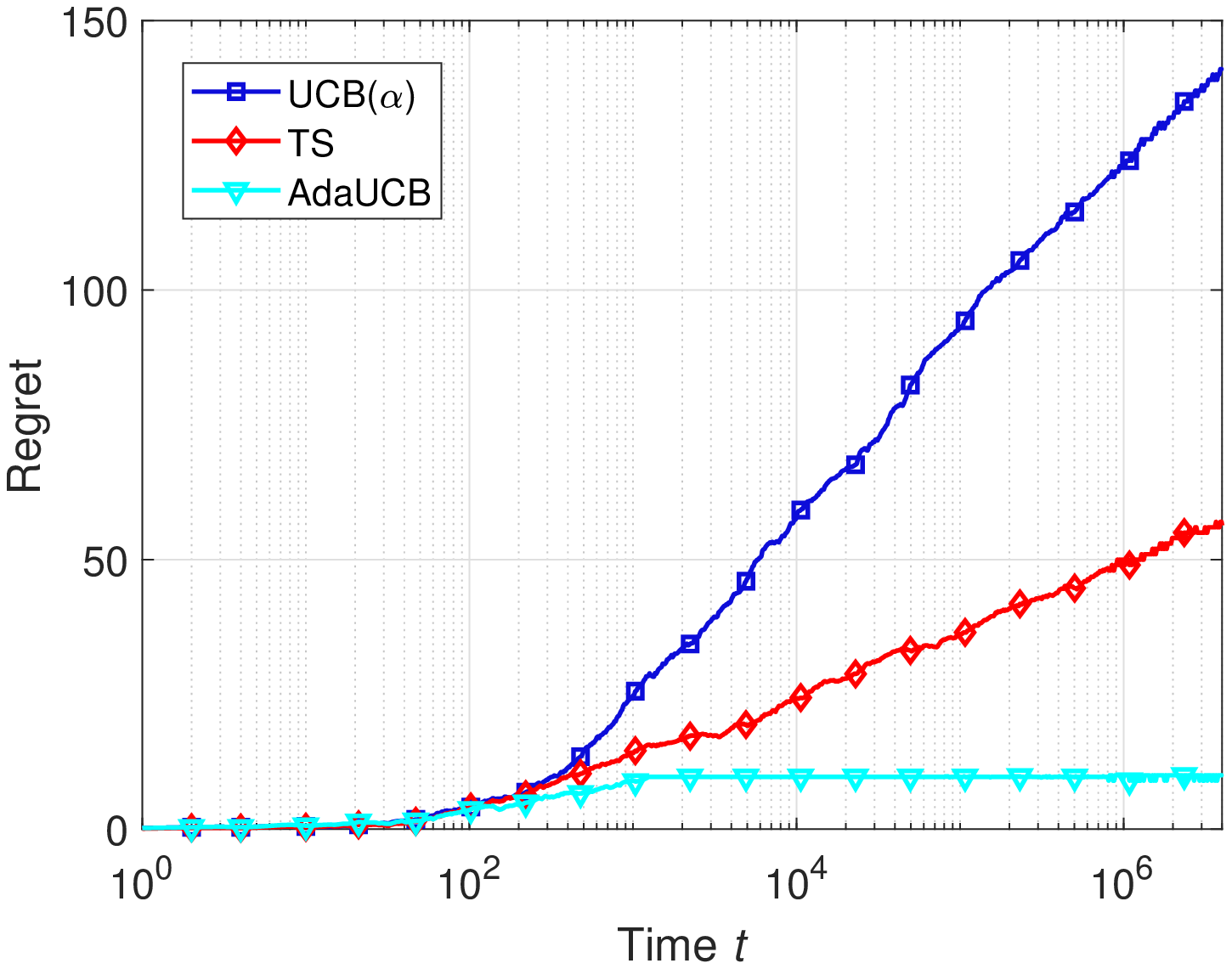}
\label{fig:BinaryValue_load}}
\subfigure[Beta distributed load]
{\includegraphics[angle = 0,width = 0.47\linewidth]{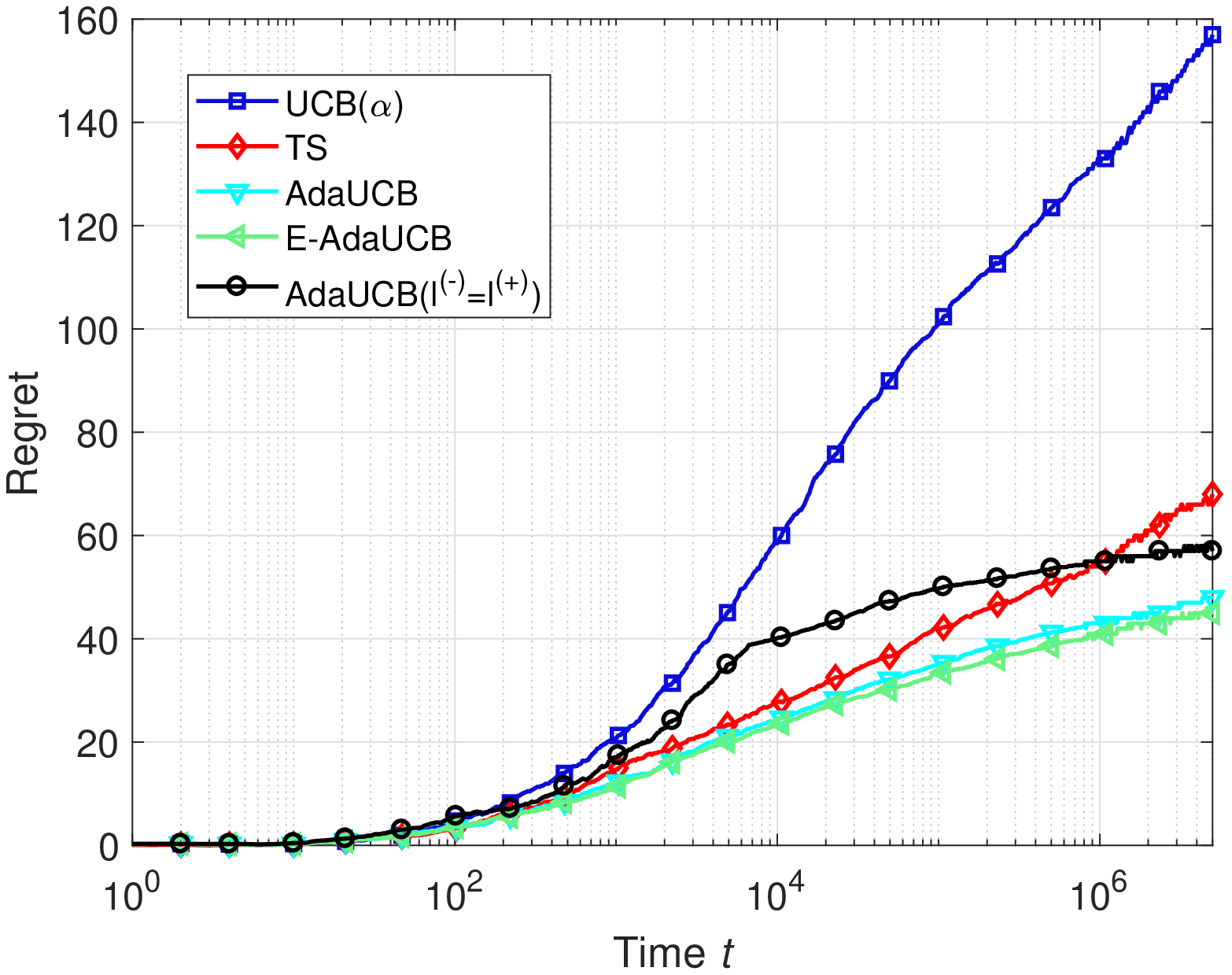}
\label{fig:beta_load}}
\vspace{-.25cm}
\caption{Regret under Synthetic Scenarios. In (a), $\epsilon_0 = \epsilon_1 = 0,  \rho = 0.5$. In (b), for AdaUCB, $l^{(-)} = l^{(-)}_{0.05}$,  $l^{(+)} = l^{(+)}_{0.05}$;
for AdaUCB($l^{(-)} =l^{(+)}$), $l^{(-)} =l^{(+)} = l^{(-)}_{0.05}$.}
\label{fig:synthetic_scenarios}
\end{center}
\end{minipage}
\begin{minipage}{0.33\textwidth}
\begin{center}
{\includegraphics[angle = 0,width = 0.94\linewidth]{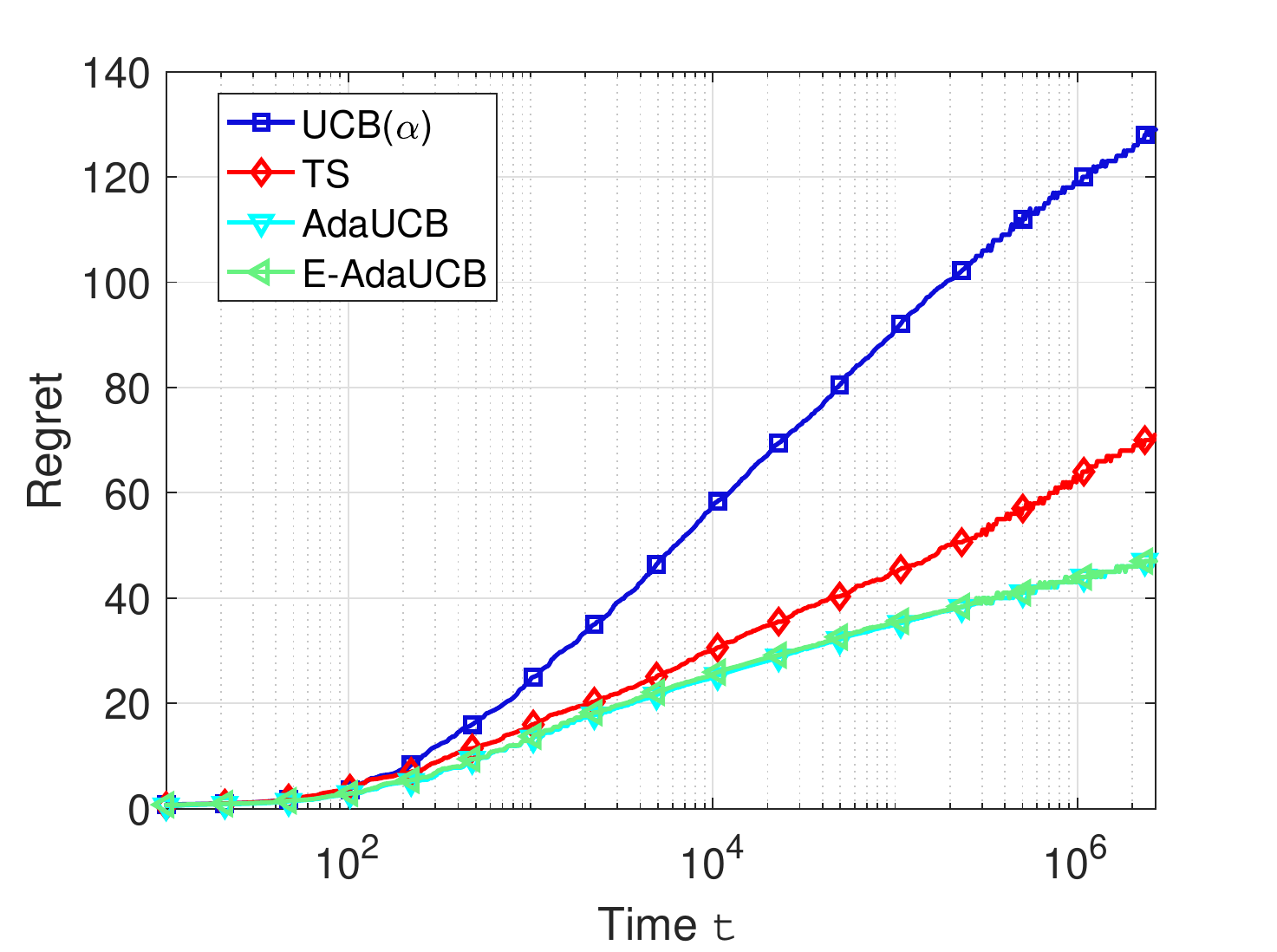}
\vspace{0.25cm}
\caption{Regret in MVNO systems.}
\vspace{0.25cm}
\label{fig:regret_mvno}}
\end{center}
\end{minipage}
\end{center}
\vspace{-0.45cm}
\end{figure*}

In this section, we evaluate the performance of AdaUCB using both synthetic data and real-world traces. We use the classic UCB$(\alpha)$ and TS (Thompson Sampling) algorithms as comparison baselines. In both AdaUCB and UCB$(\alpha)$, we set $\alpha$ as $\alpha = 0.51$, which is close to $1/2$ and performs better than a larger $\alpha$. We note that the gap between AdaUCB and the classic UCB$(\alpha)$ clearly demonstrates the impact of opportunistic learning.  On the other hand, TS is one of the most popular and robust bandit algorithms applied  to a wide range of application scenarios. So we apply it here as a reference. However, because AdaUCB and TS (or other bandit algorithms) improve UCB on different fronts, so their comparison does not clearly show the impact of opportunistic bandit. 

\textbf{AdaUCB under synthetic scenarios.}
We consider a 5-armed bandit with Bernoulli rewards, where the expected reward vector is $[0.05,0.1,0.15,0.2,0.25]$. 
Fig.~\ref{fig:BinaryValue_load} shows the regrets for different algorithms under random binary-value load with $\epsilon_0 = \epsilon_1 = 0$  and $\rho = 0.5$. AdaUCB significantly reduces the regret in opportunistic bandits. 
Specifically, the exploration cost in this case can be zero and AdaUCB achieves $O(1)$ regret. 
For continuous load, Fig.~\ref{fig:beta_load} shows the regrets for different algorithms with beta distributed load. AdaUCB still outperforms the UCB($\alpha$) or TS algorithms. Here, we define $l^{(-)}_{\rho}$ as the lower threshold such that $\mathbb{P}\{L_t \leq l^{(-)}_{\rho}\}=\rho$, and $l^{(+)}_{\rho}$ as the upper threshold such that $\mathbb{P}\{L_t \geq l^{(+)}_{\rho}\}=\rho$. 
These simulation results demonstrate that, with appropriately chosen parameters, the proposed AdaUCB and E-AdaUCB algorithms achieve good performance by leveraging the load fluctuation in opportunistic bandits. As a special case, with a single threshold $l^{(+)} =l^{(-)} = l^{(-)}_{0.05}$, AdaUCB still outperforms UCB($\alpha$) and TS, although it may have higher regret at the beginning.  
More simulation results can be found in Appendix~\ref{app:synthetic_simu}, where we study the impact of environment and algorithm parameters such as load fluctuation and the thresholds for load truncation. In particular, the results show that AdaUCB works well in continuous load when $\rho$ is very small. 

\textbf{AdaUCB applied in MVNO systems.}
We now evaluate the proposed algorithms using real-world traces. In an MVNO  (Mobile Virtual Network Operator) system, a virtual operator, such as Google Fi~\cite{googlefi}, provides services to users by leasing  network resources from real mobile operators. In such a system, the virtual operator would like to provide its users high quality service by accessing the network resources of the real operator with the best network performance. Therefore, we view each real mobile operator as an arm, and the quality of user experienced on that operator network as the reward.
We use  experiment data from  Speedometer~\cite{speedometer} and another anonymous operator to conduct the evaluation. More details about the MVNO system can be found in Appendix \ref{app:mvno}. 
Here, using insights obtained from simulations based on the synthetic data, we choose $l^{(-)}$ and $l^{(+)}$ such that $\mathbb{P}\{L_t \leq l^{(-)}\} = \mathbb{P}\{L_t \geq l^{(+)}\} = 0.05$.
As shown in Fig.~\ref{fig:regret_mvno}, the regret of AdaUCB is only about 1/3 of UCB$(\alpha)$, and the performance of E-AdaUCB is indistinguishable from that of AdaUCB. 
This experiment demonstrates the effectiveness of AdaUCB and E-AdaUCB in practical situations, where the load and the reward are continuous and are possibly non-stationary. It also demonstrates the practicality of E-AdaUCB without {\it a priori} load distribution information.


%
\section{Conclusions and Future Work} \label{sec:conclusion}
In this paper we study  opportunistic bandits  where the regret of pulling a suboptimal arm depends on  external conditions such as traffic load or produce price.
We propose   AdaUCB   that opportunistically chooses between exploration and exploitation based on the load level, i.e., taking the slots with low load level as opportunities for more explorations. We analyze the regret of AdaUCB, and show that AdaUCB can achieve provable lower regret than the traditional UCB algorithm, and even $O(1)$ regret with respect to time horizon $T$, under certain conditions. Experimental results based on both synthetic and real data  demonstrate the significant benefits of opportunistic exploration under large load fluctuations.

This work is a first attempt to study opportunistic bandits, and   several open questions remain. First, although AdaUCB achieves  promising experimental performance under  general settings, rigorous analysis with tighter performance bound remains challenging.  
Furthermore,  opportunistic TS-type  algorithms are also interesting because TS-type algorithms often performs better than UCB-type algorithms in practice. Last, we hope to investigate more general relations between the load and actual reward.


\section*{Acknowledgements}
This research was supported in part by NSF Grants  CCF-1423542, CNS-1547461, CNS-1718901. 
The authors would like to thank Prof.~Peter Auer (University of Leoben) for his helpful suggestions, 
and the reviewers for their valuable feedback.

\bibliographystyle{icml2018}
\bibliography{OnlineOpt,mypublications_0802,math}

\clearpage
\newpage
\pagebreak

\appendix

\section*{Appendices}

\section{AdaUCB under Dirac Rewards} \label{app:analysis_dirac}

\subsection{Proof of Lemma~\ref{thm:adaUCB_dirac_bounds}}

\textbf{1) Upper Bound}

We can verify that the conclusion holds for $t = 1$ and $2$, because $C_2(1) = 0$, $C_2(2) = 1$, but $\frac{\alpha \log \tau}{\Delta^2}+1 \geq 1$. For $t \geq 3$, we show the result by contradiction.
If the conclusion is false, then there exists a $\tau \geq 3$ such that  $C_2(\tau -1) \leq \frac{\alpha \log (\tau-1)}{\Delta^2} + 1$ but $C_2(\tau) > \frac{\alpha \log \tau}{\Delta^2} + 1$. Because $\log \tau > \log(\tau -1)$, we know that arm-2 is pulled in slot $\tau$.
Thus, if $L_{\tau} = \epsilon_0$,
\begin{equation}
u_1 +\sqrt{\frac{\alpha \log \tau}{C_1(\tau-1)}} \leq u_2 + \sqrt{\frac{\alpha \log \tau}{C_2(\tau-1)}},
\end{equation}
indicating that $\Delta = u_1 - u_2 < \sqrt{\frac{\alpha \log \tau}{C_2(\tau -1)}}$, and thus $C_2(\tau -1) < \frac{\alpha \log \tau}{\Delta^2}$.
Then $C_2(\tau) \leq C_2(\tau-1)+1 < \frac{\alpha \log \tau}{\Delta^2}+1$. Similarly, if $L_{\tau} = 1-\epsilon_1$, we have $C_2(\tau-1) < \frac{\alpha \epsilon_1 \log \tau}{(1-\epsilon_0)\Delta^2} < \frac{\alpha \log \tau}{\Delta^2}$
and $C_2(\tau) \leq C_2(\tau-1)+1 < \frac{\alpha \log \tau}{\Delta^2}+1$. This contradicts the definition of $\tau$ and completes the proof of Part 1.

\textbf{2) Lower Bound}

Note that $C_2(2t) \geq 0$. For $t = 1$, $f(1) = \int_2^1 \min(h'(s),1){\rm d}s - h(2) = - \int_1^2 \min(h'(s),1){\rm d}s - h(2)	< -h(2) < 0$, by noting that $h'(s)>0, \forall s\in[1,2]$.
For $t = 2$, $f(2) = -h(2) < 0$. Thus, the conclusion is true when $t = 1$ and $t = 2$. For $t \geq 3$, we prove the conclusion by contradiction. If the conclusion is false, then there exists $\tau \geq 3$,
such that $C_2(2(\tau-1)) \geq f(\tau-1)$ but $C_2(2\tau) < f(\tau)$. Noticing that for $s > 2$, we have $f'(s) \in [0,1]$, and thus $f(\tau) \leq f(\tau-1) + 1$. Therefore,  $C_2(2\tau) < C_2(2(\tau-1))+ 1$, indicating that $C_2(2\tau) = C_2(2\tau-1) = C_2(2(\tau-1))$.
Hence, arm-1 is pulled at $2\tau-1$ and $2\tau$. In particular, at time $2\tau$, we have
\begin{equation}
u_1 +\sqrt{\frac{\alpha \log (2\tau)}{2\tau-1 - C_2(2\tau-1)}} \geq u_2 + \sqrt{\frac{\alpha \log (2\tau)}{C_2(2\tau-1)}}.
\end{equation}
This implies that
\begin{eqnarray}
\frac{\Delta}{\sqrt{\alpha \log \tau}} &\geq& \frac{\Delta}{\sqrt{\alpha \log (2\tau)}}  \nonumber \\
&\geq& \frac{1}{\sqrt{C_2(2\tau-1)}} - \frac{1}{\sqrt{2\tau-1-C_2(2\tau-1)}} \nonumber
\end{eqnarray}
On the other hand, one can easily show that for a sufficiently large $\tau$, we have $C_2(2\tau-1) \leq (2\tau-1)/2$ and thus $\frac{1}{\sqrt{2\tau-1-C_2(2\tau-1)}} \leq \frac{\sqrt{2}}{\sqrt{2\tau-1}}$. Consequently,
\begin{eqnarray}
\frac{\Delta}{\sqrt{\alpha \log \tau}} \geq \frac{1}{\sqrt{C_2(2\tau-1)}} - \frac{\sqrt{2}}{\sqrt{2\tau-1}}.
\end{eqnarray}
Thus, $C_2(2\tau) = C_2(2\tau-1) \geq h(\tau)$. Moreover, $h(\tau) \geq f(\tau)$ because $\min\{h'(t), 1\} \leq h'(t)$ and $h(2) > 0$, indicating $C_2(2\tau) \geq f(\tau)$. This contradicts the definition of $\tau$ and completes the proof of Part 2.

\subsection{Proof of Theorem~\ref{thm:adaUCB_dirac_regret}}

To prove Theorem~\ref{thm:adaUCB_dirac_regret}, we first show the following lemma using the lower bound in Lemma~\ref{thm:adaUCB_dirac_bounds} , which indicates that we have sufficient explorations when $L_t = 1 - \epsilon_1$ for sufficiently large $t$, and AdaUCB will always pull the better arm when $L_t = 1- \epsilon_1$ after a certain time,

\begin{lemma} \label{thm:adaUCB_dirac_exploit}
In the opportunistic bandit with periodic square wave load and Dirac rewards, there exists a constant $T_1$ independent of $T$ such that  under AdaUCB, $a_t = 1$ when $L_t = 1-\epsilon_1$ for $t \geq T_1$.
\end{lemma}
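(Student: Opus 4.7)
The plan is to show that, on any sufficiently late odd slot (when $L_t = 1-\epsilon_1$), the AdaUCB exploration bonus for the suboptimal arm shrinks below the gap $\Delta$, so AdaUCB selects arm $1$. First I would observe that under Dirac rewards, once each arm has been pulled at least once (which happens in the initialization), $\bar{u}_k(t) = u_k$ exactly. Therefore on an odd slot $t$ (where $L_t = 1-\epsilon_1$) the AdaUCB index simplifies to
\begin{equation}
\hat{u}_k(t) = u_k + \sqrt{\tfrac{\alpha\,\epsilon_1 \log t}{(1-\epsilon_0)\,C_k(t-1)}}. \nonumber
\end{equation}
A sufficient condition for $a_t = 1$ is $\sqrt{\tfrac{\alpha\,\epsilon_1 \log t}{(1-\epsilon_0)\,C_2(t-1)}} \leq \Delta$, i.e.
\begin{equation}
C_2(t-1) \;\geq\; \frac{\alpha\,\epsilon_1 \log t}{(1-\epsilon_0)\,\Delta^2}. \nonumber
\end{equation}

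Next I would invoke Part~2 of Lemma~\ref{thm:adaUCB_dirac_bounds}. For any odd $t = 2\tau+1$, one has $C_2(t-1) = C_2(2\tau) \geq f(\tau)$, where $f(\tau) = \int_2^\tau \min(h'(s),1)\,ds - h(2)$. I would then extract the asymptotics of $f(\tau)$: since $h(s)\sim \frac{\alpha \log s}{\Delta^2}$, its derivative $h'(s) \to 0$ as $s \to \infty$, so there is a finite $s_0$ (depending only on $\alpha$ and $\Delta$) beyond which $\min(h'(s),1) = h'(s)$. Splitting the integral at $s_0$ gives $f(\tau) = h(\tau) - h(s_0) + C - h(2)$, where $C = \int_2^{s_0}\min(h'(s),1)\,ds$ is a finite constant. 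Hence $f(\tau) = \frac{\alpha \log \tau}{\Delta^2}(1 - o(1))$ as $\tau \to \infty$.

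The key quantitative comparison is then straightforward. Let $\beta = \frac{\epsilon_1}{1-\epsilon_0}$; since $\epsilon_0,\epsilon_1 \in [0,0.5)$ we have $1-\epsilon_0 > 0.5 > \epsilon_1$, so $\beta < 1$ strictly. The sufficient condition becomes
\begin{equation}
\frac{\alpha \log \tau}{\Delta^2}\,(1 - o(1)) \;\geq\; \beta \cdot \frac{\alpha \log(2\tau+1)}{\Delta^2}, \nonumber
\end{equation}
which holds for all $\tau$ large enough because $\log(2\tau+1)/\log\tau \to 1$ and there is a strictly positive slack $1-\beta > 0$. Choosing $T_1$ as the resulting threshold (plus, if needed, enough buffer to cover the initialization phase and any even $t$ for which the claim is vacuous), we obtain a constant that depends only on $\alpha,\Delta,\epsilon_0,\epsilon_1$ and not on $T$.

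The main obstacle, though mild, is making the above asymptotic rigorous in a self-contained way: namely, verifying that the subtracted terms $h(s_0), h(2)$ and the deficit created by the clamp $\min(h'(s),1)$ on an initial bounded interval contribute only $O(1)$ and do not steal the $\log$-growth of $f(\tau)$, and that the strict gap $1-\beta > 0$ is what purchases a threshold independent of $T$. Once the lower bound $f(\tau) = \frac{\alpha \log \tau}{\Delta^2}(1-o(1))$ is in hand, the remainder is a routine comparison of two logarithmic quantities with a strict ratio gap.
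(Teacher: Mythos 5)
Your proposal is correct and follows essentially the same route as the paper's proof: both rest on the lower bound $C_2(2\tau)\geq f(\tau)$ from Part~2 of Lemma~\ref{thm:adaUCB_dirac_bounds}, the observation that $f(\tau)$ grows like $\frac{\alpha\log\tau}{\Delta^2}$ up to lower-order terms, and the strict gap $\frac{\epsilon_1}{1-\epsilon_0}<1$ together with $\log(2\tau+1)/\log\tau\to 1$ to conclude that the high-load exploration bonus of arm~2 falls below $\Delta$ after a constant time $T_1$. Your handling of the clamp $\min(h'(s),1)$ by splitting the integral at a finite $s_0$ is, if anything, slightly more careful than the paper's shorthand, but it is the same argument.
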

\begin{proof} 
According to the load we considered here, $L_t = 1-\epsilon_1$  when $t$ is an odd number, and thus $t$ can be represented as $t = 2\tau + 1$, where $\tau = (t-1)/2$. With the setting of  $0 \leq \epsilon_0, \epsilon_1 < 0.5$, we have $\frac{1-\epsilon_0}{\epsilon_1} > 1$. On the other hand, we can verify that in Part 2 of Lemma~\ref{thm:adaUCB_dirac_bounds}, $h'(\tau) < 1$ for sufficiently large $\tau$, indicating that $f(\tau) = h(\tau) - 2h(2) =   \frac{\alpha }{\Delta^2}\big[\log \tau \big(1+\sqrt{\frac{2\alpha\log \tau}{(2\tau-1)\Delta^2}}\big)^{-2} - 2h(2)\Delta^2/\alpha\big]$. Moreover, noting that $\frac{\log(2\tau+1)}{\log\tau + O(1)} \to 1$ and $\frac{\log \tau}{2\tau-1} \to 0$ as $\tau \to \infty$, 
we know that there exists a number $T_1$ such that for $t = 2\tau + 1 \geq T_1$,
\begin{equation}
\frac{\log (2\tau+1)}{\log \tau \bigg(1 + \sqrt{\frac{2\alpha\log \tau}{(2\tau -1)\Delta^2}} \bigg)^{-2}  - \frac{2h(2) \Delta^2}{\alpha}}< \frac{1-\epsilon_0}{\epsilon_1}.
\end{equation}
Thus, according to Part 2 of Lemma~\ref{thm:adaUCB_dirac_bounds}, we have $C_2(2\tau) \geq f(\tau) = \frac{\alpha}{\Delta^2}\big[\log \tau \big(1 + \sqrt{\frac{2\alpha\log \tau}{(2\tau -1)\Delta^2}} \big)^{-2}  - \frac{2h(2) \Delta^2}{\alpha}\big] $, and thus for $t \geq T_1$,
\begin{eqnarray}
\hat{u}_2(t) = u_2 + \sqrt{\frac{\alpha\epsilon_1\log(2\tau+1)}{(1-\epsilon_0)C_2(2\tau)}} < u_1 < \hat{u}_1(t).
\end{eqnarray}
This implies the conclusion of this lemma, i.e., $a_t = 1$ when $L_t = 1-\epsilon_1$ for $t \geq  T_1$.
\end{proof}

The conclusion of Theorem~\ref{thm:adaUCB_dirac_regret} can be obtained by combining Lemma \ref{thm:adaUCB_dirac_exploit} and Part 1 of Lemma~\ref{thm:adaUCB_dirac_bounds}.

\section{AdaUCB under Random Binary-Valued Load and Random Rewards} \label{app:analysis_randomload}

\subsection{Proof of Lemma~\ref{thm:lower_bound_opt_arm_randomload}}

Let  $T_{\epsilon_0}(t) = \sum_{\tau = 1}^t \mathbbm{1}(L_{\tau} = \epsilon_0)$ be the total number of low-load slots up to $t$. Then 
\begin{eqnarray}
&&\mathbb{P}\big\{C_1^{(0)}(t) < \frac{(\rho-\eta) t}{2}\big\} \nonumber \\
& = & \mathbb{P}\big\{C_1^{(0)}(t) < \frac{(\rho -\eta)  t}{2}, T_{\epsilon_0}(t)  < (\rho - \eta) t \big\}  \nonumber \\
&& +  \mathbb{P}\big\{C_1^{(0)}(t) < \frac{(\rho -\eta) t}{2},  T_{\epsilon_0}(t)  \geq (\rho - \eta) t  \big\}.  
\end{eqnarray}
For the first term, we know that
\begin{eqnarray} \label{eq:low_load_slots_bound}
&& \mathbb{P}\big\{C_1^{(0)}(t) < \frac{(\rho -\eta)  t}{2}, T_{\epsilon_0}(t)  < (\rho - \eta) t \big\}  \nonumber \\
&\leq & \mathbb{P}\big\{T_{\epsilon_0}(t)  < (\rho - \eta) t \big\} \leq e^{-2\eta^2 t}.
\end{eqnarray}

For the second term, we note that $C_1^{(0)}(t) < \frac{(\rho -\eta)  t}{2}$  and  $T_{\epsilon_0}(t)  \geq (\rho - \eta) t$  indicate that at least one of the suboptimal arms have been pulled for at least $\lfloor \frac{(\rho -\eta)  t}{2(K-1)} \rfloor$ times in low-load slots. Let $k' > 1$ be the arm that has been pulled for the most times among the suboptimal arms before $t$, and let $t' < t$ be the last slot when arm $k'$ is pulled under $L_{t'} = \epsilon_0$. Obviously, $t' \geq \lfloor \frac{(\rho -\eta)  t}{2(K-1)} \rfloor +K-1$ as $K-1$ slots of the first $K$ slots are assigned to other arms.
Because arm-$k'$ is pulled at time $t'$, we have
\begin{eqnarray}
\hat{u}_{k'}(t') &=& \bar{u}_{k'}(t') + \sqrt{\frac{\alpha \log (t')}{C_{k'}(t'-1)}} \nonumber \\
&\geq& \hat{u}_1(t') = \bar{u}_1(t') + \sqrt{\frac{\alpha \log (t')}{C_1(t'-1)}}.
\end{eqnarray}
This implies that either $\hat{u}_1(t') < u_1$ or $\hat{u}_{k'}(t') \geq u_1$ is true. However, for a fixed $t' \geq \lfloor \frac{(\rho -\eta)  t}{2(K-1)} \rfloor +K-1$, using Chernoff-Hoeffding bound and considering all possible values of $C_{k'}(t'-1)$, we have
\begin{eqnarray}
\mathbb{P}\{ \hat{u}_1(t') < u_1\} &\leq& t' e^{-2\alpha \log t'} = (t')^{-2\alpha + 1}.
\end{eqnarray}
Moreover, because $\frac{\log t}{\lfloor \frac{(\rho -\eta)  t}{2(K-1)} \rfloor - 1} \to 0$ as $t \to \infty$, there exists a $T_2$ independent of $k'$, such that for $t \geq T_2$, $\sqrt{\frac{\alpha\log (t')}{\lfloor \frac{(\rho -\eta)  t}{2(K-1)} \rfloor -1}} \leq \Delta^*/2 \leq \Delta_{k'}/2$.
Thus,
\begin{eqnarray}
\mathbb{P}\{ \hat{u}_{k'}(t')\geq u_1\} &\leq& \mathbb{P}\big\{ \bar{u}_{k'}(t') \geq u_{k'} + \sqrt{\frac{\alpha\log (t')}{C_{k'}(t' - 1)}} \big\} \nonumber \\
     &\leq& (t')^{-2\alpha+1}.
\end{eqnarray}
Considering all possible values of $t'$ and $k'$, we have 
\begin{eqnarray}\label{eq:opt_arm_upper_bound_ovall_all_k}
&&\mathbb{P}\big\{C_1^{(0)}(t) < \frac{(\rho -\eta) t}{2},  T_{\epsilon_0}(t)  \geq (\rho - \eta) t  \big\}  \nonumber \\
&\leq & \sum_{k' = 2} ^ {K} \sum_{t' =  \lfloor \frac{(\rho -\eta)  t}{2(K-1)} \rfloor +K-1}^{t} 2 (t')^{-2\alpha+1} \nonumber \\
&\leq&  \frac{[2(K-1)]^{2\alpha-1}}{2\alpha - 2} \big[(\rho - \eta)t \big]^{-2\alpha + 2}.
 \end{eqnarray}

The conclusion of the lemma then follows by combining Eqs~\eqref{eq:low_load_slots_bound} and \eqref{eq:opt_arm_upper_bound_ovall_all_k}.

\subsection{Proof of Lemma~\ref{thm:random_reward_general_upperbound_randomload}}

We can prove this lemma by borrowing ideas from the analysis for the traditional UCB policies \cite{Auer2002ML:UCB,Bubeck2010PhD:bandits}, except that we need more conditions to bound the UCB for the optimal arm when the load level is $L_t = 1-\epsilon_1$. Specifically, we analyze the probabilities for load levels $\epsilon_0$ and $1-\epsilon_1$, respectively.

\textbf{Case 1: $L_t = \epsilon_0$}

When the load is $L_t = \epsilon_0$, AdaUCB makes decisions according to the indices $\hat{u}_k(t) = \bar{u}_k(t) + \sqrt{\frac{\alpha\log t}{C_k(t-1)}}$ for all $k$'s, which is the same as traditional UCB. Thus, a suboptimal arm $k$ ($k > 1$) can be pulled when at least one of the following events is true:
\begin{equation}\label{eq:better_arm_slot0}
\hat{u}_1(t) \leq u_1,
\end{equation}
\begin{equation}\label{eq:worse_arm_slot0}
\hat{u}_k(t) > u_k + 2\sqrt{\frac{\alpha\log t}{C_k(t-1)}},
\end{equation}
and
\begin{equation}
C_k(t-1) < \frac{4\alpha\log T}{\Delta_k^2}.
\end{equation}
Then we can bound the probabilities for the events \eqref{eq:better_arm_slot0} and \eqref{eq:worse_arm_slot0} according to \cite{Auer2002ML:UCB}.

\textbf{Case 2: $L_t = 1-\epsilon_1$}

When the load is $L_t = 1 - \epsilon_1$, AdaUCB chooses the arm according to the indices  $\hat{u}_k(t) = \bar{u}_k(t) + \sqrt{\frac{\alpha\epsilon_1\log t}{(1-\epsilon_0)C_k(t-1)}}$ for all $k$'s. Thus, pulling the suboptimal arm requires one of the following events is true:
\begin{equation}\label{eq:better_arm_slot1}
\hat{u}_1(t) \leq u_1 - \big(1-\sqrt{\frac{\epsilon_1}{1-\epsilon_0}}\big)\frac{\Delta_k}{2},
\end{equation}
\begin{equation}\label{eq:worse_arm_slot1}
\hat{u}_k(t) > u_k + \big(1+\sqrt{\frac{\epsilon_1}{1-\epsilon_0}}\big)\sqrt{\frac{\alpha\log t}{C_k(t-1)}},
\end{equation}
and
\begin{equation}
C_k(t-1) < \frac{4\alpha\log T}{\Delta_k^2}.
\end{equation}
In fact, if all the above events are false, we have
\begin{eqnarray}
\hat{u}_1(t) &>& u_1 - \big(1-\sqrt{\frac{\epsilon_1}{1-\epsilon_0}}\big)\frac{\Delta_k}{2} \nonumber \\
&=& u_k + \big(1+\sqrt{\frac{\epsilon_1}{1-\epsilon_0}}\big)\frac{\Delta_k}{2} \nonumber \\
&\geq& u_k + \big(1+\sqrt{\frac{\epsilon_1}{1-\epsilon_0}}\big)\sqrt{\frac{\alpha\log t}{C_k(t-1)}} \geq \hat{u}_k(t),\nonumber
\end{eqnarray}
indicating that arm-1 will be pulled. Now we only need to bound the probability of events \eqref{eq:better_arm_slot1} and \eqref{eq:worse_arm_slot1}.

For event \eqref{eq:better_arm_slot1}, since we use a smaller exploration factor than the traditional UCB, we require a sufficiently large $C_1(t-1)$ to guarantee that $\hat{u}_1(t)$ is close enough to $u_1$. Specifically, because $\frac{\log t}{t} \to 0$ as $t \to \infty$, there exists a constant number $T_{3,k}$ such that $\sqrt{\frac{\alpha\log t}{(1-\eta)\rho t/2}} \leq \frac{\Delta_k}{2}$ for $t \geq T_{3,k}$. Thus, for $t \geq T_{3,k}$, we have
\begin{eqnarray}
&&\mathbb{P}\bigg\{\hat{u}_1(t) \leq u_1 - \big(1-\sqrt{\frac{\epsilon_1}{1-\epsilon_0}}\big)\frac{\Delta_k}{2}\bigg\}\nonumber \\
&\leq & \mathbb{P}\bigg\{\hat{u}_1(t) \leq u_1 - \big(1-\sqrt{\frac{\epsilon_1}{1-\epsilon_0}}\big)\frac{\Delta_k}{2}, \nonumber \\
&& ~~C_1(t-1) \geq \frac{(\rho-\eta)  t}{2}\bigg\} \nonumber \\
&& + \mathbb{P}\bigg\{C_1(t-1) <\frac{(\rho-\eta) t}{2} \bigg\} \nonumber \\
&\leq & \mathbb{P}\bigg\{\hat{u}_1(t) \leq u_1 - \big(1-\sqrt{\frac{\epsilon_1}{1-\epsilon_0}}\big)\sqrt{\frac{\alpha\log t}{C_1(t-1)}}\bigg\} \nonumber \\
&& + \mathbb{P}\bigg\{C_1(t-1) < \frac{(\rho-\eta) t}{2}\bigg\}\nonumber \\
&\leq& \mathbb{P}\bigg\{\bar{u}_1(t) \leq u_1 - \sqrt{\frac{\alpha\log t}{C_1(t-1)}}\bigg\} \nonumber \\
&&+ \mathbb{P}\bigg\{C_1(t-1) < \frac{(\rho-\eta) t}{2}\bigg\}.\nonumber
\end{eqnarray}
Using Chernoff-Hoeffding bound and considering all possible values of $C_1(t-1)$, we have
\begin{eqnarray}
&& \mathbb{P}\bigg\{\bar{u}_1(t) \leq u_1 - \sqrt{\frac{\alpha\log t}{C_1(t-1)}}\bigg\} \leq t ^{-2\alpha +1}.\nonumber
\end{eqnarray}
Moreover, we can bound $\mathbb{P}\big\{C_1(t-1) < \frac{(\rho-\eta) t}{2}  \big\}$ according to Lemma~\ref{thm:lower_bound_opt_arm_randomload}. Thus
\begin{eqnarray}
&&\mathbb{P}\bigg\{\hat{u}_1(t) \leq u_1 - \big(1-\sqrt{\frac{\epsilon_1}{1-\epsilon_0}}\big)\frac{\Delta_k}{2}\bigg\} \nonumber \\
&\leq&  t ^{-2\alpha +1}+ e^{-2\eta^2 t} + \frac{[2(K-1)]^{2\alpha-1}}{2\alpha - 2} \big[(\rho - \eta)t \big]^{-2\alpha + 2}. \nonumber
\end{eqnarray}

For event \eqref{eq:worse_arm_slot1}, it is equivalent to $\{\bar{u}_k(t) > u_k + \sqrt{\frac{\alpha\log t}{C_k(t-1)}}\}$. Similar to the analysis of $\bar{u}_1(t)$, we have
\begin{eqnarray}
 \mathbb{P}\bigg\{\hat{u}_k(t) > u_k + \big(1+\sqrt{\frac{\epsilon_1}{1-\epsilon_0}}\big)\sqrt{\frac{\alpha\log t}{C_k(t-1)}}\bigg\} \leq t^{-2\alpha+1}.\nonumber
\end{eqnarray}

Combining the above two cases over all $t$, and noticing that 
\begin{eqnarray}
&&\sum_{t=1}^T \! \big[2t ^{-2\alpha+1} \! + e^{-2\eta^2 t} + \frac{[2(K-1)]^{2\alpha-1}}{2\alpha - 2} \big[(\rho - \eta)t \big]^{-2\alpha + 2}\big] \nonumber \\
&\leq & \bigg[1+ \frac{1}{2\alpha - 2} \bigg] + \frac{[2(K-1)]^{2\alpha-1} (\rho - \eta)^{-2\alpha + 2}}{2\alpha - 2} \big(1+ \frac{1}{2\alpha - 3}\big)  \nonumber \\
&& + \frac{e^{-2\eta^2}}{1-e^{-2\eta^2}} 
= O(1),
\end{eqnarray}
we obtain the conclusion of this lemma.

\subsection{Proof of Lemma~\ref{thm:random_reward_lowerbound_randomload}}

Let $t' \leq t$ be the last slot before $t$ when the optimal arm (arm-1) is pulled under load level $L_t = \epsilon_0$.
According to Lemma~\ref{thm:lower_bound_opt_arm_randomload}, we have,
\begin{eqnarray} \label{eq:last_optimal_pull_smaller_randomload}
&&\mathbb{P}\{t' < (\rho-\eta) t/2\} \nonumber \\
&\leq&  \mathbb{P}\big\{C_1^{(0)}(t) < \frac{(\rho-\eta) t}{2}\big\}  \nonumber  \\
&\leq& e^{-2\eta^2 t} + \frac{[2(K-1)]^{2\alpha - 1}}{2\alpha - 2} \big[(\rho - \eta)t \big]^{-2\alpha + 2}.
\end{eqnarray}

Now we focus on the case where $t' \geq (\rho-\eta) t/2$.
Again according to Lemma~\ref{thm:lower_bound_opt_arm_randomload}, we know that $C_1(t') \geq C_1^{(0)}(t') \geq (\rho-\eta) t'/2$ with high probability. Because $\frac{\log t'}{(\rho-\eta) t'/2 - 1}\to 0$, there exists a constant $T_4$, such that for any $t' \geq T_4$, we have $\sqrt{\frac{\alpha \log t'}{C_1(t'-1)}} \leq \delta/2$ if $C_1(t'-1) \geq (\rho-\eta) t'/2-1$. Thus, for $t' \geq T_4$,
\begin{eqnarray}\label{eq:random_opt_arm_div_randomload}
&& \mathbb{P}\{\hat{u}_1(t') > u_1 + \delta\} \nonumber \\
&\leq& \mathbb{P}\{\hat{u}_1(t') > u_1 + 2\sqrt{\frac{\alpha\log t'}{C_1(t'-1)}}, \nonumber \\
&&~~C_1(t'-1) \geq (\rho-\eta) t'/2 -1\} \nonumber \\
&&+ \mathbb{P}\{C_1(t'-1) <  (\rho-\eta) t'/2 -1 \}\nonumber \\
&\leq& (t')^{-2\alpha + 1} + \bigg[e^{-2\eta^2 t'} +  \frac{[2(K-1)]^{2\alpha-1}}{2\alpha - 2} \big[(\rho - \eta)t' \big]^{-2\alpha + 2}\bigg]. \nonumber \\
\end{eqnarray}

Note that $C_k(t) < \frac{\alpha \log t}{4(\Delta_k+\delta)^2}$ indicates that $C_k(t'-1) < \frac{\alpha \log t}{4(\Delta_k+\delta)^2}$. Thus, for a fixed $t' \geq \frac{(\rho - \eta)t}{2}$,
\begin{eqnarray}
&& \mathbb{P}\{C_k(t) < \frac{\alpha \log t}{4(\Delta_k +\delta)^2}, t' \} \nonumber \\
&\leq& \sum_{n: n < \frac{\alpha \log t}{4(\Delta_k+\delta)^2}}\mathbb{P}\{\hat{u}_k(t') \leq \hat{u}_1(t'), C_k(t'-1) = n\} \nonumber \\
&\leq& \mathbb{P}\{\hat{u}_1(t') > u_1 + \delta\}   \nonumber \\
&& + \sum_{n: n < \frac{\alpha \log t}{4(\Delta_k+\delta)^2}}\mathbb{P}\{\hat{u}_k(t') \leq u_1 + \delta, C_k(t'-1) = n\}.\nonumber
\end{eqnarray}
We have already bounded the first term by \eqref{eq:random_opt_arm_div_randomload}. To bound the second term, we note that when $C_k(t'-1) = n  < \frac{\alpha \log t}{4(\Delta_k+\delta)^2}$, we have $\sqrt{\frac{\alpha\log t'}{C_k(t'-1)}} \geq 2(\Delta_k+\delta) \sqrt{\frac{\log t'}{\log t}} \geq (2-\delta)(\Delta_k+ \delta)$ for a sufficiently large $t$. Thus, for $n < \frac{\alpha \log t}{4(\Delta_k +\delta)^2}$,
\begin{eqnarray} \label{eq:random_subopt_arm_div_randomload}
&& \mathbb{P}\{\hat{u}_k(t') \leq u_1 + \delta, C_k(t'-1) = n\} \nonumber \\
&\leq&  \mathbb{P}\{\bar{u}_k(t') + \sqrt{\frac{\alpha\log t}{C_k(t'-1)}}\leq u_k + \Delta_k + \delta, \nonumber \\
& & \qquad C_k(t'-1) = n\} \nonumber \\
&\leq & \mathbb{P}\{\bar{u}_k(t') -u_k \leq -(1 - \frac{1}{2-\delta})\sqrt{\frac{\alpha\log t'}{C_k(t'-1)}}  \} \nonumber \\
&\leq & t' e^{-2(\frac{1-\delta}{2-\delta})^2\alpha \log t'} = (t')^{-2\alpha(\frac{1-\delta}{2-\delta})^2 + 1}.
\end{eqnarray}
Combining \eqref{eq:random_opt_arm_div_randomload} and \eqref{eq:random_subopt_arm_div_randomload}, we have
\begin{eqnarray} \label{eq:random_subopt_arm_larger_tprime}
&&\mathbb{P}\{C_k(t) < \frac{\alpha \log t}{4(\Delta_k +\delta)^2}, t' \geq \frac{(\rho - \eta)t}{2} \}\nonumber \\
&\leq&  \sum_{t' = \frac{(\rho-\eta)t}{2}}^{t}  \bigg[(t')^{-2\alpha+1} + e^{-2\eta^2 t'}  \nonumber \\
&&  +  \frac{[2(K-1)]^{2\alpha - 1}}{2\alpha - 2} \big[(\rho - \eta)t' \big]^{-2\alpha + 2} + (t')^{-2\alpha(\frac{1-\delta}{2-\delta})^2 + 1} \bigg] \nonumber \\
&\leq& \frac{1}{2\alpha -2}\big[\frac{(\rho - \eta)t}{2}\big]^{-2\alpha + 2}  + \frac{e^{-\eta^2(\rho-\eta)t}}{1-e^{-2\eta^2 }} \nonumber \\
&& + \frac{[2(K-1)]^{2\alpha-1}(\rho - \eta)^{-4\alpha + 5}}{(2\alpha -2)(2\alpha -3)}(t/2)^{-2\alpha + 3}  \nonumber \\
&&+  \frac{1}{2\alpha (\frac{1-\delta}{2-\delta})^2-2}\big[\frac{(\rho - \eta)t}{2}\big]^{-2\alpha (\frac{1-\delta}{2-\delta})^2 + 2}
\end{eqnarray} 

The conclusion then follows by combining Eqs~\eqref{eq:last_optimal_pull_smaller_randomload} and \eqref{eq:random_subopt_arm_larger_tprime}, and ignoring the lower order terms.

\subsection{Proof of Theorem~\ref{thm:adaUCB_random_regret_randomload}}

Note that we have bounded the total number of pulls of suboptimal arms by Lemma~\ref{thm:random_reward_general_upperbound_randomload}. To show this theorem, we only need to show that the number of pulls of suboptimal arms under the higher load, i.e., $L_t = 1-\epsilon_1$, is bounded by $O(1)$. To show this result, we analyze the probability of pulling an suboptimal arm $k > 1$ when $L_t = 1-\epsilon_1$.

We can easily verify that to pull the suboptimal arm $k$ under load $L_t = 1-\epsilon_1$, at least one of the following events should be true:
\begin{equation}
\hat{u}_1(t) \leq u_k  + \frac{3\Delta_k}{4},
\end{equation}
\begin{equation}
\hat{u}_k(t) > u_k + \frac{3\Delta_k}{4}.
\end{equation}

To bound the probability of the event about $\hat{u}_1(t)$, we note that $C_1(t-1) > (\rho-\eta) t/2$  with high probability and for a sufficient large $t$, we have $(1-\sqrt{\frac{\epsilon_1}{1-\epsilon_0}})\sqrt{\frac{\alpha\log t}{C_1(t-1)}} \leq \frac{\Delta_k}{4}$. Then, using a similar argument as in Lemma~\ref{thm:random_reward_general_upperbound_randomload}, we can bound the first event as:
\begin{eqnarray}
&&\mathbb{P}\big\{\hat{u}_1(t) \leq u_k  + \frac{3\Delta_k}{4}\big\} \nonumber \\
&\leq&   t ^{-2\alpha+1} + e^{-2\eta^2 t} + \frac{[2(K-1)]^{2\alpha - 1}}{2\alpha - 2} \big[(\rho - \eta)t \big]^{-2\alpha + 2}.\nonumber
\end{eqnarray}

To bound the probability of the event about $\hat{u}_k(t)$, we let $\delta \in (0,\Delta^*)$ be a number satisfying $\alpha > 16(1+\frac{\delta}{\Delta^*})^2$ and $\sqrt{\frac{\epsilon_1}{1-\epsilon_0}} < \frac{\Delta^*}{8(\Delta^* + \delta)}$ (the existence of $\delta$ is guaranteed by the range of $\alpha$ and $\sqrt{\frac{\epsilon_1}{1-\epsilon_0}}$).
Note that when $C_k(t-1) \geq \frac{\alpha\log (t-1)}{4(\Delta_k+\delta)^2}$, which occurs with high probability according to Lemma~\ref{thm:random_reward_lowerbound_randomload}, we have $\sqrt{\frac{\alpha\log t}{\zeta C_k(t-1)}} \leq \Delta^*/2\leq \Delta_k/2$ (where $\zeta = 16(1+\delta/\Delta^*)^2$) and $\sqrt{\frac{\epsilon_1\alpha\log t}{(1-\epsilon_0)C_k(t-1)}} \leq \Delta^*/4$. This leads to the following bound:
\begin{eqnarray}
&& \mathbb{P}\{\hat{u}_k(t) > u_k + \frac{3\Delta_k}{4}\} \nonumber \\
&\leq & \mathbb{P}\big\{\hat{u}_k(t) > u_2 + \frac{3\Delta_k}{4}, C_k(t-1) \geq \frac{\alpha\log (t-1)}{4(\Delta_k+\delta)^2}\big\} \nonumber \\
&& + \mathbb{P}\big\{C_k(t-1) < \frac{\alpha\log (t-1)}{4(\Delta_k+\delta)^2}\big\}.
\end{eqnarray}
The second term is bounded according to Lemma~\ref{thm:random_reward_lowerbound_randomload}. For the first term, we have
\begin{eqnarray}
&& \mathbb{P}\{\hat{u}_k(t) > u_k + \frac{3\Delta_k}{4}, C_k(t-1) \geq \frac{\alpha\log (t-1)}{4(\Delta_k+\delta)^2}\}  \nonumber\\
&\leq&  \mathbb{P}\big\{\bar{u}_k(t) > u_k + \sqrt{\frac{\alpha\log t}{\zeta C_k(t-1)}} \big\} \nonumber \\
&\leq& t e^{-2\alpha\log t /\zeta } = t^{-2\alpha/\zeta + 1}.
\end{eqnarray}
Summing over all $t$, we have
\begin{eqnarray} \label{eq:num_pulls_suboptimal_arm}
&& \mathbb{E}[C_k^{(1)}(T)] \nonumber \\
&=& O\bigg(\sum_{t = 1}^T  \big(\big[t ^{-2\alpha + 1} + e^{-2\eta^2 t}  \nonumber \\
&&+ \frac{[2(K-1)]^{2\alpha-1} }{2\alpha - 2}  [(\rho - \eta)t ]^{-2\alpha + 2} \big] \nonumber \\
&& + t^{-2\alpha/\zeta + 1} + [t^{-(2\alpha -3)}  + t^{-(2\alpha(\frac{1-\delta}{2-\delta})^2 - 2)}] \bigg) \nonumber \\
&=&O\bigg( 1 + \frac{1}{2\alpha -2} \nonumber \\
&&+ \frac{[2(K-1)]^{2\alpha-1} (\rho - \eta)^{-2\alpha + 2}}{(2\alpha - 2)(2\alpha - 3)} (1+\frac{1}{2\alpha -3}) \nonumber \\
&&+ \frac{e^{-2\eta^2}}{1- e^{-2\eta^2}} + 1+ \frac{1}{2\alpha/\zeta-2} \nonumber \\
&& + 1+ \frac{1}{2\alpha-4} + 1+ \frac{1}{2\alpha(\frac{1-\delta}{2-\delta})^2-3} \bigg) \nonumber \\
&=& O\bigg( \frac{[2(K-1)]^{2\alpha-1}(\rho - \eta)^{-2\alpha+2}}{(2\alpha -2)(2\alpha - 3)} + \frac{e^{-2\eta^2}}{1- e^{-2\eta^2}}  \nonumber \\
&&+ \frac{1}{2\min\{\frac{\alpha}{\zeta} - 1,\alpha -2, \alpha (\frac{1-\delta}{2-\delta})^2 - \frac{3}{2}\} } \bigg).
\end{eqnarray}

From the definition of $\sigma$ and $\zeta$, we know that they only depends on the value of $\epsilon_0$, $\epsilon_1$, and $\Delta_k$, and satisfies that $\alpha/\zeta > 1$ and $\alpha(\frac{1-\delta}{2-\delta})^2 >3/2$, which guarantee the above bound are finite. Considering all suboptimal arms completes the proof of this theorem. 

\subsection{Impact of $\rho$} \label{subsec:impact_rho}

In the above analysis, we focus on the asymptotic behavior of the regret as $T$ goes to infinity. 
 In addition to the $O(\log T)$ term, the constant term depends on other parameters such as $\rho$, $\epsilon_0$, $\epsilon_1$, and $\Delta_k$'s. We are interested in the impact of load fluctuation, and thus discuss the impact of $\rho$ here. As $\rho \to 1$, AdaUCB will become traditional UCB($\alpha$) at most time, and the regret will get close to traditional UCB($\alpha$). We are more curios on the asymptotic behavior of the constant term as $\rho \to 0$. 

To understand the performance of AdaUCB as $\rho \to 0$, we will analyze \eqref{eq:num_pulls_suboptimal_arm} with more details by refining the bound in Lemma~\ref{thm:random_reward_lowerbound_randomload}. We write down the complete format for the conclusion of Lemma~\ref{thm:random_reward_lowerbound_randomload}, which is the summation of Eqs.~\eqref{eq:last_optimal_pull_smaller_randomload} and \eqref{eq:random_subopt_arm_larger_tprime}:
\begin{eqnarray} \label{eq:random_subopt_arm_larger_tprime}
&&\mathbb{P}\{C_k(t) < \frac{\alpha \log t}{4(\Delta_k +\delta)^2} \}\nonumber \\
&\leq& e^{-2\eta^2 t} + \frac{[2(K-1)]^{2\alpha-1}}{2\alpha - 2} \big[(\rho - \eta)t \big]^{-2\alpha + 2}  \nonumber \\
&& + \frac{1}{2\alpha -2}\big[\frac{(\rho - \eta)t}{2}\big]^{-2\alpha + 2}  + \frac{e^{-\eta^2(\rho-\eta)t}}{1-e^{-2\eta^2 }} \nonumber \\
&& + \frac{[2(K-1)]^{2\alpha-1}(\rho - \eta)^{-4\alpha + 5}}{(2\alpha -2)(2\alpha -3)}(t/2)^{-2\alpha + 3}  \nonumber \\
&&+  \frac{1}{2\alpha (\frac{1-\delta}{2-\delta})^2-2}\big[\frac{(\rho - \eta)t}{2}\big]^{-2\alpha (\frac{1-\delta}{2-\delta})^2 + 2}.
\end{eqnarray} 
We can replace $O([t^{-(2\alpha -3)}  + t^{-(2\alpha(\frac{1-\delta}{2-\delta})^2 - 2)}])$ term in \eqref{eq:num_pulls_suboptimal_arm} with the above equation and recalculate the bounds. We keep the terms depending on $\rho$ and $\eta \in (0, \rho)$, and ignore the lower order term. Then the dependence of the regret on $\rho$ can be captured as: 
\begin{eqnarray} 
&& \varphi_{\alpha, \sigma, \zeta}(\rho, \eta) \nonumber \\
&=& O\big((\rho - \eta)^{-2\alpha+2} + (\rho - \eta)^{-4\alpha+5} + (\rho - \eta)^{-2\alpha(\frac{1-\delta}{2-\delta})^2+2}\big) \nonumber \\
&= & O\big((\rho - \eta)^{-2\alpha+2} + (\rho - \eta)^{-4\alpha+5}\big). \nonumber 
\end{eqnarray}
This shows that as $\rho$ goes to zero, the constant term will go to infinity.  In reality, AdaUCB achieves much smaller regret than traditional UCB and TS algorithms, even for small values of $\rho$ such as $\rho = 0.05$ under binary load and $\rho = 0.001$ under continuous load. Such evaluations establishes empirical guidelines on when to use UCB or AdaUCB. More discussions can be found in Appendix~\ref{app:add_simulations}.

\section{AdaUCB under Continuous Load} \label{app:continuous load}
In this section, we provide comprehensive outlines for the proof of Lemma~\ref{thm:regret_upperbound_continuousload}. For Theorem~\ref{thm:regret_upperbound_continuousload_singlethreshold}, with the mapping described in Sec.~\ref{subsec:continuous_load}, the proof is quite similar to that in the binary-valued load case and is not replicated here.

\textit{Sketch of Proof for Lemma~\ref{thm:regret_upperbound_continuousload}:}  Intuitively, AdaUCB becomes traditional UCB($\alpha$) when $L_t \leq l^{(-)}$. If the lower-truncation threshold $l^{(-)}$ satisfies $P(L_t \leq l^{(-)}) = \rho > 0$, then AdaUCB will accumulate sufficient explorations from the slots with load $L_t \leq l^{(-)}$ and make the right decision, i.e., pull the best arm, under higher load, and achieve logarithmic regret. Specifically, similar to the two cases in the proof of Lemma~\ref{thm:random_reward_general_upperbound_randomload}, we can consider the following two cases to show its regret:

\textbf{Case 1:}  $L_t \leq l^{(-)}$

When $L_t \leq l^{(-)}$, AdaUCB becomes  traditional UCB($\alpha$), and we can bound $\mathbb{E}[C_k(T)]$ for $k > 1$ similar to UCB($\alpha$) as we did in the proof of Lemma~\ref{thm:random_reward_general_upperbound_randomload}.

\textbf{Case 2:}  $L_t > l^{(-)}$

When the load $L_t > l^{(-)}$, we have 
\begin{equation} 
\epsilon_t \triangleq 1- \tilde{L}_t = 1 - \frac{[L_t]_{l^{(-)}}^{l^{(+)}} - l^{(-)}}{l^{(+)} - l^{(-)}} =  \frac{l^{(+)} - [L_t]_{l^{(-)}}^{l^{(+)}} }{l^{(+)} - l^{(-)}}. \nonumber
\end{equation} 
AdaUCB will pull the arm with the largest  $\hat{u}_k(t) = \bar{u}_k(t) + \sqrt{\frac{\alpha\epsilon_t \log t}{C_k(t-1)}}$ .Thus, pulling the suboptimal arm requires one of the following events is true:
\begin{equation}\label{eq:better_arm_slot1_cont_load}
\hat{u}_1(t) \leq u_1 - \big(1-\sqrt{\epsilon_t}\big)\frac{\Delta_k}{2},
\end{equation}
\begin{equation}\label{eq:worse_arm_slot1_cont_load}
\hat{u}_k(t) > u_k + \big(1+\sqrt{\epsilon_t}\big)\sqrt{\frac{\alpha\log t}{C_k(t-1)}},
\end{equation}
and
\begin{equation}
C_k(t-1) < \frac{4\alpha\log T}{\Delta_k^2}.
\end{equation}
Then we can bound the regret using the same argument for  Case 2 in the proof of Lemma~\ref{thm:random_reward_general_upperbound_randomload}.
The total regret are than bound by summing over the above two cases.


\section{Additional Simulations} \label{app:add_simulations}

\subsection{Simulations under Synthetic Scenarios}\label{app:synthetic_simu}

We first use the periodic square wave load to study the impact of load fluctuation. The square wave load is as defined in \ref{subsec:Dirac} with $l^{(-)} = \epsilon_0$ and $l^{(+)} = 1-\epsilon_1$. In this case,  the load fluctuation  is larger when $\epsilon_0$ and $\epsilon_1$ are smaller. As we can see from Fig.~\ref{fig:square_wave}, with adaptive exploration based on load level, AdaUCB significantly reduces the regret in opportunistic bandits, especially compared with UCB$(\alpha)$. Comparing the results for different $\epsilon_0$ and $\epsilon_1$ values, we can see that the improvement of AdaUCB is more significant under loads with larger fluctuations. In particular, when $\epsilon_0 = 0$, i.e., the exploration cost is zero if the load is under certain threshold, the proposed AdaUCB achieves $O(1)$ regret, as shown in Fig.~\ref{fig:square_wave_0}.

\begin{figure*}[thbp]
	\begin{center}
		\begin{minipage}[t]{\textwidth}
			\begin{center}
				\subfigure[$\epsilon_0 = \epsilon_1 = 0$]{\includegraphics[angle = 0,height = 0.23\linewidth,width = 0.32\linewidth]{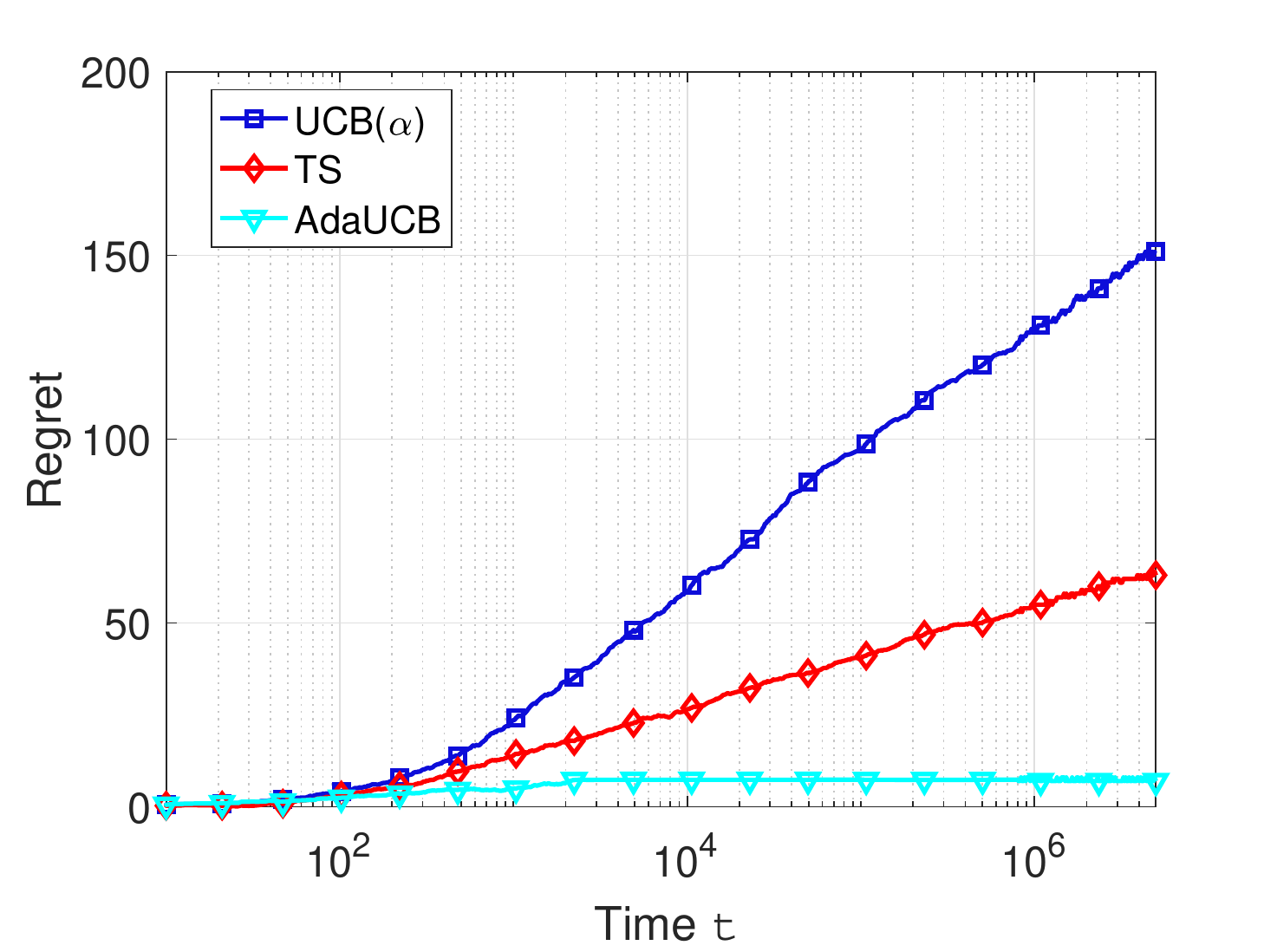}
					\label{fig:square_wave_0}}
				\subfigure[$\epsilon_0 = \epsilon_1 = 0.05$]{\includegraphics[angle = 0,height = 0.23\linewidth,width = 0.32\linewidth]{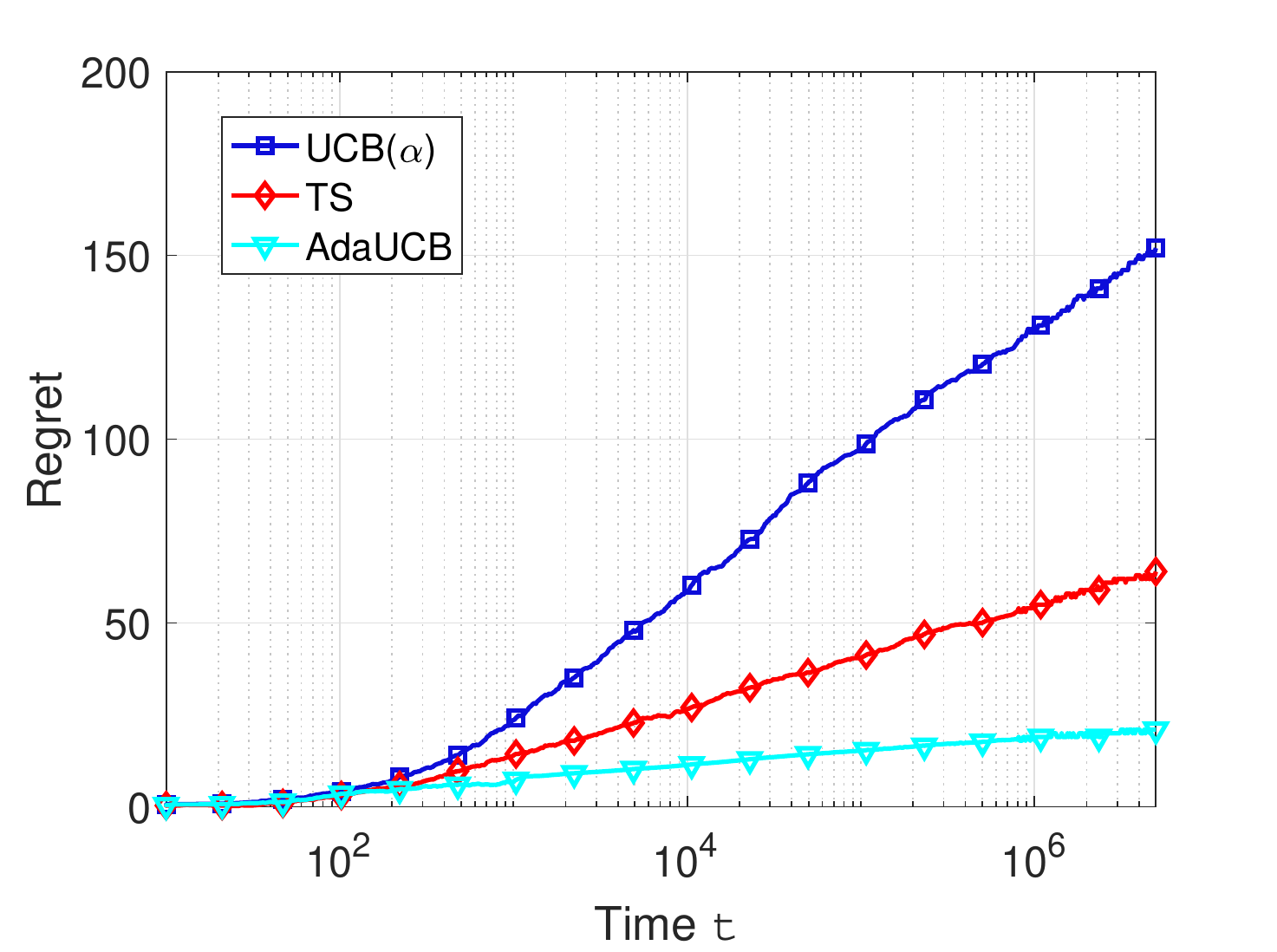}
					\label{fig:square_wave_0p05}}
				\subfigure[$\epsilon_0 = \epsilon_1 = 0.1$]{\includegraphics[angle = 0,height = 0.23\linewidth,width = 0.32\linewidth]{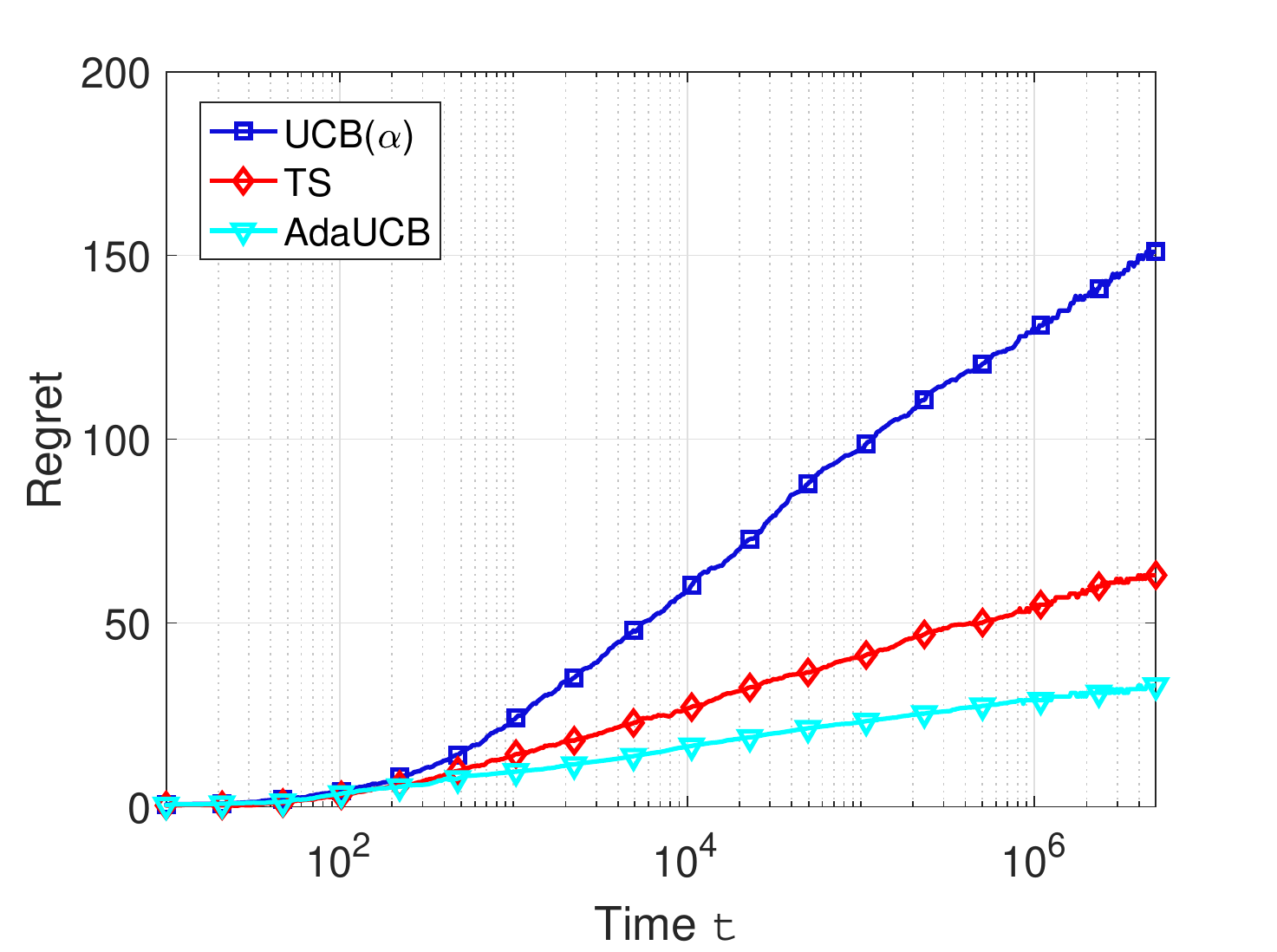}
					\label{fig:square_wave_0p1}}
				\vspace{-.25cm}
				\caption{Regret under periodic square wave load.}
				\label{fig:square_wave}
			\end{center}
		\end{minipage}
	\end{center}
\end{figure*}

We also test the performance of the algorithms under binary-valued load, which is the load in Sec. \ref{subsec:random}. 
The results are shown in Fig.~\ref{fig:bernoulli_load}.
Here, let $\epsilon_0 = \epsilon_1 = 0$. It can be seen that the AdaUCB achieves $O(1)$ regret with respect to $T$ which is consistent with the analytical result. 
To evaluate the asymptotic performance of AdaUCB as $\rho$ gets close to zero, we evaluate its performance under smaller $\rho$ in Fig.~\ref{fig:binary_rho}. We do see that for very small $\rho$, AdaUCB achieves larger regret at the beginning stage, but it performs relatively better as $T$ increases. Moreover, AdaUCB achieves much better performance even for a small $\rho$, such as $\rho = 0.05$. In other words, 5\% of low-load slots will provide sufficient opportunities to explore and achieve lower regret.

\begin{figure*}[thbp]
	\begin{center}
		\begin{minipage}[t]{\textwidth}
			\begin{center}
				\subfigure[$\rho = 0.1$]{\includegraphics[angle = 0,height = 0.23\linewidth,width = 0.32\linewidth]{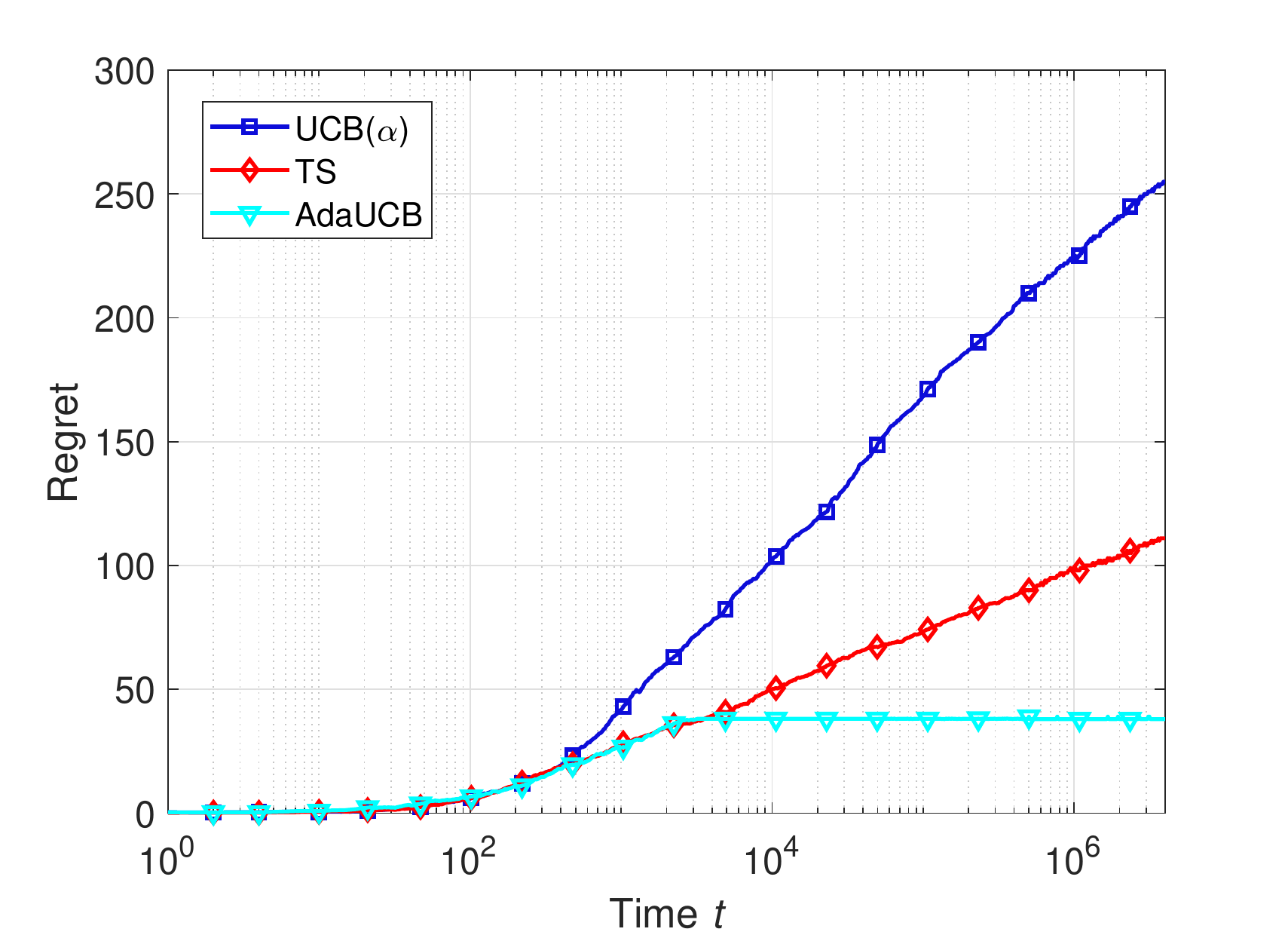}
					\label{fig:LoadBernoulli0p1}}
				\subfigure[$\rho = 0.5$]{\includegraphics[angle = 0,height = 0.23\linewidth,width = 0.32\linewidth]{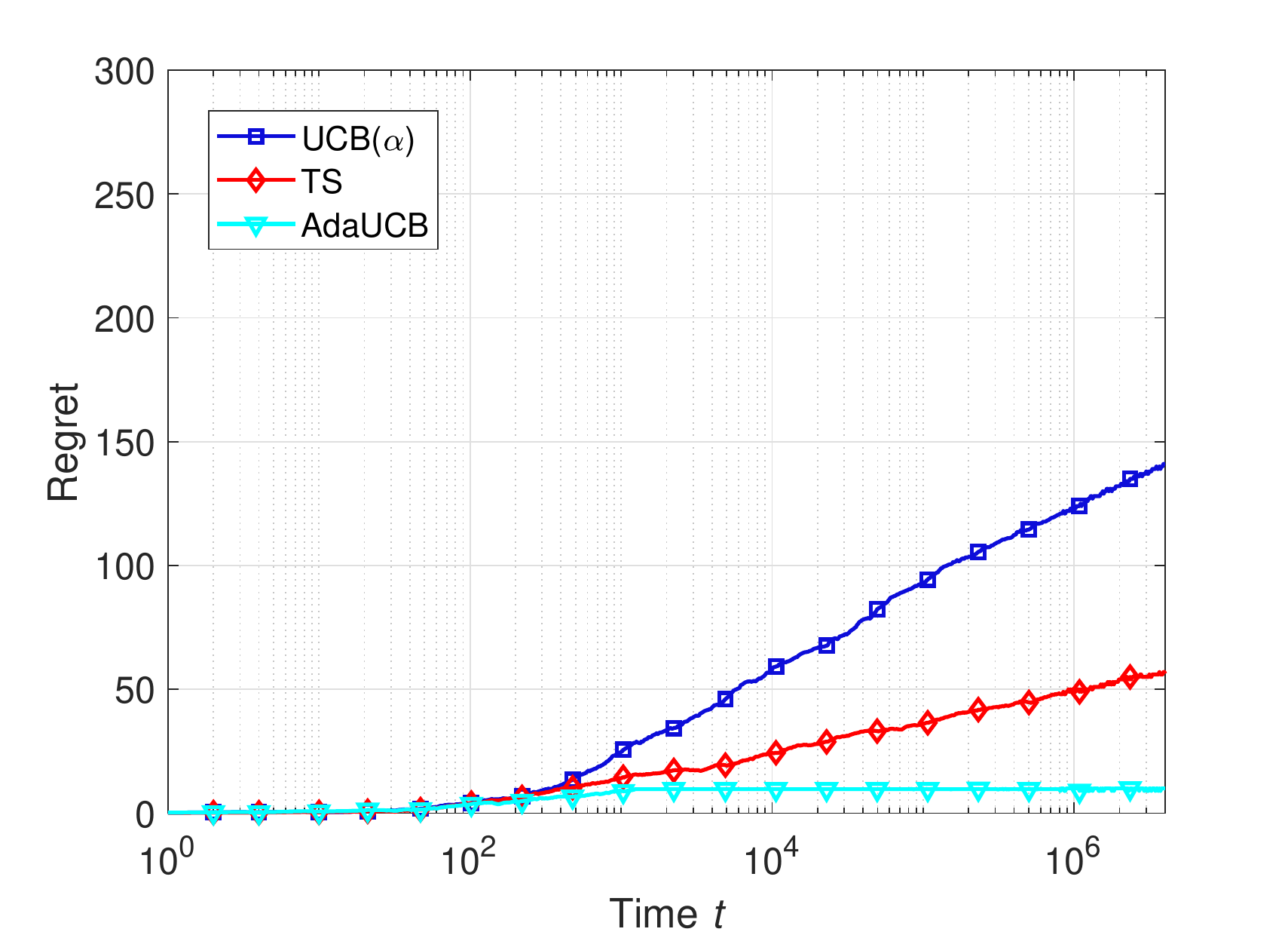}
					\label{fig:LoadBernoulli0p5}}
				\subfigure[$\rho = 0.9$]{\includegraphics[angle = 0,height = 0.23\linewidth,width = 0.32\linewidth]{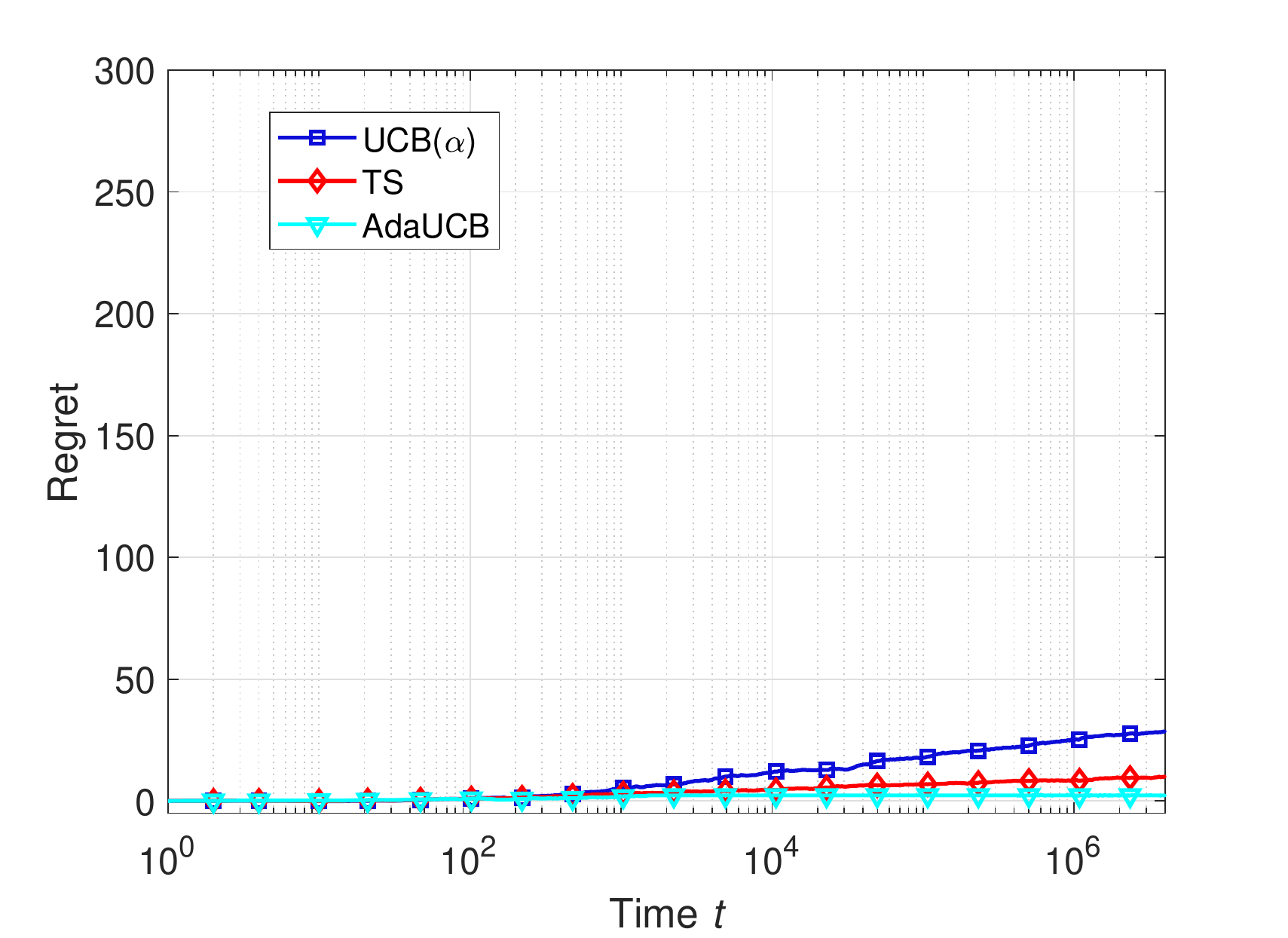}
					\label{fig:LoadBernoulli0p9}}
				\vspace{-.25cm}
				\caption{Regret under binary-valued load.}
				\label{fig:bernoulli_load}
			\end{center}
		\end{minipage}
	\end{center}
\end{figure*}

\begin{figure*}[thbp]
	\begin{center}
		\begin{minipage}[t]{\textwidth}
			\begin{center}
				\subfigure[$\rho = 0.005$]{\includegraphics[angle = 0,height = 0.23\linewidth,width = 0.32\linewidth]{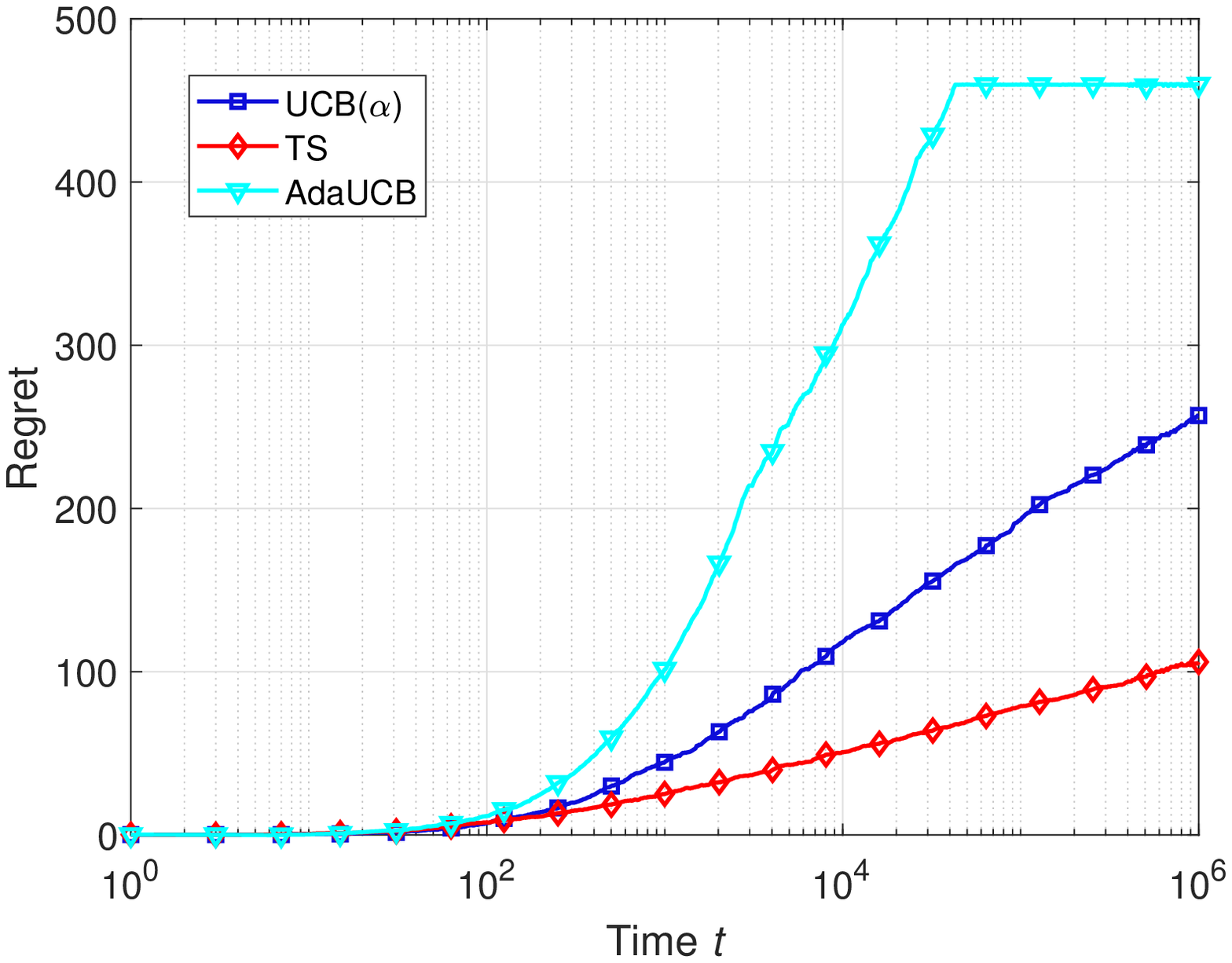}
					\label{fig:binary_rho0p005}}
				\subfigure[$\rho = 0.01$]{\includegraphics[angle = 0,height = 0.23\linewidth,width = 0.32\linewidth]{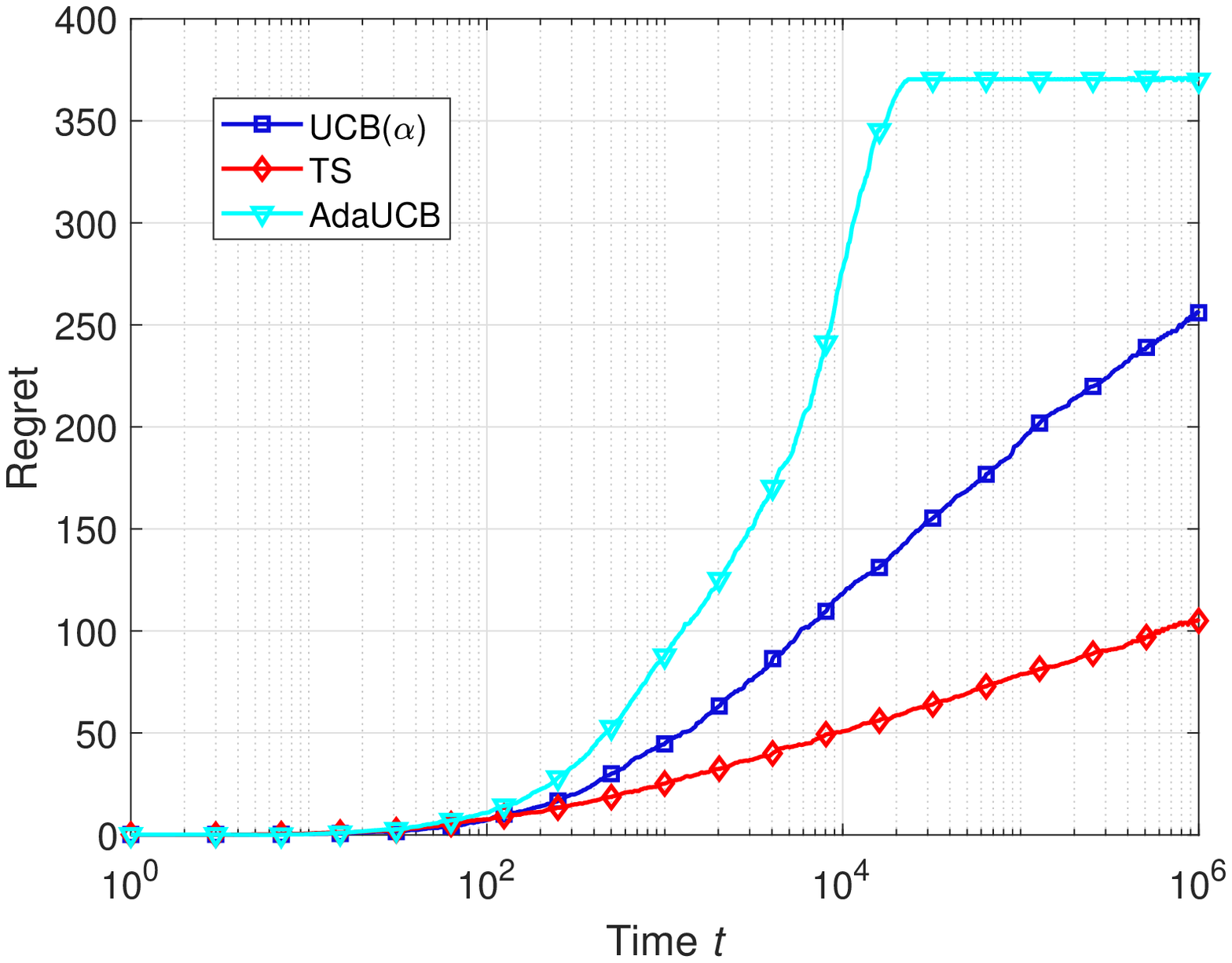}
					\label{fig:binary_rho0p01}}
				\subfigure[$\rho = 0.05$]{\includegraphics[angle = 0,height = 0.23\linewidth,width = 0.32\linewidth]{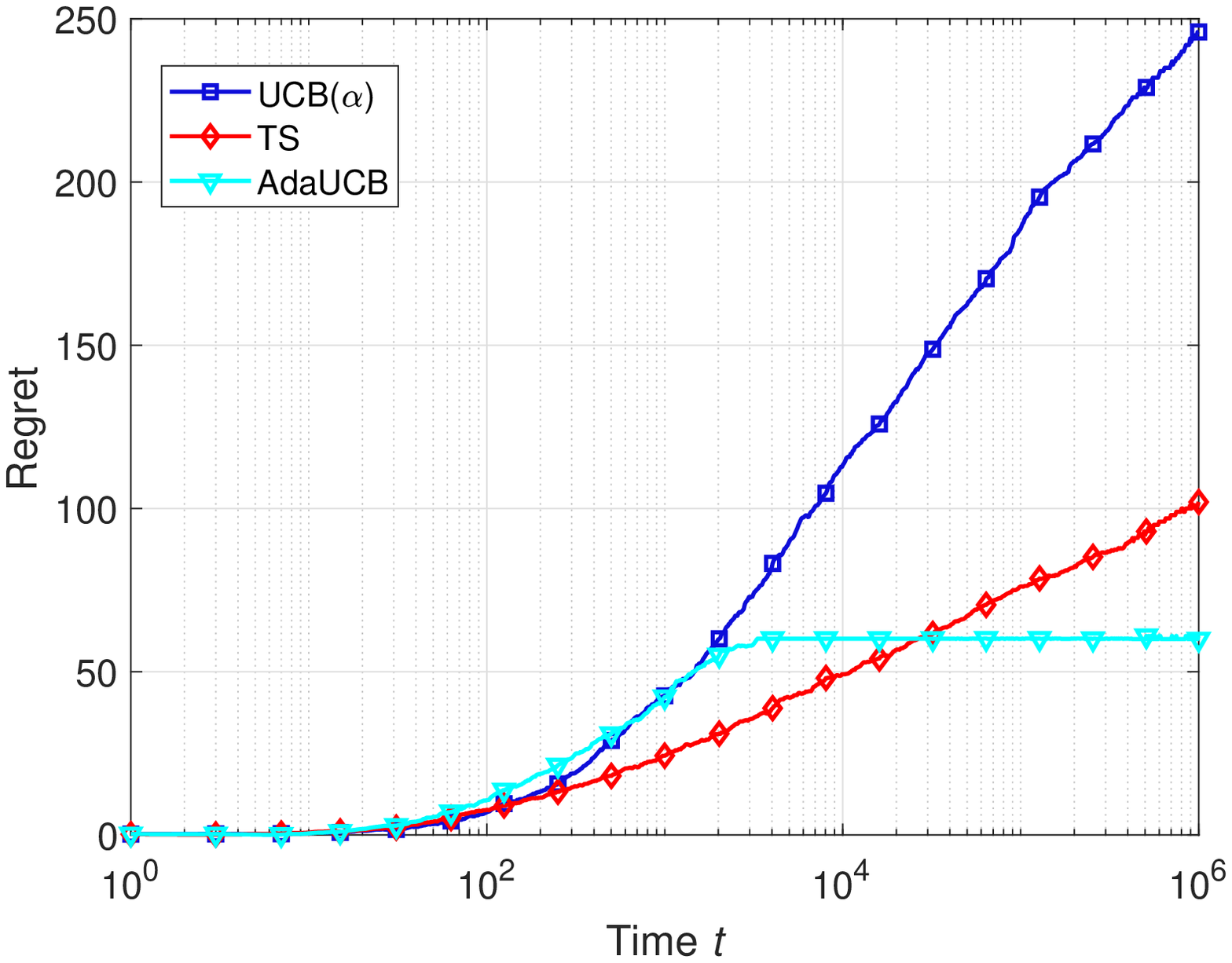}
					\label{fig:binary_rho0p05}}
				\vspace{-.25cm}
				\caption{Regret under binary-valued load with small $\rho$.}
				\label{fig:binary_rho}
			\end{center}
		\end{minipage}
	\end{center}
\end{figure*}

\begin{figure*}[thbp]
\begin{center}
\begin{minipage}[t]{\textwidth}
\begin{center}		
	\subfigure[$l^{(-)} = l^{(-)}_{0.01},  l^{(+)} = l^{(+)}_{0.01}$]{\includegraphics[angle = 0,height = 0.23\linewidth,width = 0.32\linewidth]{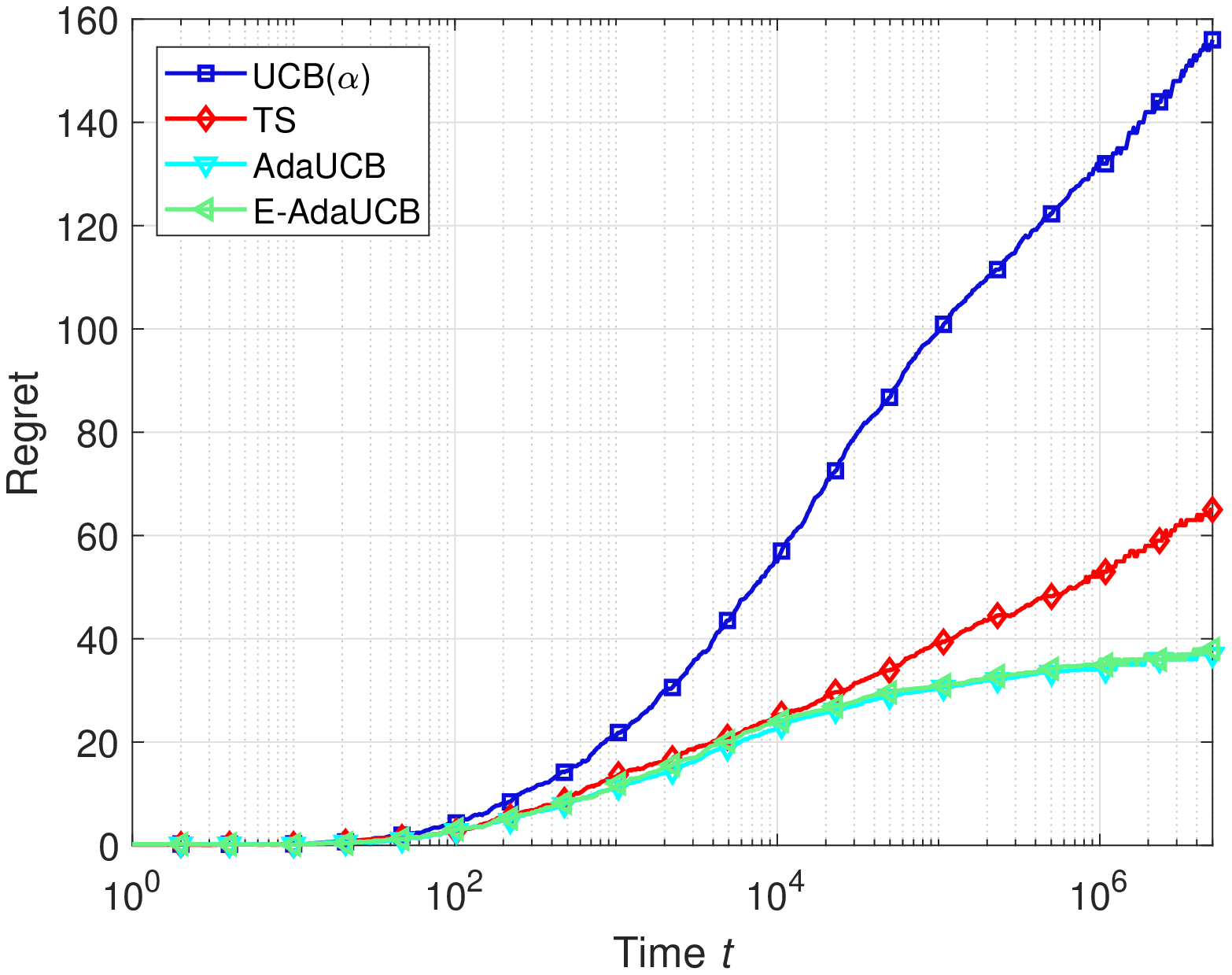}
		\label{fig:beta_load_jump}}
	\subfigure[AdaUCB: $l^{(-)} = l^{(-)}_{0.05},  l^{(+)} = l^{(+)}_{0.05}$;
	\hspace{\textwidth}
	\quad AdaUCB($l^{(-)} =l^{(+)}$): $l^{(+)} =l^{(-)} = l^{(-)}_{0.05}$]{\includegraphics[angle = 0,height = 0.23\linewidth,width = 0.32\linewidth]{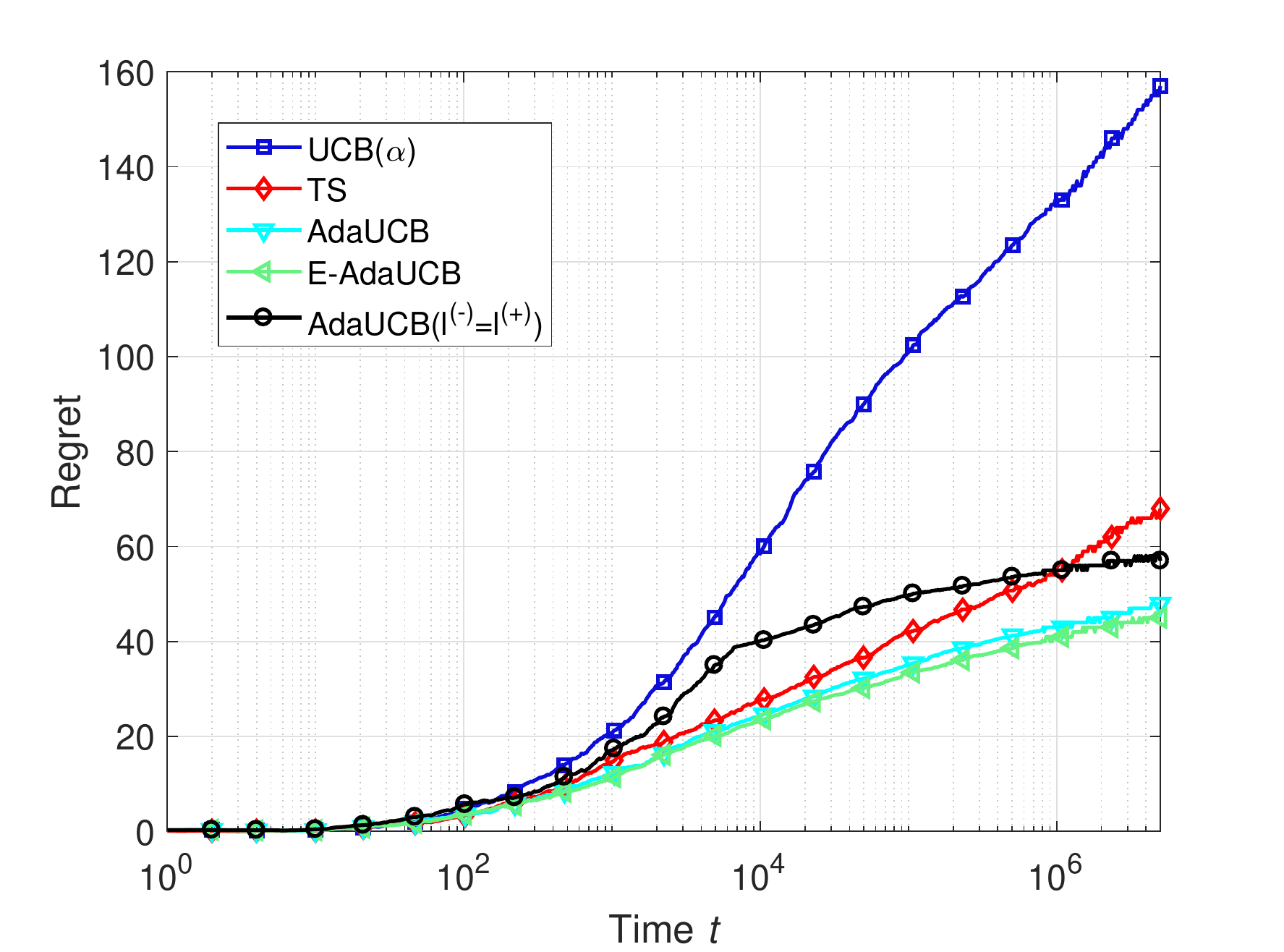}
		\label{fig:beta_load_0p05_0p05}}
	\subfigure[$l^{(-)} = l^{(-)}_{0.1},  l^{(+)} = l^{(+)}_{0.1}$]{\includegraphics[angle = 0,height = 0.23\linewidth,width = 0.32\linewidth]{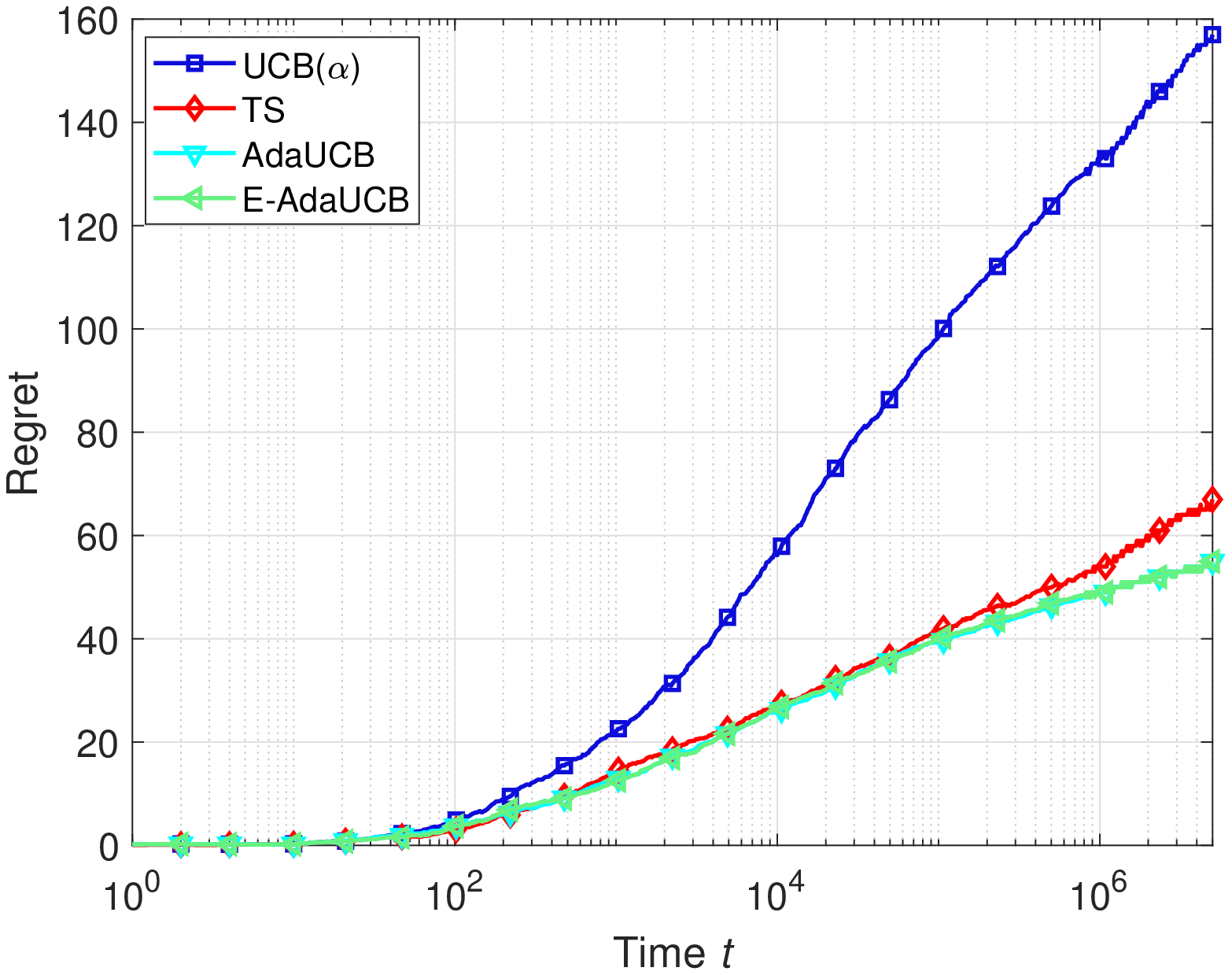}
		\label{fig:beta_load_0p1_0p1}}
	\vspace{-.25cm}
	\caption{Regret under beta distributed load.}
	\label{fig:beta_load1}
\end{center}
\end{minipage}
\end{center}
\end{figure*}

We next investigate the performance of the algorithms under continuous load. Here, we assume the load is i.i.d.~over time, following the beta distribution $Beta(2,2)$. We study the impact of different truncation thresholds for AdaUCB. We define $l^{(-)}_{\rho}$ as the lower threshold such that $\mathbb{P}\{L_t \leq l^{(-)}_{\rho}\}=\rho$, and $l^{(+)}_{\rho}$ as the upper threshold such that $\mathbb{P}\{L_t \geq l^{(+)}_{\rho}\}=\rho$. From Fig.~\ref{fig:beta_load1}, we can see that the selection of $l^{(-)}$ and $l^{(+)}$ affects the performance of AdaUCB.
We also evaluate the impact of $l^{(-)}$ and $l^{(+)}$ separately with the other one fixed in Fig.~\ref{fig:beta_load_fix_l+} and Fig.~\ref{fig:beta_load_fix_l-}, respectively.  
Compared Fig.~\ref{fig:beta_load_fix_l+} to Fig.~\ref{fig:binary_rho}, we can see that the impact of $\rho$ under continuous load is insignificant and the regret of AdUCB is much lower than the traditional UCB algorithm from the beginning stage. This is because under continuous load, exploration also occurs when $L_t \in (l^{(-)}, l^{(+)})$. Also because of this, the impact of $l^{(+)}$ is  negligible as long as there are sufficient percentage of $L_t$ staying in $(l^{(-)}, l^{(+)})$, as shown in Fig.~\ref{fig:beta_load_fix_l-}.

\begin{figure*}[thbp]
	\begin{center}
		\begin{minipage}[t]{\textwidth}
			\begin{center}		
				\subfigure[$l^{(-)} = l^{(-)}_{0.001},  l^{(+)} = l^{(+)}_{0.01}$]{\includegraphics[angle = 0,height = 0.23\linewidth,width = 0.32\linewidth]{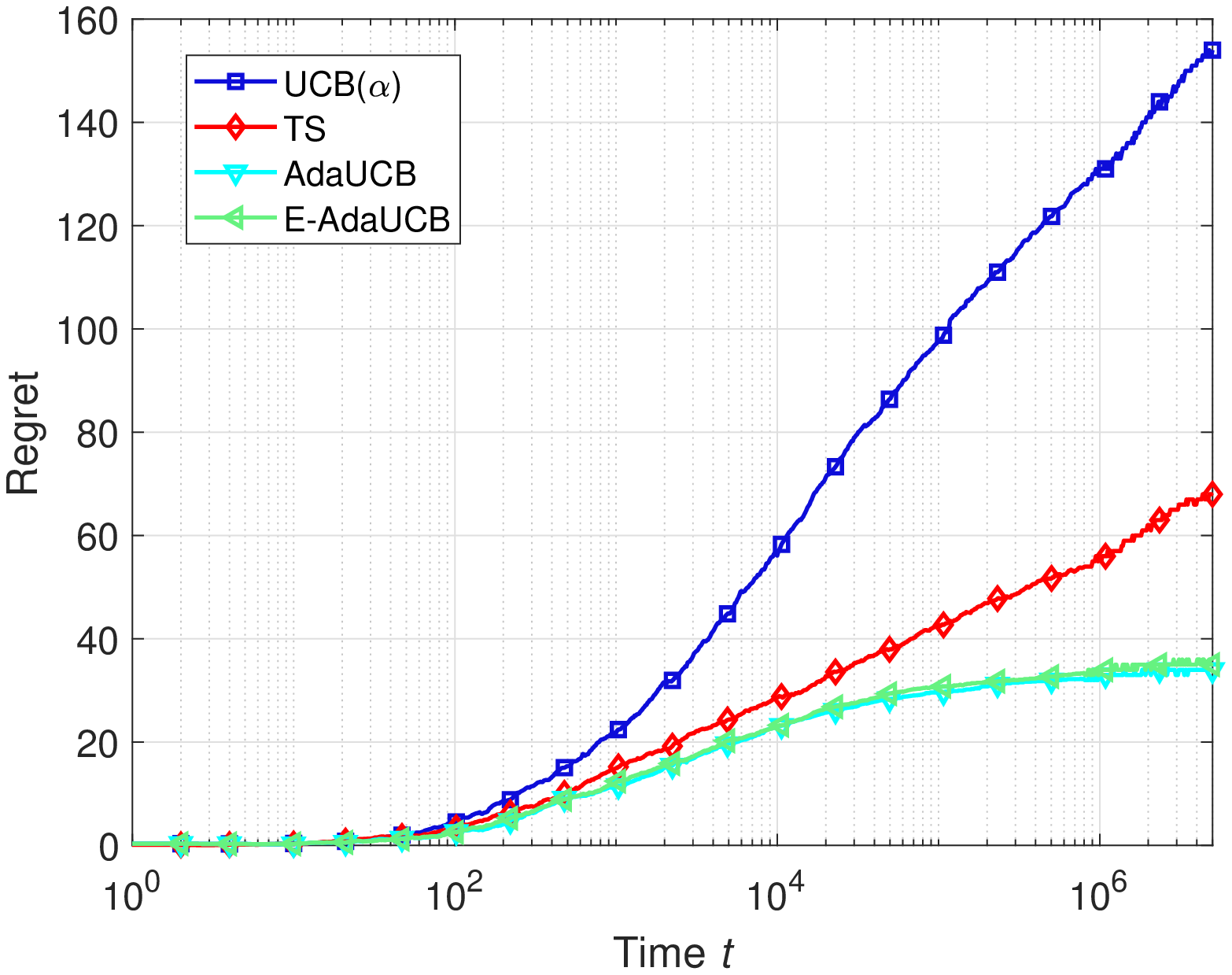}
					\label{fig:beta_load_0p001_fix_l+}}
				\subfigure[$l^{(-)} = l^{(-)}_{0.005},  l^{(+)} = l^{(+)}_{0.01}$]{\includegraphics[angle = 0,height = 0.23\linewidth,width = 0.32\linewidth]{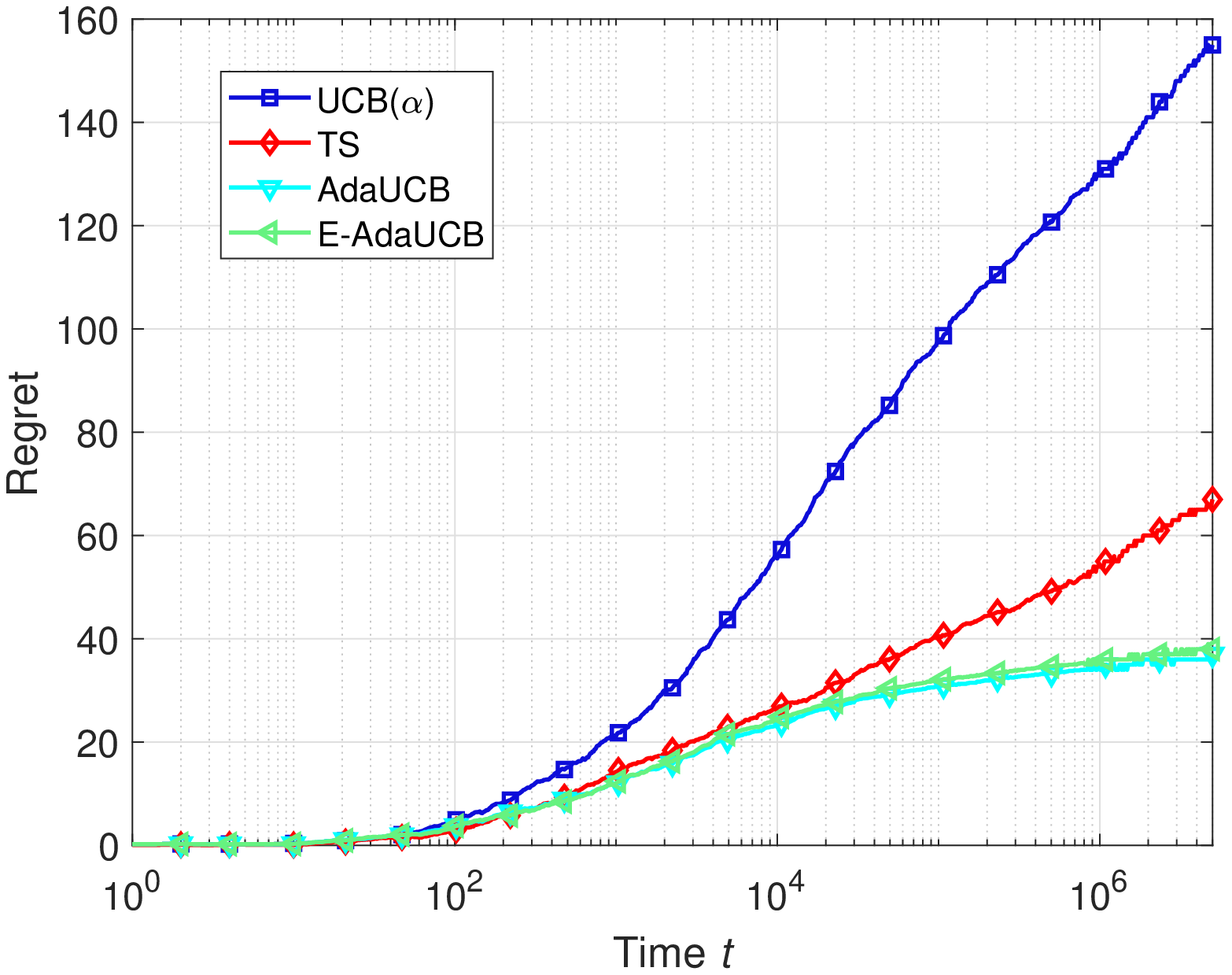}
					\label{fig:beta_load_0p005_fix_l+}}
				\subfigure[$l^{(-)} = l^{(-)}_{0.05},  l^{(+)} = l^{(+)}_{0.01}$]{\includegraphics[angle = 0,height = 0.23\linewidth,width = 0.32\linewidth]{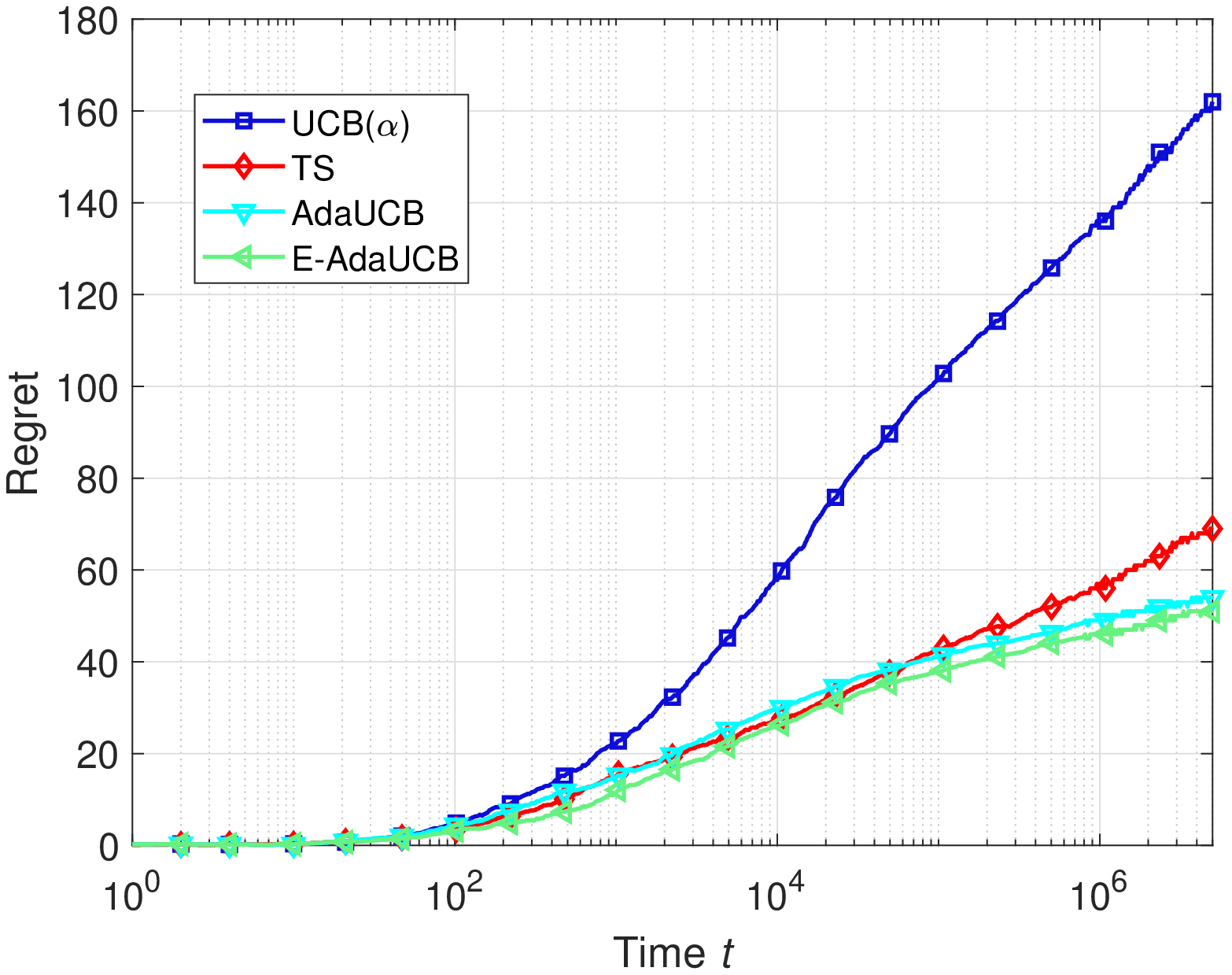}
					\label{fig:beta_load_0p05_fix_l+}}
				\vspace{-.25cm}
				\caption{Regret under beta distributed load with different values of $l^{(-)}$.}
				\label{fig:beta_load_fix_l+}
			\end{center}
		\end{minipage}
	\end{center}
\end{figure*}

\begin{figure*}[thbp]
	\begin{center}
		\begin{minipage}[t]{\textwidth}
			\begin{center}		
				\subfigure[$l^{(-)} = l^{(-)}_{0.01},  l^{(+)} = l^{(+)}_{0.001}$]{\includegraphics[angle = 0,height = 0.23\linewidth,width = 0.32\linewidth]{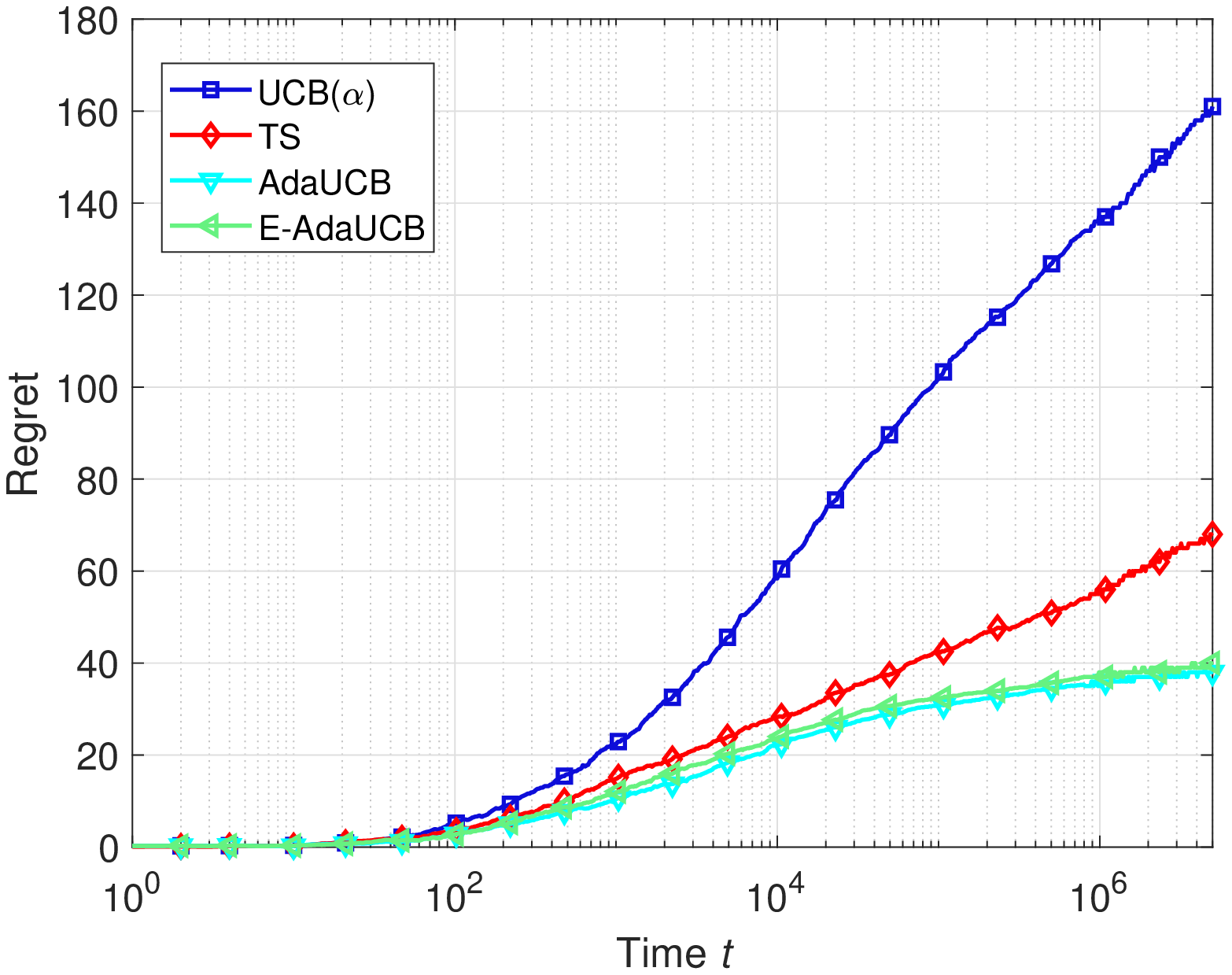}
					\label{fig:beta_load_0p001_fix_l-}}
				\subfigure[$l^{(-)} = l^{(-)}_{0.01},  l^{(+)} = l^{(+)}_{0.005}$]{\includegraphics[angle = 0,height = 0.23\linewidth,width = 0.32\linewidth]{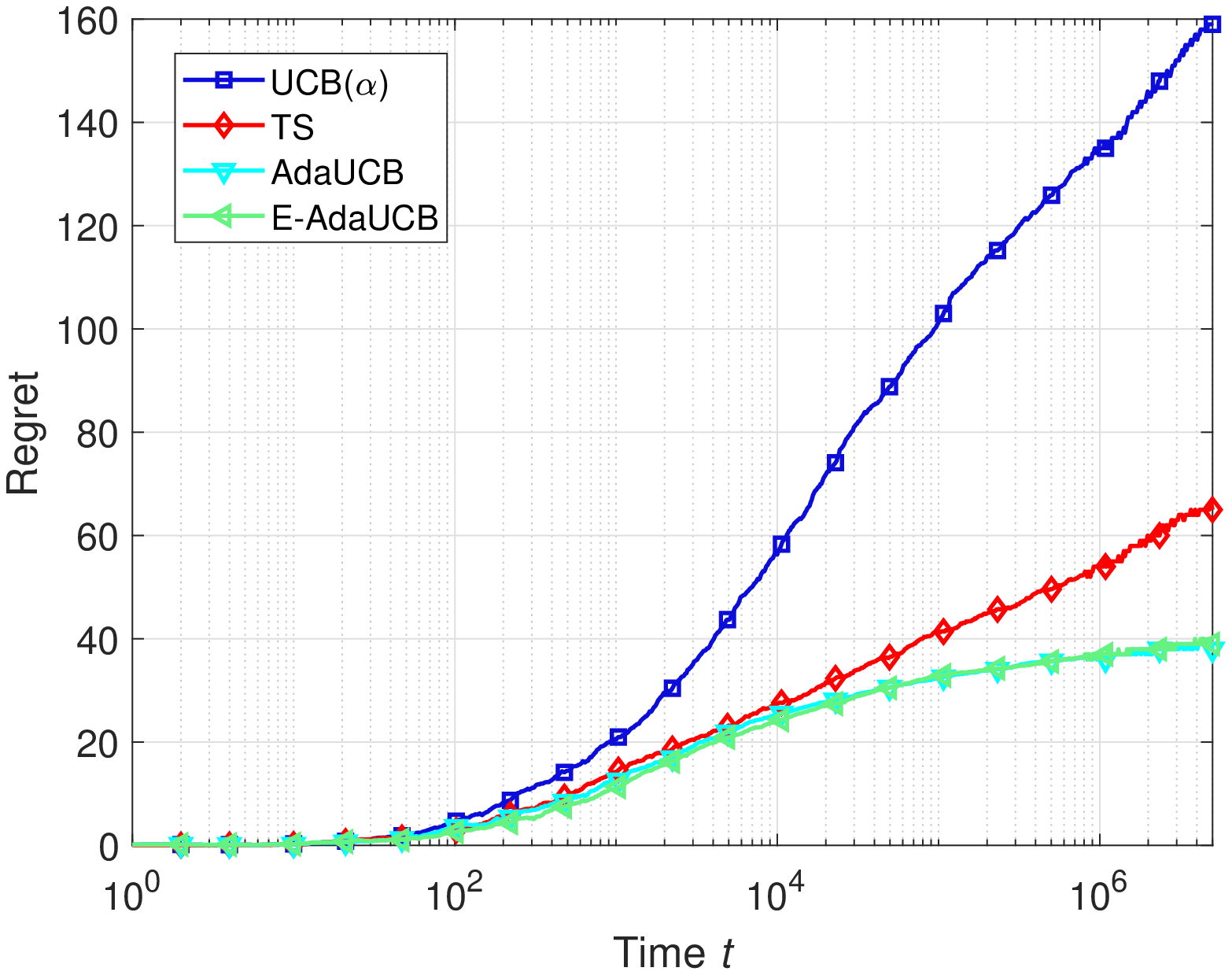}
					\label{fig:beta_load_0p005_fix_l-}}
				\subfigure[$l^{(-)} = l^{(-)}_{0.01},  l^{(+)} = l^{(+)}_{0.05}$]{\includegraphics[angle = 0,height = 0.23\linewidth,width = 0.32\linewidth]{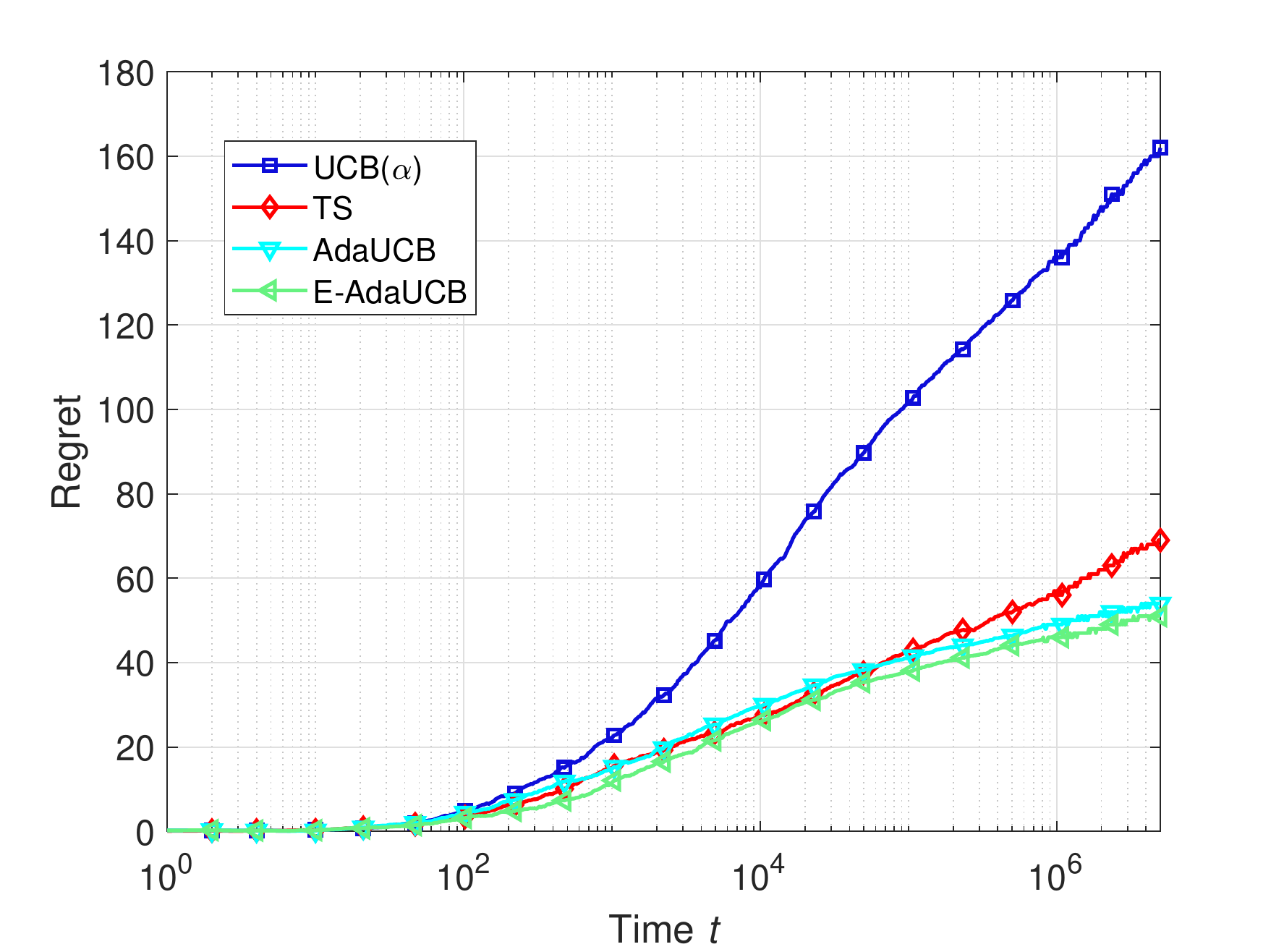}
					\label{fig:beta_load_0p05_fix_l-}}
				\vspace{-.25cm}
				\caption{Regret under beta distributed load with different values of $l^{(+)}$.}
				\label{fig:beta_load_fix_l-}
			\end{center}
		\end{minipage}
	\end{center}
\end{figure*}

\subsection{Simulations Setting in MVNO Systems}\label{app:mvno}

\begin{figure}[thbp]
	\begin{center}
		{\includegraphics[angle = 0,width = 0.9\linewidth]{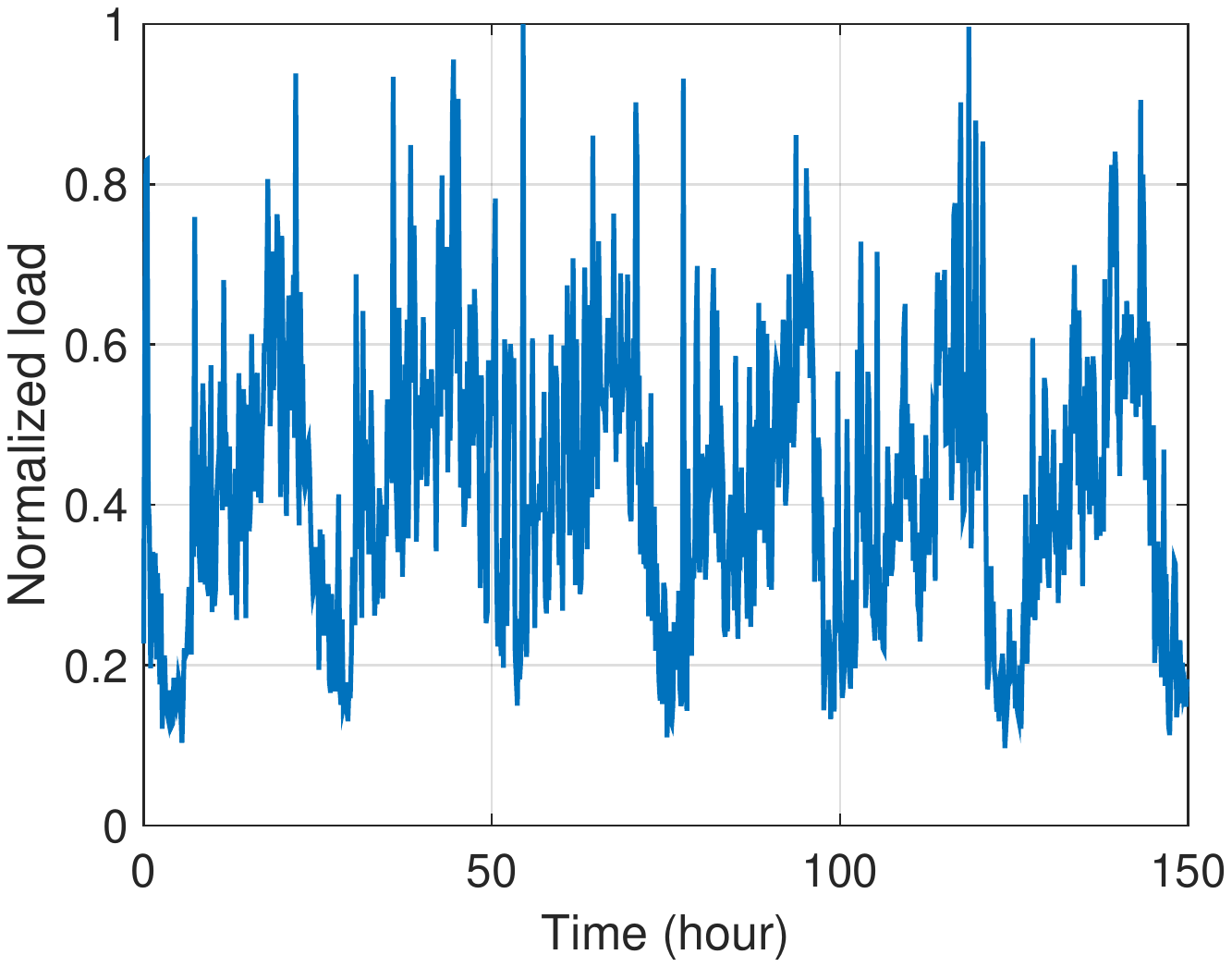}
			\vspace{-.25cm}
			\caption{Normalized traffic load in a cellular network (see Sec.~\ref{sec:sim_results}).}
			\label{fig:traffic_load}}
	\end{center}
	\vspace{-0.25cm}
\end{figure}

Here, we provide more details about the MVNO system mentioned in Sec.~\ref{sec:sim_results}.
In an MVNO system, a virtual operator, such as Google Fi~\cite{googlefi}, provides services to users by leasing  network resources from real mobile operators such as T-Mobile and Sprint. In such a system, the virtual operator would like to provide its users high quality service by accessing the network resources of the real operator with the best network performance. Therefore, we view each real mobile operator as an arm, and the quality of user experienced on that operator network as the reward. We run experiments based on traces collected from real cellular networks, provided by Speedometer~\cite{speedometer}. Speedometer is a custom Android  mobile network measurement app developed by Google, running on thousands of volunteer phones. The data consists of ping, traceroute, DNS lookup, HTTP fetches, and UDP packet-loss measurements for two years. We use round-trip-time (RTT) as a performance indicator for the quality of user experience, and use the inverse of RTT (normalized to [0,1]) as the reward. We consider a three-armed case, where we consider the three operators, Verizon, T-Mobile, and Sprint,  as three arms, using data from Speedometer dataset. We use the load trace of another anonymous operator as the load of the virtual network.The load trace is illustrated in Fig.~\ref{fig:traffic_load}, which shows a clear semi-periodic nature.

\subsection{Compared to Contextual Bandits Algorithms}\label{app:linucb}
Broadly speaking, opportunistic bandits can be considered as a special case of contextual bandits where we can consider the load as context.
To follow this line, we have also compared our algorithms with contextual bandits algorithms. 
We note that by considering load (i.e., $L_t$) as context and actual reward (i.e., $L_t X_{a_t,t}$) as the target for contextual bandits, the opportunistic bandit problem can be formulated as contextual bandits with disjoint linear models, a problem which motivates the design of LinUCB algorithm \cite{Li2010WWW:LinUCB}. 
However, as shown in Fig.~\ref{fig:LinUCB}, the performance of LinUCB is sensitive to the choice of constant $\alpha$.
Actually, for LinUCB, the appropriate choice of constant $\alpha$ depends on the minimum gap between the rewards of optimal and suboptimal arms, which is impractical to obtain beforehand.

\begin{figure*}[thbp]
	\begin{center}
		\begin{minipage}[t]{\textwidth}
			\begin{center}		
				\subfigure[LinUCB with $\alpha = 0.51$]{\includegraphics[angle = 0,height = 0.23\linewidth,width = 0.32\linewidth]{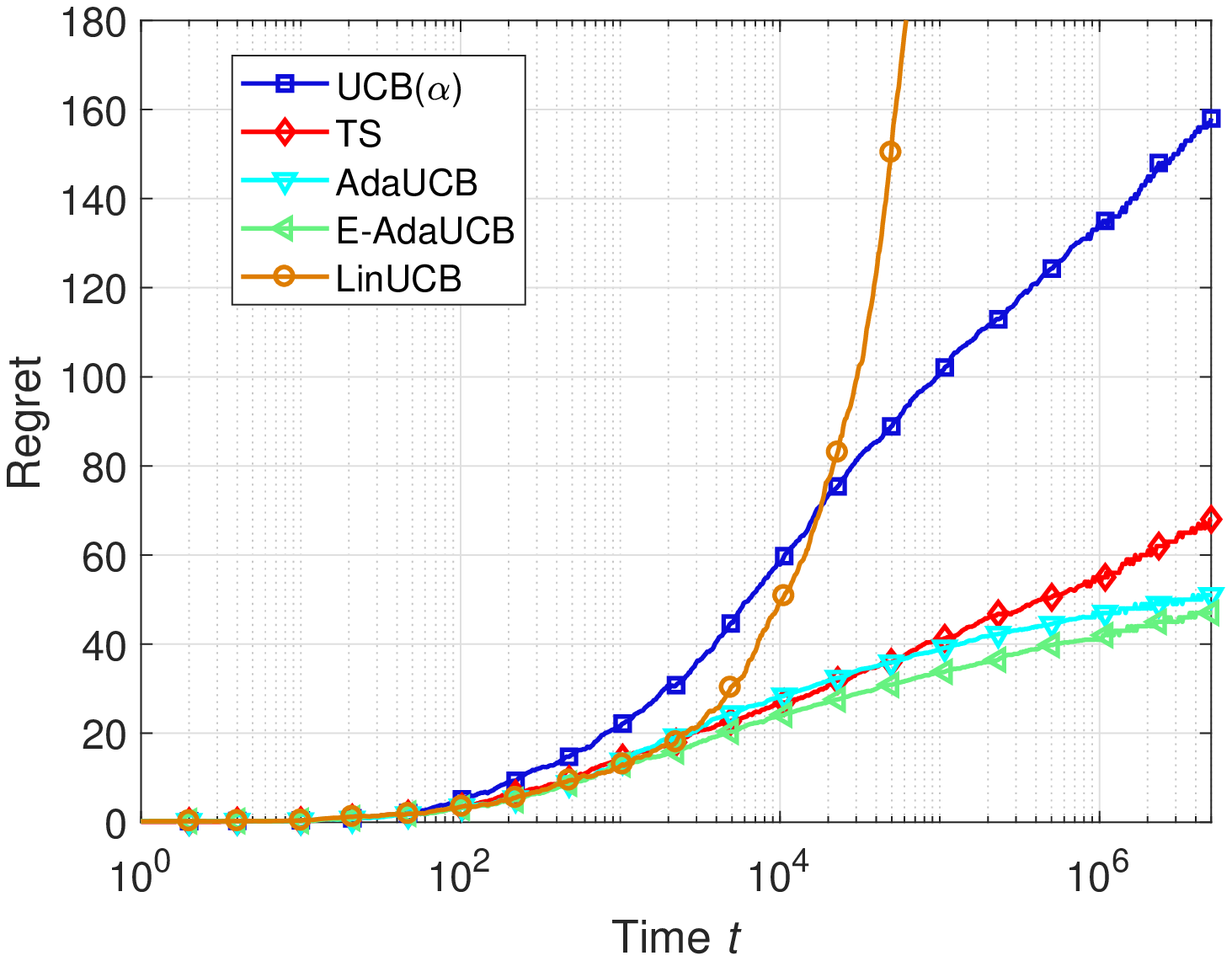}
					\label{fig:linUCB_0p51}}
				\subfigure[LinUCB with $\alpha = 1.0$]{\includegraphics[angle = 0,height = 0.23\linewidth,width = 0.32\linewidth]{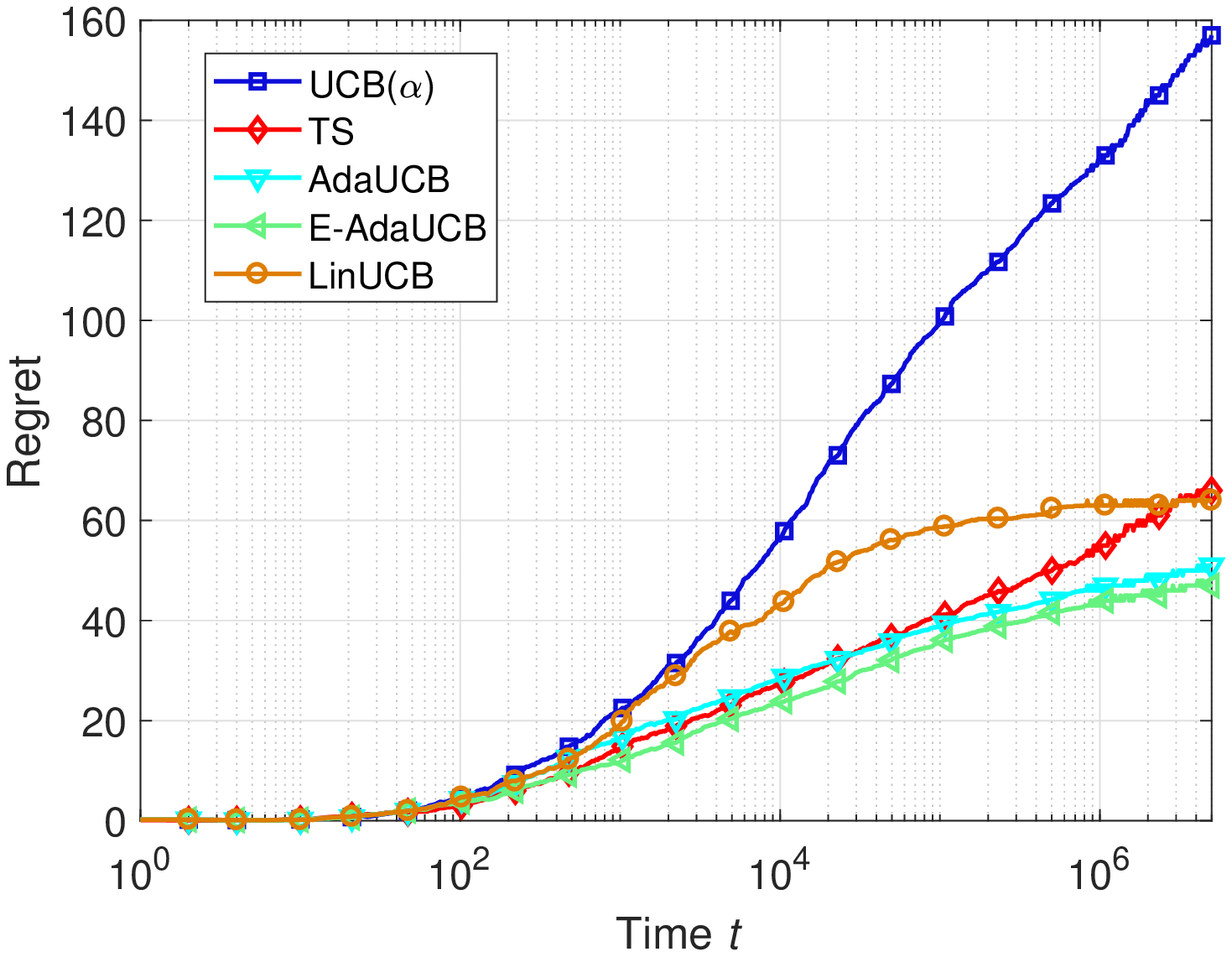}
					\label{fig:linUCB_1p0}}
				\subfigure[LinUCB with $\alpha = 1.2$]{\includegraphics[angle = 0,height = 0.23\linewidth,width = 0.32\linewidth]{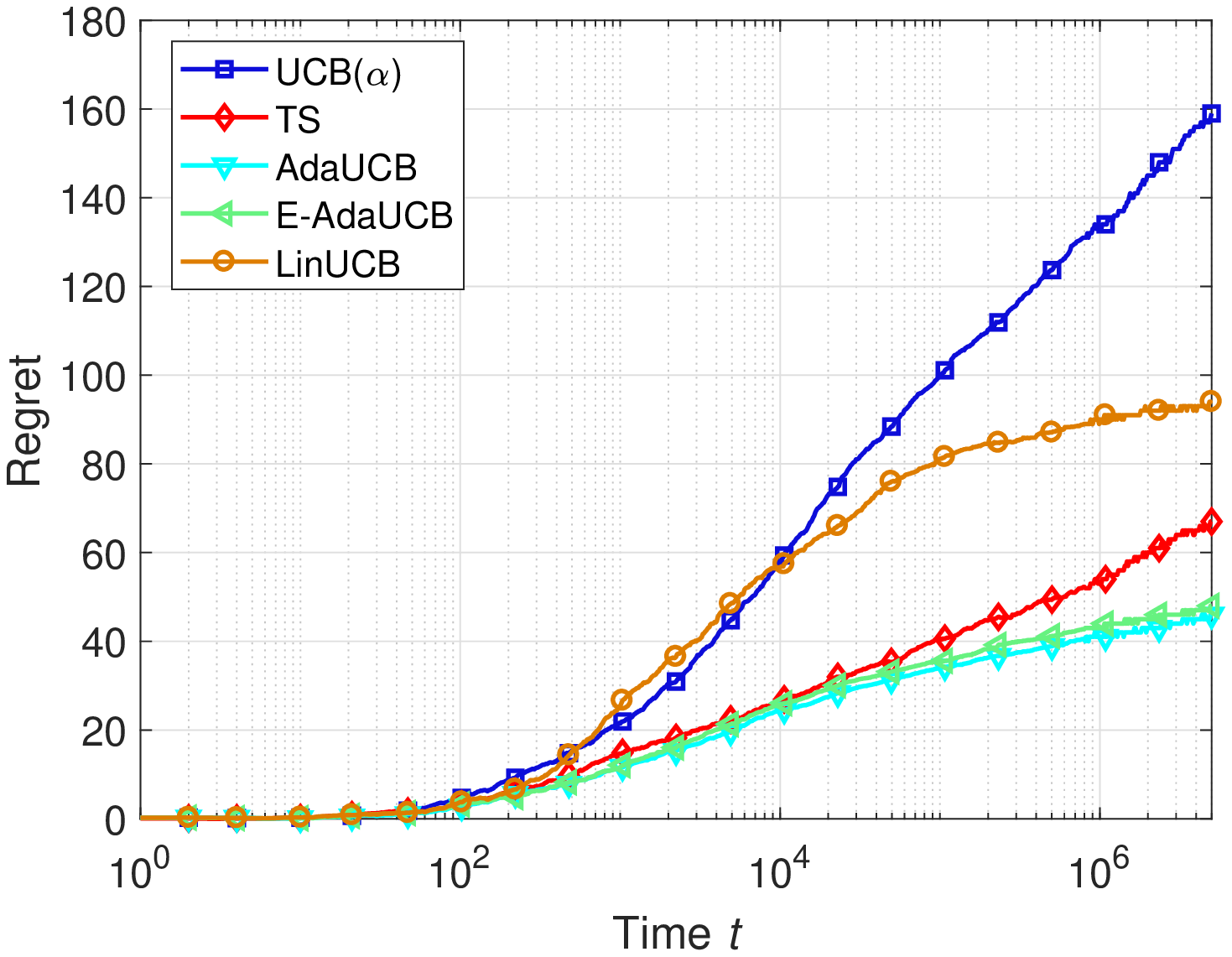}
					\label{fig:linUCB_1p2}}
				\vspace{-.25cm}
				\caption{Regret under beta distributed load with LinUCB algorithms of various constant $\alpha$. (AdaUCB: $l^{(-)} = l^{(-)}_{0.05},  l^{(+)} = l^{(+)}_{0.05}$).}
				\label{fig:LinUCB}
			\end{center}
		\end{minipage}
	\end{center}
\end{figure*}


\end{document}